\newcommand{\ie}{{\em i.e.}}
\newtheorem{lemma}{Lemma}
\newtheorem{corollary}{Corollary}
\newtheorem{theorem}{Theorem}
\newtheorem{proposition}{Proposition}
\begin{document}
%
\title{Analyzing the Weighted Nuclear Norm Minimization and Nuclear Norm Minimization based on Group Sparse Representation}

\author{Zhiyuan~Zha,~\IEEEmembership{Student Member,~IEEE}, Xin~Yuan,~\IEEEmembership{Senior Member,~IEEE}, Bei~Li, Xinggan~Zhang, Xin~Liu,~\IEEEmembership{Member,~IEEE}, Lan~Tang,~\IEEEmembership{Member,~IEEE}, Ying-Chang~Liang,~\IEEEmembership{Fellow,~IEEE}
\thanks{Z. Zha, B. Li, X. Zhang and L. Tan are with the Department of Electronic Science and Engineering,
Nanjing University, Nanjing 210023, China. E-mail: zhazhiyuan.mmd@gmail.com, njulibei@163.com, zhxg@nju.edu.cn, tanglan@nju.edu.cn.}
\thanks{X. Yuan is with Nokia Bell Labs, 600 Mountain Avenue, Murray Hill, NJ, 07974, USA. E-mail: xyuan@bell-labs.com.}
\thanks{L. Xin is with the Center for Machine Vision and Signal Analysis, University of Oulu, 90014, Finland. E-mail: linuxsino@gmail.com.}
\thanks{Y. C. Liang is with the University of Electronic Science and Technology of China, Chengdu 611731, China. E-mail: Liangyc@ieee.org.}
}

\maketitle

\begin{abstract}
Rank minimization methods have attracted considerable interest in various areas, such as computer vision and machine learning. The most representative work is nuclear norm minimization (NNM), which can recover the matrix rank exactly under some restricted and theoretical guarantee conditions. However, for many real applications, NNM is not able to approximate the matrix rank accurately, since it often tends to over-shrink the rank components. To rectify the weakness of NNM, recent advances have shown that weighted nuclear norm minimization (WNNM) can achieve a better matrix rank approximation than NNM, which heuristically set the weight being inverse to the singular values. However, it still lacks a sound mathematical explanation on why WNNM is more feasible than NNM. In this paper, we propose a scheme to analyze WNNM and NNM from the perspective of the group sparse representation. Specifically, we design an adaptive dictionary to bridge the gap between the group sparse representation and the rank minimization models. Based on this scheme, we provide a mathematical derivation to explain why WNNM is more feasible than NNM. Moreover, due to the heuristical set of the weight, WNNM sometimes pops out error in the operation of SVD, and thus we present an adaptive weight setting scheme to avoid this error. We then employ the proposed scheme on two low-level vision tasks including image denoising and image inpainting. Experimental results demonstrate that WNNM is more feasible than NNM and the proposed scheme outperforms many current state-of-the-art methods.
\end{abstract}

\begin{IEEEkeywords}
Rank minimization, nuclear norm minimization, weighted nuclear norm minimization, group sparse representation, image restoration.
\end{IEEEkeywords}

\IEEEpeerreviewmaketitle

\section{Introduction}

Due to the fact that the data from many practical cases have low rank property, low rank matrix approximation (LRMA), which aims to recover the underlying low rank structure from its degraded/corrupted samples, has a wide range of applications in the area of computer vision and machine learning \cite{1,2,3,4,5,6,7,8,9,10,11,12,13,14,15,16,17,18,19,20,21,22,23,24,25,26,27,33}. For instance, the Netflix customer data matrix is regarded as low rank because the customers' choices are mostly affected by a few common factors \cite{26}. The video clip is captured by a static camera satisfies the "low rank + sparse" structure so that the background modeling can be conducted by the LRMA \cite{16,22}. As the matrix formed by nonlocal similar patches in a natural image is of low rank, a flurry of image completion problems based on low rank models have been proposed, such as image alignment \cite{25}, video denoising \cite{23}, shadow removal \cite{27} and reconstruction of occluded/corrupted face images \cite{5,17,24}.

Generally speaking, methods of LRMA can be classified into two categories: the low rank matrix factorization (LRMF) \cite{1,2,3,5,6,7} and the rank minimization methods \cite{9,10,11,12,15,16}. Given an input matrix $\textbf{\emph{Y}}$, the goal of LRMF is to factorize it into the product of two low rank matrices that can be used to reconstruct the low rank matrix $\textbf{\emph{X}}$ with certain fidelity. Various LRMF-based methods have been proposed, such as the classical SVD under $\ell_2$-norm \cite{3}, robust LRMF methods under $\ell_1$-norm \cite{2,5} and other probabilistic methods \cite{6,7}.

Another parallel research is the rank minimization methods, with the nuclear norm minimization (NNM) \cite{9,10} being the representative one. The nuclear norm of a matrix $\textbf{\emph{X}}$, denoted by $||\textbf{\emph{X}}||_*$, is the sum of its singular values, \ie, $||\textbf{\emph{X}}||_*=\sum\nolimits_i{\boldsymbol\sigma_i}$, where ${\boldsymbol\sigma_i}$ is the $i$-th singular value of the matrix $\textbf{\emph{X}}$.  The goal of NNM is to recover the underlying low rank matrix $\textbf{\emph{X}}$ from its degraded observation matrix $\textbf{\emph{Y}}$, by minimizing $||\textbf{\emph{X}}||_*$. In recent years,  a series of applications based on NNM have been proposed, such as video denoising \cite{23}, background extraction \cite{5,25,27} and subspace clustering \cite{1}. However, the nuclear norm is usually adopted as a convex surrogate of the matrix rank. Although enjoying the theoretical guarantee, the singular value thresholding (SVT) model \cite{10} for NNM tends to over-shrink the rank components, as it treats the different rank components equally, and thus it cannot estimate the matrix rank accurately enough. To improve the performance of NNM, numerous methods have been proposed \cite{11,13,14,15,21,22,28}. For instance, inspired by the success of $\ell_p$ (0$<p<$1) sparse optimization \cite{100,101,102,103}, Schatten $p$-norm is proposed \cite{21,28}, which is defined as the $\ell_p$-norm (0$<p<$1) of the singular values. Compared with the nuclear norm, Schatten $p$-norm not only achieves a more accurate recovery result, but also requires only a weaker restricted isometry \cite{28}. The truncated nuclear norm regularization (TNNR) \cite{13} and the partial sum minimization (PSM) \cite{14} keep the largest $r$ singular values unchanged and only minimize the smallest $N-r$ ones, where $N$ is the number of the singular values and $r$ is the rank of the matrix. Inspired by the singular values have clear physical meanings, Gu $\emph{et al}.$ \cite{15} proposed the weighted nuclear norm minimization (WNNM) model. Recently, Xie $\emph{et al}.$ \cite{22} proposed an improved WNNM model, namely, weighted schatten $p$-norm minimization (WSNM) for low rank matrix approximation.

According to the above analysis, to the best of our knowledge, the most well-known one is the WNNM model. However, it is still lack of a sound mathematical explanation why WNNM is more feasible than NNM.  Bearing the above concern in mind, in this paper, we propose a scheme to analyze  WNNM and NNM from the point of the group sparse representation (GSR). To be concrete, an adaptive dictionary learning method is designed to bridge the gap between the GSR and the rank minimization models. Based on this adaptive dictionary, we prove that NNM and WNNM are equivalent to the  $\ell_1$-norm minimization based on GSR and the weighted $\ell_1$-norm minimization based on GSR, respectively. Following this, based on this scheme, a mathematical derivation is introduced to explain why WNNM is more feasible than NNM. In addition, because of the heuristical set of the weight in WNNM, it sometimes pops out error in the operation of SVD, and therefore an adaptive weight setting scheme is presented to avoid this error. We apply the proposed scheme to solve two low-level vision tasks, \ie, image denoising and image inpainting. Experimental results demonstrate that WNNM is more feasible than NNM and the proposed scheme outperforms many current state-of-the-art methods in both the objective and the perceptual qualities.

The rest of this paper is organized as follows. Section~\ref{sec:2} introduces  the related works including the weighted $\ell_1$-norm minimization for sparse representation, nuclear norm minimization, weighted nuclear norm minimization and group sparse representation. Section~\ref{sec:3} presents a scheme to analyze WNNM and NNM from the perspective of group sparse representation and proves that why WNNM is more feasible than NNM. Section~\ref{sec:4} introduces WNNM model for two low-level vision tasks, \ie, image denoising and image inpainting. Section~\ref{sec:5} reports the experimental results. Finally, section~\ref{sec:6} concludes this paper.

\section {Related Works}
\label{sec:2}
In this paper, we provide a mathematical explanation why WNNM is more feasible than NNM. To this end, we will introduce some related works of the weighted $\ell_1$-norm minimization for sparse representation, NNM, WNNM and group sparse representation models in this section. We firstly introduce the weighted $\ell_1$-norm minimization for sparse representation.
\subsection {Weighted $\ell_1$-Norm Minimization for Sparse Representation}
\label{sec:2.1}

Sparse representation model has been successfully used in various applications, such as compressive sensing \cite{29}, face recognition \cite{30} and image restoration \cite{31}. Mathematically, it can be represented by solving the following minimization problem,
\begin{equation}
\min_{\textbf{\emph{x}}\in{\mathbb R}^n} ||\textbf{\emph{x}}||_0, \ \ \ \ \ \ \ s.t., \ \ \ \ \ \textbf{\emph{y}}= \boldsymbol\phi  \textbf{\emph{x}}
\label{eq:1}
\end{equation} 
where $\textbf{\emph{y}}$ is an $m \times 1$ vector and $\boldsymbol\phi$ is an $m\times n$ redundant matrix with $m\leq n$. $||\cdot||$ represents the $\ell_0$-norm, counting the number of non-zero entries of $\textbf{\emph{x}}$.

However, since $\ell_0$-norm minimization problem is a difficult combinatorial optimization,  solving Eq.~\eqref{eq:1} is NP-hard. Therefore,  $\ell_0$-norm minimization is often relaxed to the convex $\ell_1$-norm minimization problem. Specifically, by selecting an appropriate regularization parameter $\lambda$, Eq.~\eqref{eq:1} can be rewritten as the following unconstrained optimization problem,
\begin{equation}
\hat{\textbf{\emph{x}}} = \arg\min_{\textbf{\emph{x}}}\frac{1}{2}||\textbf{\emph{y}}-\boldsymbol\phi \textbf{\emph{x}}||_2^2 +\lambda||\textbf{\emph{x}}||_1
\label{eq:2}
\end{equation} 

However, in some practical problems, such as  image inverse problems \cite{102,32,41}, $\ell_1$-norm minimization is quite hard to achieve a sparsity solution accurately. This raises the question of whether we can improve the sparsity of $\ell_1$-norm minimization. In other words, we wish that $\ell_1$-norm can alternative to $\ell_0$-norm and discover a better solution. For this reason, we introduce a well-known norm minimization method, \ie, weighted $\ell_1$-norm minimization \cite{32}, and instead of Eq.~\eqref{eq:2}, we have the following minimization problem,
\begin{equation}
\hat{\textbf{\emph{x}}} = \arg\min_{\textbf{\emph{x}}}\frac{1}{2}||\textbf{\emph{y}}-\boldsymbol\phi \textbf{\emph{x}}||_2^2 + \lambda||\textbf{\emph{w}}\textbf{\emph{x}}||_1
\label{eq:3}
\end{equation} 
where $\textbf{\emph{w}}$ is a weight assigned to $\textbf{\emph{x}}$ and it can enhance the representation capability of $\textbf{\emph{x}}$. In general, each value of the weight $\textbf{\emph{w}}$ is inverse proportion to each value of $\textbf{\emph{x}}$ \cite{32}. Also, we have the following conclusion.

\begin{proposition}
\label{proposition:1}
The weighted $\ell_1$-norm minimization can enhance the sparsity performance in comparison with traditional $\ell_1$-norm minimization, \ie,
\begin{equation}
\min_{\textbf{\emph{x}}\in{\mathbb R}^n} ||{\textbf{\emph{w}}}\textbf{\emph{x}}||_1 \succ \min_{\textbf{\emph{x}}\in{\mathbb R}^n} ||\textbf{\emph{x}}||_1
\label{eq:4}
\end{equation} 
where ${ v_1}\succ { v_2}$ denotes that the entry ${ v_1}$ has  much more sparsity encouraging than the entry ${ v_2}$.

\end{proposition}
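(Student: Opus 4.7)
The plan is to argue that, with a suitably chosen weight vector $\textbf{\emph{w}}$ whose entries are inversely proportional to $|x_i|$, the weighted cost $||\textbf{\emph{w}}\textbf{\emph{x}}||_1 = \sum_i w_i|x_i|$ becomes a much tighter surrogate for the true sparsity measure $||\textbf{\emph{x}}||_0$ than the plain $\ell_1$-norm is. Since the underlying model \eqref{eq:1} actually wants to minimise $||\textbf{\emph{x}}||_0$ and \eqref{eq:2} only uses $||\textbf{\emph{x}}||_1$ as a tractable convex surrogate, any penalty that more faithfully tracks the $\ell_0$ count will naturally drive more entries of the minimiser to zero.

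First I would invoke the oracle-weight construction of Cand\`es--Wakin--Boyd \cite{32}: with the stabilised choice $w_i = 1/(|x_i|+\epsilon)$ one has
\begin{equation}
||\textbf{\emph{w}}\textbf{\emph{x}}||_1 \;=\; \sum_i \frac{|x_i|}{|x_i|+\epsilon} \;\longrightarrow\; ||\textbf{\emph{x}}||_0 \quad \text{as } \epsilon \to 0,
\end{equation}
so the weighted $\ell_1$-norm coincides with the $\ell_0$-count in the limit. By contrast, $||\textbf{\emph{x}}||_1$ only satisfies the loose sandwich $||\textbf{\emph{x}}||_0\,\min_{x_i\neq 0}|x_i| \le ||\textbf{\emph{x}}||_1 \le ||\textbf{\emph{x}}||_0\,||\textbf{\emph{x}}||_\infty$, which is arbitrarily slack whenever the non-zero magnitudes vary in scale. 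This identification alone is the central structural reason why weighted $\ell_1$ is a strictly better surrogate for $\ell_0$.

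Next I would contrast the shrinkage behaviour of \eqref{eq:2} and \eqref{eq:3}. Both problems admit closed-form soft-thresholding solutions, but \eqref{eq:2} applies a uniform threshold $\lambda$, while \eqref{eq:3} applies entry-dependent thresholds $\lambda w_i$. Taking $w_i$ large where $|x_i|$ is small places a heavier threshold exactly where the coefficient is negligible, forcing those entries to zero, whereas taking $w_i$ small where $|x_i|$ is large protects the dominant components from being over-shrunk. This asymmetric shrinkage simultaneously improves sparsity and fidelity, which uniform soft-thresholding provably cannot achieve at the same time.

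The main obstacle is that the relation $\succ$ in \eqref{eq:4} is qualitative rather than a strict pointwise inequality: for a fixed $\textbf{\emph{x}}$, the raw value of $||\textbf{\emph{w}}\textbf{\emph{x}}||_1$ can be larger or smaller than $||\textbf{\emph{x}}||_1$ depending on the scale of $\textbf{\emph{w}}$. Consequently the ``proof'' must begin by pinning down what $\succ$ means; I would interpret it as \emph{the minimiser of the weighted problem has, in general, fewer non-zero entries than that of the unweighted one under the same fidelity constraint}, and then support that reading with the $\ell_0$-limit identity above, the asymmetric soft-thresholding comparison, and, for a formal guarantee, the restricted-isometry recovery results for reweighted $\ell_1$ in \cite{32}.
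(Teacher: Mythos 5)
Your proposal is correct to the same (heuristic) standard as the paper's own argument, but it reaches the conclusion by a different route. The paper proves Proposition~\ref{proposition:1} by introducing the log penalty $\log(\textbf{\emph{x}}+\epsilon)$ as a sparsity surrogate, observing (Fig.~\ref{fig:1}) that it tracks the $\ell_0$-count more closely than $\|\textbf{\emph{x}}\|_1$ does, and then showing via a first-order Taylor expansion that each iteration of minimizing this log penalty is exactly a weighted $\ell_1$ problem with $w^{(t)}=1/(|x^{(t)}|+\epsilon)$; the sparsity-encouraging property of the log surrogate is thereby transferred to the reweighted $\ell_1$ scheme. You instead evaluate the weighted functional ``on the diagonal,'' using the identity $\sum_i |x_i|/(|x_i|+\epsilon)\to\|\textbf{\emph{x}}\|_0$ as $\epsilon\to 0$, and supplement this with a comparison of uniform versus entry-dependent soft-thresholding. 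Both arguments are two faces of the same observation from Cand\`es--Wakin--Boyd \cite{32}: the paper's version makes the iterative (majorization-minimization) structure explicit, which matches how the weights are actually used in Eq.~\eqref{eq:8} and later in the WNNM algorithm, whereas your limit identity is a static statement that holds only when the weights are computed at the same point at which the functional is evaluated --- for a fixed weight vector, $\|\textbf{\emph{w}}\textbf{\emph{x}}\|_1$ is still linear in the magnitudes and is not itself close to $\|\textbf{\emph{x}}\|_0$. Your added thresholding comparison and your explicit acknowledgement that $\succ$ is qualitative (and needs to be pinned down as ``the minimizer has fewer nonzeros under the same fidelity'') are both points the paper glosses over, and they strengthen the exposition; conversely, if you want your argument to justify the iterative algorithm the paper actually runs, you should close the loop by noting that the fixed point of the reweighting iteration is precisely where your diagonal identity applies.
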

\begin{proof}
 We consider the log penalty function ${\rm log}(\textbf{\emph{x}}+\epsilon)$ as the regularization term and we have,
\begin{equation}
\hat{\textbf{\emph{x}}} = \arg\min_{\textbf{\emph{x}}}\frac{1}{2}||\textbf{\emph{y}}-\boldsymbol\phi \textbf{\emph{x}}||_2^2 +   \lambda{\rm log}(\textbf{\emph{x}}+\epsilon),
\label{eq:5}
\end{equation} 
where $\epsilon$ denotes a small constant. Note that this function ${\rm log}(\textbf{\emph{x}}+\epsilon)$ approximates the sum of the logarithm of $\textbf{\emph{x}}$, and thus it is smooth yet non-convex. Fig.~\ref{fig:1} shows the non-convex log penalty function ${\rm log}(\textbf{\emph{x}}+\epsilon)$ and the $\ell_1$-norm in the scalar case. One can clearly observe that the log penalty function ${\rm log}(\textbf{\emph{x}}+\epsilon)$ is more accurate to approximate canonical $\ell_0$-norm (Eq.~\eqref{eq:1} ) than $\ell_1$-norm.

\begin{figure}[!htbp]
\vspace{-3mm}
		\centering
		{\includegraphics[width=.5\textwidth]{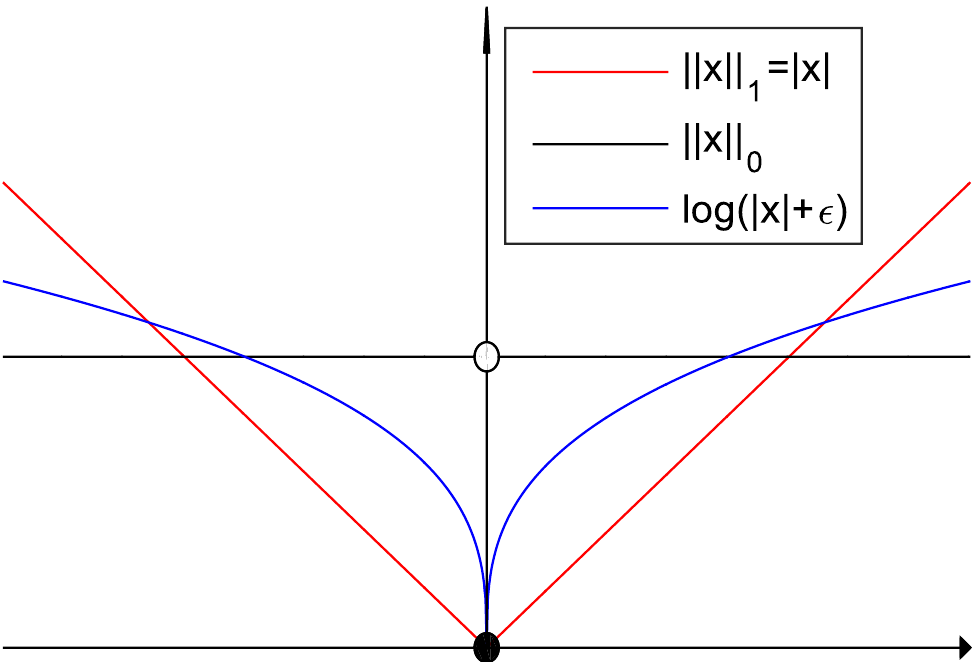}}
\vspace{-3mm}
	\caption{Comparison of log penalty function ${\rm log}(\textbf{\emph{x}}+\epsilon)$, $\ell_1$-norm: $||\textbf{\emph{x}}||_1$ and $\ell_0$-norm: $||\textbf{\emph{x}}||_0$ in the scalar case.}
	\label{fig:1}
	\vspace{-4mm}
\end{figure}
Although ${\rm log}(\textbf{\emph{x}}+\epsilon)$ is non-convex, we can solve it efficiently by a local minimization method. Specifically, Let $\textbf{\emph{R}}(\textbf{\emph{x}}) = {\rm log}(\textbf{\emph{x}}+\epsilon)$, which can be approximated by using the first-order Taylor expansion, \ie,
\begin{equation}
\textbf{\emph{R}}(\textbf{\emph{x}}) = \textbf{\emph{R}}(\textbf{\emph{x}}^{(t)}) + \langle \nabla\textbf{\emph{R}}(\textbf{\emph{x}}^{(t)}), \textbf{\emph{x}}- \textbf{\emph{x}}^{(t)}\rangle
\label{eq:6}
\end{equation} 
where $\textbf{\emph{x}}^{(t)}$ is the solution obtained in the $t$-th iteration and $\nabla\textbf{\emph{R}}(\textbf{\emph{x}}^{(t)}) = ({1}/({|\textbf{\emph{x}}^{(t)}|+\epsilon})$.

Then, we ignore the constants in Eq.~\eqref{eq:6}, and Eq.~\eqref{eq:5} can be solved by the following equation,
\begin{equation}
\hat{\textbf{\emph{x}}}^{(t+1)} = \arg\min_{\textbf{\emph{x}}}\frac{1}{2}||\textbf{\emph{y}}-\boldsymbol\phi \textbf{\emph{x}}||_2^2 +   \lambda\frac{|\textbf{\emph{x}}|}{|\textbf{\emph{x}}^{(t)}|+\epsilon},
\label{eq:7}
\end{equation} 

Let $\textbf{\emph{w}}^{(t)} = ({1}/{|\textbf{\emph{x}}^{(t)}|+\epsilon}$), Eq.~\eqref{eq:7} can be rewritten as
\begin{equation}
\hat{\textbf{\emph{x}}}^{(t+1)} = \arg\min_{\textbf{\emph{x}}}\frac{1}{2}||\textbf{\emph{y}}-\boldsymbol\phi \textbf{\emph{x}}||_2^2 + \lambda    |\textbf{\emph{w}}^{(t)}\textbf{\emph{x}}|_1,
\label{eq:8}
\end{equation} 

Obviously, the weight $\textbf{\emph{w}}$ is inverse proportion to $\textbf{\emph{x}}$, and we omit the subscript without confusion, we have proven that the weighted $\ell_1$-norm minimization can enhance the sparsity performance in comparison with traditional $\ell_1$-norm minimization. More details about the weighted $\ell_1$-norm minimization for sparse representation, please see \cite{32}.
\end{proof}

\subsection {Nuclear Norm Minimization}
\label{sec:2.2}

According to \cite{9}, the nuclear norm is the tightest convex relaxation of the original rank minimization problem. Given a data matrix $\textbf{\emph{Y}}\in\Re^{m\times k}$, NNM aims to find a matrix $\textbf{\emph{X}}\in\Re^{m\times k}$ of rank $r$,  which satisfies the following objective function,
\begin{equation}
\mathcal{D}_\lambda(\textbf{\emph{Y}}) =\arg\min_{\textbf{\emph{X}}}\frac{1}{2}||\textbf{\emph{Y}}-\textbf{\emph{X}}||_F^2 +\lambda||\textbf{\emph{X}}||_*
\label{eq:9}
\end{equation} 
where $||\textbf{\emph{X}}||_*=\sum\nolimits_i{\boldsymbol\sigma_i}$, and ${\boldsymbol\sigma_i}$ is the $i$-th singular value of the matrix $\textbf{\emph{X}}$.  $\lambda$ is a positive constant. Cand{\`e}s $\emph{et al}.$ \cite{33} demonstrated that the low rank matrix can be perfectly recovered from the degraded/corrupted data matrix with high probability by solving an NNM problem. Cai $\emph{et al}.$ \cite{10} proved that NNM problem can be solved by a soft-thresholding operator efficiently, namely, the solution of Eq.~\eqref{eq:9} which can be solved by
\begin{equation}
\mathcal{D}_\lambda(\textbf{\emph{Y}})=\textbf{\emph{U}}\mathcal{D}_{\lambda}({\boldsymbol\Sigma}){\textbf{\emph{V}}}^T
\label{eq:10}
\end{equation}
where $\textbf{\emph{Y}}=\textbf{\emph{U}}{\boldsymbol\Sigma}{\textbf{\emph{V}}}^T$ is the SVD of $\textbf{\emph{Y}}$ and $\mathcal{D}_\lambda({\boldsymbol\Sigma})$ is the soft-thresholding operator function on diagonal matrix ${\boldsymbol\Sigma}$ with parameter $\lambda$. For each diagonal element ${\boldsymbol\Sigma}_{ii}$ in ${\boldsymbol\Sigma}$, there is $\mathcal{D}_\lambda{({\boldsymbol\Sigma})}_{ii}={\rm soft}({\boldsymbol\Sigma}_{{ii}},\lambda)={\rm
max}({\boldsymbol\Sigma}_{{ii}}-\lambda,0)$. Also, they proved the following theorem.

\begin{theorem}
\label{theorem:1}
For each $\lambda\geq 0$ and $\textbf{\emph{Y}}$, the singular value shrinkage operator Eq.~\eqref{eq:10} \emph{obeys} Eq.~\eqref{eq:9}.
\end{theorem}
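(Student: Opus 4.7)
The plan is to establish Theorem~\ref{theorem:1} via a standard convex-analysis argument: since the objective in Eq.~\eqref{eq:9} is strictly convex (the Frobenius term is strictly convex and the nuclear norm is convex), it has a unique minimizer, and I only need to verify that $\hat{\textbf{\emph{X}}} := \mathcal{D}_\lambda(\textbf{\emph{Y}})$ satisfies the first-order optimality condition $\textbf{0} \in \hat{\textbf{\emph{X}}}-\textbf{\emph{Y}}+\lambda\,\partial \|\hat{\textbf{\emph{X}}}\|_*$, i.e., $\textbf{\emph{Y}}-\hat{\textbf{\emph{X}}} \in \lambda\,\partial\|\hat{\textbf{\emph{X}}}\|_*$.

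The key ingredient I would invoke is the known characterization of the subdifferential of the nuclear norm: if $\textbf{\emph{X}}$ has (thin) SVD $\textbf{\emph{X}}=\textbf{\emph{U}}_0\boldsymbol\Sigma_0 \textbf{\emph{V}}_0^T$ with $\boldsymbol\Sigma_0\succ 0$, then
\begin{equation*}
\partial\|\textbf{\emph{X}}\|_* = \bigl\{\textbf{\emph{U}}_0\textbf{\emph{V}}_0^T + \textbf{\emph{W}}\;:\; \textbf{\emph{U}}_0^T\textbf{\emph{W}}=0,\ \textbf{\emph{W}}\textbf{\emph{V}}_0=0,\ \|\textbf{\emph{W}}\|_2\le 1\bigr\}.
\end{equation*}
With this in hand, the proof becomes a direct verification.

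The main steps I would carry out are: (i) split the SVD $\textbf{\emph{Y}}=\textbf{\emph{U}}\boldsymbol\Sigma\textbf{\emph{V}}^T$ into a block with singular values strictly greater than $\lambda$ and a block with singular values $\le \lambda$, writing $\textbf{\emph{Y}}=\textbf{\emph{U}}_0\boldsymbol\Sigma_0\textbf{\emph{V}}_0^T + \textbf{\emph{U}}_1\boldsymbol\Sigma_1\textbf{\emph{V}}_1^T$; (ii) by the definition of soft-thresholding, obtain $\hat{\textbf{\emph{X}}}=\textbf{\emph{U}}_0(\boldsymbol\Sigma_0-\lambda\textbf{\emph{I}})\textbf{\emph{V}}_0^T$; (iii) compute $\textbf{\emph{Y}}-\hat{\textbf{\emph{X}}}=\lambda\textbf{\emph{U}}_0\textbf{\emph{V}}_0^T+\textbf{\emph{U}}_1\boldsymbol\Sigma_1\textbf{\emph{V}}_1^T=\lambda\bigl(\textbf{\emph{U}}_0\textbf{\emph{V}}_0^T+\textbf{\emph{W}}\bigr)$ with $\textbf{\emph{W}}:=\lambda^{-1}\textbf{\emph{U}}_1\boldsymbol\Sigma_1\textbf{\emph{V}}_1^T$; (iv) verify the three admissibility conditions for $\textbf{\emph{W}}$: orthogonality of the singular vectors in the two blocks gives $\textbf{\emph{U}}_0^T\textbf{\emph{W}}=0$ and $\textbf{\emph{W}}\textbf{\emph{V}}_0=0$, while the spectral norm bound $\|\textbf{\emph{W}}\|_2=\lambda^{-1}\|\boldsymbol\Sigma_1\|_2\le 1$ follows from the construction of $\boldsymbol\Sigma_1$; (v) conclude that $\textbf{\emph{Y}}-\hat{\textbf{\emph{X}}}\in\lambda\,\partial\|\hat{\textbf{\emph{X}}}\|_*$, so the KKT condition holds and $\hat{\textbf{\emph{X}}}$ is the unique minimizer.

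The only non-routine piece is the subdifferential characterization of the nuclear norm, which I would cite from standard matrix analysis (e.g., Watson's result on the subdifferential of unitarily invariant norms) rather than re-derive. Boundary cases (singular values exactly equal to $\lambda$, or a $\textbf{\emph{Y}}$ whose singular values are all $\le\lambda$ so that $\hat{\textbf{\emph{X}}}=\textbf{0}$) fit the same framework once one interprets $\boldsymbol\Sigma_0$ and $\boldsymbol\Sigma_1$ consistently; I would note these as trivial specializations rather than separate cases.
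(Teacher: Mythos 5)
Your proposal is correct and is essentially the same argument as the paper's: the paper's own proof of Theorem~\ref{theorem:1} is just a citation to \cite{10}, and the standard Cai--Cand\`es--Shen verification you reproduce (strict convexity, the subdifferential characterization of the nuclear norm, the block split of the SVD at threshold $\lambda$, and the check that $\textbf{\emph{Y}}-\hat{\textbf{\emph{X}}}\in\lambda\,\partial\|\hat{\textbf{\emph{X}}}\|_*$) is exactly that cited proof, mirrored step for step in the paper's own proof of the weighted analogue, Theorem~\ref{theorem:3} (Eqs.~\eqref{eq:13}--\eqref{eq:17}). No gaps worth flagging beyond the trivial $\lambda=0$ degeneracy, which you already subsume under your boundary-case remark.
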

\begin{proof}
See \cite{10}.
\end{proof}

\subsection{Weighted Nuclear Norm Minimization}
\label{sec:2.3}

Though a good theoretical guarantee by the singular value thresholding (SVT) model \cite{10}, NNM tends to over-shrink the rank components, and thus it cannot estimate the matrix rank accurately enough. To enforce the low rank regularization efficiently, Gu $\emph{et al}.$ \cite{15} proposed the weighted nuclear norm minimization (WNNM) model. Specifically, the weighted nuclear norm $||\textbf{\emph{X}}||_{{\textbf{\emph{w}}},*}$  is used to regularize $\textbf{\emph{X}}$, and Eq.~\eqref{eq:9} can be rewritten as
\begin{equation}
\mathcal{D}_{{\textbf{\emph{w}}}} (\textbf{\emph{Y}})=\arg\min_\textbf{\emph{X}}\frac{1}{2}||\textbf{\emph{Y}}-\textbf{\emph{X}}||_F^2 + ||\textbf{\emph{X}}||_{{\textbf{\emph{w}}},*}
\label{eq:11}
\end{equation} 
where $||\textbf{\emph{X}}||_{{\textbf{\emph{w}}},*}=\sum\nolimits_i{{{{w}}}_i}{\boldsymbol\sigma_i}$,  ${\textbf{\emph{w}}}=[{{{w}}}_1, {{{w}}}_2,...,{{{w}}}_i]$ and ${{{w}}}_i>0$ is a non-negative weight assigned to ${\boldsymbol\sigma_i}$. Moreover, the following theorem is given.

\begin{theorem}
\label{theorem:2}
If the singular values $\boldsymbol\sigma_1\geq ...\geq\boldsymbol\sigma_{n_0}$ and the weights satisfy $0\leq{{ {w}}}_1\leq...\leq{{{w}}}_{n_0}$, $n_0={\rm min}{(m, k)}$,  WNNM problem in Eq.~\eqref{eq:11} has a globally optimal solution,
\begin{equation}
\mathcal{D}_{\emph{\textbf{{w}}}} (\textbf{\emph{Y}})=\textbf{\emph{U}}\mathcal{D}_{\emph{\textbf{{w}}}} ({\boldsymbol\Sigma}){\textbf{\emph{V}}}^T
\label{eq:12}
\end{equation} 
where $\textbf{\emph{Y}}=\textbf{\emph{U}}{\boldsymbol\Sigma}{\textbf{\emph{V}}}^T$ is the SVD of $\textbf{\emph{Y}}$ and $\mathcal{D}_{{\textbf{\emph{w}}}} ({\boldsymbol\Sigma})$ is the generalized soft-thresholding operator with the weighted vector ${\textbf{\emph{w}}}$, i.e., $\mathcal{D}_{{\textbf{\emph{w}}}} {({\boldsymbol\Sigma})}_{ii}={\rm soft}({\boldsymbol\Sigma}_{{ii}}, {w}_i)={\rm
max}({\boldsymbol\Sigma}_{{ii}}-{w}_i,0)$.
\end{theorem}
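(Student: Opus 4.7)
The plan is to reduce the matrix optimization in Eq.~\eqref{eq:11} to a separable scalar problem in the singular values, and then to show that the resulting soft-thresholding solution is simultaneously a lower-bound-attainer and a feasible SVD. Even though the weighted nuclear norm is non-convex when the weights are non-decreasing, the interaction between the Frobenius data-fidelity term and von Neumann's trace inequality lets us collapse the problem onto a tractable scalar form, which is the key structural observation.

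First, I would expand $f(\textbf{\emph{X}}) = \tfrac{1}{2}\|\textbf{\emph{Y}} - \textbf{\emph{X}}\|_F^2 + \|\textbf{\emph{X}}\|_{\textbf{\emph{w}},*}$ and invoke von Neumann's trace inequality, $\text{tr}(\textbf{\emph{A}}^T \textbf{\emph{B}}) \leq \sum_i \boldsymbol\sigma_i(\textbf{\emph{A}})\boldsymbol\sigma_i(\textbf{\emph{B}})$, with equality if and only if $\textbf{\emph{A}}$ and $\textbf{\emph{B}}$ share the same left and right singular vectors in matching order. Because $\|\textbf{\emph{X}}\|_F^2$ and $\|\textbf{\emph{X}}\|_{\textbf{\emph{w}},*}$ are unitarily invariant, this produces the lower bound
\begin{equation*}
f(\textbf{\emph{X}}) \geq \tfrac{1}{2}\sum_i \boldsymbol\sigma_i(\textbf{\emph{Y}})^2 + \sum_i \Big[\tfrac{1}{2}\boldsymbol\sigma_i(\textbf{\emph{X}})^2 - \boldsymbol\sigma_i(\textbf{\emph{Y}})\boldsymbol\sigma_i(\textbf{\emph{X}}) + {w}_i\,\boldsymbol\sigma_i(\textbf{\emph{X}})\Big],
\end{equation*}
with equality precisely when $\textbf{\emph{X}}$ admits an SVD sharing the orthogonal factors $\textbf{\emph{U}}, \textbf{\emph{V}}$ of $\textbf{\emph{Y}}$. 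The right-hand side separates across $i$, so each coordinate becomes the standard scalar problem $\min_{\sigma \geq 0} \tfrac{1}{2}(\boldsymbol\sigma_i(\textbf{\emph{Y}}) - \sigma)^2 + {w}_i \sigma$ whose closed-form solution is $\hat\sigma_i = \max(\boldsymbol\sigma_i(\textbf{\emph{Y}}) - {w}_i, 0)$.

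The main obstacle is feasibility: assembling the scalar optima into $\textbf{\emph{X}}^\star = \textbf{\emph{U}}\,\text{diag}(\hat\sigma_1,\ldots,\hat\sigma_{n_0})\,\textbf{\emph{V}}^T$ yields a legitimate SVD only if $\{\hat\sigma_i\}$ is non-increasing, since the equality case of von Neumann's inequality requires the singular values of $\textbf{\emph{X}}^\star$ to be paired with those of $\textbf{\emph{Y}}$ in matching order. This is exactly where the hypothesis $0 \leq {w}_1 \leq \ldots \leq {w}_{n_0}$ becomes essential: combined with $\boldsymbol\sigma_1 \geq \ldots \geq \boldsymbol\sigma_{n_0}$, it gives $\boldsymbol\sigma_i(\textbf{\emph{Y}}) - {w}_i \geq \boldsymbol\sigma_{i+1}(\textbf{\emph{Y}}) - {w}_{i+1}$, so the thresholded values inherit the correct ordering and $\textbf{\emph{X}}^\star$ is admissible. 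Since $\textbf{\emph{X}}^\star$ both attains the separable lower bound and is feasible, it is a global minimizer of Eq.~\eqref{eq:11}, which establishes Eq.~\eqref{eq:12}. Without the monotonicity assumption on the weights, the ordering could fail and the decoupled soft-thresholding step would no longer reach the bound, so this condition is structural rather than cosmetic.
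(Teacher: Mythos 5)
Your proposal is correct and is essentially the argument the paper defers to: the paper's ``proof'' of Theorem~\ref{theorem:2} is only the citation ``See \cite{15}'', and the proof in \cite{15,16} is exactly your route --- von Neumann's trace inequality to decouple the problem into scalar subproblems $\min_{\sigma\geq 0}\tfrac{1}{2}(\boldsymbol\sigma_i(\textbf{\emph{Y}})-\sigma)^2+w_i\sigma$, followed by the observation that the non-descending weights preserve the ordering of the thresholded singular values so that the reassembled matrix is a feasible SVD attaining the lower bound. Your remark that the weight-ordering hypothesis is what makes the relaxed (unordered) separable minimizer feasible is precisely the role it plays in the cited proof.
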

\begin{proof}
See \cite{15}.
\end{proof}

To analyze WNNM is more feasible than NNM below, we have the following theorem.
\begin{theorem}
\label{theorem:3}
     For $0\leq{{{ w}}}_1\leq... \leq{{{ w}}}_{n_0}$ and $\textbf{\emph{Y}}$, $n_0={\rm min}{(m, k)}$, the singular value shrinkage operator Eq.~\eqref{eq:12} satisfies Eq.~\eqref{eq:11}.
\end{theorem}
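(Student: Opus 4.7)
The plan is to reduce the matrix minimization in Eq.~\eqref{eq:11} to a separable scalar problem on the singular values, by means of Von Neumann's trace inequality, and then exploit the weight ordering hypothesis to verify feasibility. Concretely, I would expand $||\textbf{\emph{Y}}-\textbf{\emph{X}}||_F^2 = ||\textbf{\emph{Y}}||_F^2 + ||\textbf{\emph{X}}||_F^2 - 2\,{\rm tr}(\textbf{\emph{Y}}^T\textbf{\emph{X}})$ and note that the weighted nuclear norm $||\textbf{\emph{X}}||_{\textbf{\emph{w}},*}$ depends only on the singular values of $\textbf{\emph{X}}$. Hence for any prescribed singular-value vector of $\textbf{\emph{X}}$, minimizing the objective amounts to maximizing ${\rm tr}(\textbf{\emph{Y}}^T\textbf{\emph{X}})$, and by Von Neumann's trace inequality this maximum equals $\sum_i \boldsymbol\sigma_i(\textbf{\emph{Y}})\,\boldsymbol\sigma_i(\textbf{\emph{X}})$, achieved precisely when $\textbf{\emph{X}}$ shares the left and right singular vectors of $\textbf{\emph{Y}} = \textbf{\emph{U}}{\boldsymbol\Sigma}{\textbf{\emph{V}}}^T$ in matching order. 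This justifies restricting attention to $\textbf{\emph{X}} = \textbf{\emph{U}}\,{\rm diag}(\boldsymbol\delta)\,\textbf{\emph{V}}^T$ with $\boldsymbol\delta_i \geq 0$.

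With this reduction, the problem becomes
\begin{equation*}
\min_{\boldsymbol\delta_1 \geq \boldsymbol\delta_2 \geq \cdots \geq \boldsymbol\delta_{n_0}\geq 0}\ \frac{1}{2}\sum_{i=1}^{n_0}\bigl(\boldsymbol\sigma_i(\textbf{\emph{Y}}) - \boldsymbol\delta_i\bigr)^2 \;+\; \sum_{i=1}^{n_0} w_i\,\boldsymbol\delta_i,
\end{equation*}
which, if the monotonicity constraint is momentarily dropped, is fully separable. Each scalar subproblem $\min_{\boldsymbol\delta_i \geq 0} \tfrac{1}{2}(\boldsymbol\sigma_i(\textbf{\emph{Y}}) - \boldsymbol\delta_i)^2 + w_i\boldsymbol\delta_i$ has the standard soft-thresholding solution $\boldsymbol\delta_i^\star = \max(\boldsymbol\sigma_i(\textbf{\emph{Y}}) - w_i,\,0)$, which is exactly the $i$-th diagonal entry produced by the operator $\mathcal{D}_{\textbf{\emph{w}}}({\boldsymbol\Sigma})$ in Eq.~\eqref{eq:12}.

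The last step is to confirm that this unconstrained candidate is in fact feasible for the original constrained problem, and this is where the ordering hypothesis $0\leq w_1\leq\cdots\leq w_{n_0}$ is essential. Since $\boldsymbol\sigma_i(\textbf{\emph{Y}})$ is non-increasing in $i$ and $w_i$ is non-decreasing, the differences $\boldsymbol\sigma_i(\textbf{\emph{Y}}) - w_i$ are non-increasing, and the $\max(\cdot,0)$ map preserves monotonicity, so $\boldsymbol\delta_1^\star \geq \cdots \geq \boldsymbol\delta_{n_0}^\star \geq 0$. Consequently $\boldsymbol\delta^\star$ is a valid singular-value vector and the operator in Eq.~\eqref{eq:12} attains the global minimum of Eq.~\eqref{eq:11}. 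The main obstacle I anticipate is the rigorous invocation of the equality case in Von Neumann's trace inequality, i.e.\ arguing that any minimizer can be taken to share the SVD frame of $\textbf{\emph{Y}}$; without the weight ordering this whole scheme would break down because the soft-thresholded singular values could fail to be monotone, forcing an isotonic-regression style correction instead of the clean closed form.
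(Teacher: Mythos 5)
Your proof is correct, but it follows a genuinely different route from the one the paper gives. The paper proves Theorem~3 by verifying the first-order optimality condition $\textbf{0}\in \hat{\textbf{\emph{X}}}-\textbf{\emph{Y}}+\partial\|\hat{\textbf{\emph{X}}}\|_{\textbf{\emph{w}},*}$: it splits the SVD of $\textbf{\emph{Y}}$ into the components whose singular values exceed the corresponding weights and a residual $\textbf{\emph{Z}}=\textbf{\emph{U}}_1\boldsymbol\Sigma_1\textbf{\emph{V}}_1^T$, and checks that $\textbf{\emph{Y}}-\hat{\textbf{\emph{X}}}=\textbf{\emph{U}}_0\textbf{\emph{W}}_r\textbf{\emph{V}}_0^T+\textbf{\emph{Z}}$ lies in the subdifferential characterized in Eq.~\eqref{eq:14} (this is the Cai--Cand\`es--Shen SVT argument adapted to the weighted norm, and the paper must appeal to an external reference to justify using subgradient calculus for the non-convex weighted nuclear norm). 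You instead use Von Neumann's trace inequality to reduce the matrix problem to a separable scalar problem on the singular values, solve each scalar subproblem by soft-thresholding, and then use the ordering hypothesis $w_1\leq\cdots\leq w_{n_0}$ to check that the resulting thresholded values are non-increasing and hence feasible as a singular-value vector; this is essentially the argument behind the paper's Theorem~\ref{theorem:2} and Lemma~\ref{lemma:2}, both of which the paper defers to Gu \emph{et al.} Your approach buys something real: because the weighted nuclear norm with non-descending weights is non-convex, a stationarity check alone does not certify a global minimum, whereas your chain of lower bounds (trace inequality, then dropping the monotonicity constraint) attains equality at the candidate and therefore establishes global optimality directly. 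The one technicality you correctly flag --- the equality case of Von Neumann's inequality when $\textbf{\emph{Y}}$ has repeated singular values --- is standard and does not affect the conclusion, since a common SVD frame can always be chosen.
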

\begin{proof}
For fixed weight $\textbf{\emph{W}}$, $h_0(\textbf{\emph{X}})=\frac{1}{2} ||\textbf{\emph{Y}}-\textbf{\emph{X}}||_F^2 + ||\textbf{\emph{X}}||_{\emph{\textbf{{w}}},*}$, $\hat{\textbf{\emph{X}}}$ minimizes $h_0$ if and only if it satisfies the following optimal condition,
\begin{equation}
\textbf{0}\in \hat{\textbf{\emph{X}}} - \textbf{\emph{Y}} +  \partial ||\hat{\textbf{\emph{X}}}||_{\emph{\textbf{{w}}},*}
\label{eq:13}
\end{equation} 
where $\partial ||\hat{\textbf{\emph{X}}}||_{\emph{\textbf{{w}}},*}$ is the set of subgradients of the weighted nuclear norm. Let matrix $\textbf{\emph{X}} \in\Re^{m \times k}$ be an arbitrary matrix and its SVD be $\textbf{\emph{U}}{\boldsymbol\Sigma}{\textbf{\emph{V}}}^T$. It is known from \cite{33,34} that the subgradient of $||{\textbf{\emph{X}}}||_{\emph{\textbf{{w}}},*}$ can be derived as
\begin{equation}
\begin{aligned}
&\partial ||\textbf{\emph{X}}||_{\emph{\textbf{{w}}},*}=\{\textbf{\emph{U}}{\textbf{\emph{W}}}_r{\textbf{\emph{V}}}^T+\textbf{\emph{Z}}:
\textbf{\emph{Z}}\in\Re^{m \times k}, {\textbf{\emph{U}}}^T\textbf{\emph{Z}} = \textbf{0},\\
 &\textbf{\emph{Z}}\textbf{\emph{V}}=\textbf{0}, \boldsymbol\sigma_j (\textbf{\emph{Z}})\leq { w}_{r+j}, j=1, ..., n_0-r \}
 \label{eq:14}
\end{aligned}
\end{equation}
where $r$ is the rank of $\textbf{\emph{X}}$ and ${\textbf{\emph{W}}}_r$ is the diagonal matrix composed of the first $r$ rows and $r$ columns of the diagonal matrix diag ({\textbf{\emph{W}}}). Note that the sub-gradient conclusion in Eqs.~\eqref{eq:13} and ~\eqref{eq:14} have been proved for convex problem \cite{10}. Nonetheless, they are feasible for non-convex problem \cite{35}.

In order to show that $\hat{\textbf{\emph{X}}}$ obeys Eq.~\eqref{eq:14}, the SVD of $\textbf{\emph{Y}}$ can be rewritten as
\begin{equation}
\textbf{\emph{Y}}=\textbf{\emph{U}}_0\boldsymbol\Sigma_0\textbf{\emph{V}}_0^T+ \textbf{\emph{U}}_1\boldsymbol\Sigma_1\textbf{\emph{V}}_1^T
\label{eq:15}
\end{equation} 
where $\textbf{\emph{U}}_0, \textbf{\emph{V}}_0 ({\rm resp}. \ \textbf{\emph{U}}_1, \textbf{\emph{V}}_1)$ are the singular vectors associated with singular values greater than ${ w}_j$ (resp. smaller than or equal to ${ w}_j$). With these notations, we have
\begin{equation}
\hat{\textbf{\emph{X}}}= \textbf{\emph{U}}_0 (\boldsymbol\Sigma_0-{\textbf{\emph{W}}}_r)\textbf{\emph{V}}_0^T
\label{eq:16}
\end{equation} 

Therefore,
\begin{equation}
\textbf{\emph{Y}}-\hat{\textbf{\emph{X}}}=  \textbf{\emph{U}}_0{\textbf{\emph{W}}}_r\textbf{\emph{V}}_0^T+ \textbf{\emph{Z}}
\label{eq:17}
\end{equation} 
where $\textbf{\emph{Z}}=\textbf{\emph{U}}_1\boldsymbol\Sigma_1 \textbf{\emph{V}}_1^T$, by definition, $\textbf{\emph{U}}_0^T \textbf{\emph{Z}}=\textbf{0},  \textbf{\emph{Z}}\textbf{\emph{V}}_0=\textbf{0}$ and since the diagonal elements of $\boldsymbol\Sigma_1$ are smaller than ${ w}_{j+r}$, it is easy to verify that $\boldsymbol\sigma_j(\textbf{\emph{Z}}) \leq { w}_{r+j}, j=1, 2, ..., n_0-r $. Thus, $\textbf{\emph{Y}}-\hat{\textbf{\emph{X}}}\in \partial||\hat{\textbf{\emph{X}}}||_{\emph{\textbf{{w}}},*}$, which concludes the proof.
\end{proof}

Note that the relationship between Theorem~\ref{theorem:3} is the necessary and sufficient condition of Theorem~\ref{theorem:2}.

\subsection{Group Sparse Representation}
\label{sec:2.4}
Traditional patch-based sparse representation model assumes that each patch of an image can be precisely encoded as a linear combination of basic elements \cite{36,31}. These elements are called as atoms and they compose a dictionary. Patch-based sparse representation model has been successfully used in various image processing and computer vision tasks \cite{37,38}. However, patch-based sparse representation model usually suffers from some limits, such as dictionary learning with great computational complexity and neglecting the correlations between sparsely-coded patches \cite{40}.

Instead of using single patch as the basic unit in sparse representation, recent studies have shown that group sparse representation (GSR) models have demonstrated great potentials in various image processing tasks \cite{39,40,41,60}. The GSR is a powerful mechanism to integrate local sparsity and nonlocal similarity of image patches. To be concrete, an image $\textbf{\emph{x}}$ with size $N$ is divided into $n$ overlapped patches $\textbf{\emph{x}}_i$ of size $\sqrt{m} \times \sqrt{m}$, $i=1, 2, ..., n$. Then, for each exemplar patch  $\textbf{\emph{x}}_i$, its most similar $k$ patches are selected from an $L \times L$ sized searching window to form a set $\textbf{\emph{S}}_i$. After this, all the patches in $\textbf{\emph{S}}_i$ are stacked into a data matrix $\textbf{\emph{X}}_i \in{\mathbb R}^{m\times k}$, which contains each element of $\textbf{\emph{S}}_i$ as its column, \ie, ${\textbf{\emph{X}}}_i=\{{\textbf{\emph{x}}}_{i,1}, {\textbf{\emph{x}}}_{i,2},...,{\textbf{\emph{x}}}_{i,k}\}$. This matrix $\textbf{\emph{X}}_i$ consisting of patches with similar structures is thereby called a group,  where $\{{\textbf{\emph{x}}}_{i,j}\}_{j=1}^k$ denotes the $j$-th patch in the $i$-th group.  Similar to patch-based sparse representation \cite{36,31}, given a dictionary ${\textbf{\emph{D}}}_{i}$, each group ${\textbf{\emph{X}}}_i$ can be sparsely represented by solving the following $\ell_0$-norm minimization problem,
\begin{equation}
\hat{\textbf{\emph{A}}}_i=\arg\min_{{\textbf{\emph{A}}}_i} \sum\nolimits_{i=1}^n \left(\frac{1}{2}||{\textbf{\emph{X}}}_i-{\textbf{\emph{D}}}_i{\textbf{\emph{A}}}_i||_F^2+\lambda||{\textbf{\emph{A}}}_i||_0\right),
\label{eq:18}
\end{equation} 
where ${\textbf{\emph{A}}}_i$ represents the group sparse coefficient of each group ${\textbf{\emph{X}}}_i$. $||\bullet||_F^2$ denotes the Frobenius norm, and $||\bullet||_0$ signifies the $\ell_0$-norm, \ie, counting the nonzero entries of each column in ${\textbf{\emph{A}}}_i$.

\section{Why WNNM is more feasible than NNM}
\label{sec:3}
Recent advances have shown that WNNM can obtain a better matrix rank approximation than NNM \cite{15,16,18,19,20,22}. However, to the best of our knowledge, it still lacks a sound mathematical explanation on why WNNM is more feasible than NNM. In this section, we propose a scheme to analyze WNNM and NNM from the perspective of group sparse representation (GSR). Specifically, we design an adaptive dictionary to bridge the gap between the GSR and the rank minimization models. Based on this adaptive dictionary, we prove that  NNM and WNNM is equivalent to the $\ell_1$-norm minimization based on GSR and the weighted $\ell_1$-norm minimization based on GSR, respectively. Following this, based on above conclusion and Proposition~\ref{proposition:1}, we provide a mathematical explanation why WNNM is more feasible than NNM. We firstly introduce the adaptive dictionary learning.

\subsection{Adaptive Dictionary Learning}
\label{sec:3.1}
 In this subsection, an adaptive dictionary learning method is now designed, that is, for each group ${\textbf{\emph{X}}}_i$, its adaptive dictionary can be learned from its observation ${\textbf{\emph{Y}}}_i\in\mathbb{R}^{m \times k}$.
Specifically, we apply the SVD to ${\textbf{\emph{Y}}}_i$,
\begin{equation}
{\textbf{\emph{Y}}}_i= {\textbf{\emph{U}}}_i{\boldsymbol\Delta}_i{\textbf{\emph{V}}}_i^T=\sum\nolimits_{j=1}^{n_1} \delta_{i,j}{\textbf{\emph{u}}}_{i,j}{\textbf{\emph{v}}}_{i,j}^T,
\label{eq:19}
\end{equation}
where ${\boldsymbol\Delta}_i={\rm diag}(\delta_{i,1}, \delta_{i,2},..., \delta_{i,{n_1}})$ is a diagonal matrix, ${n_0}={\rm min}(m, k)$,  and  ${\textbf{\emph{u}}}_{i,j}, {\textbf{\emph{v}}}_{i,j}$ are the columns of ${\textbf{\emph{U}}}_i$ and ${\textbf{\emph{V}}}_i$, respectively.

Following this, we define each dictionary atom $\textbf{\emph{d}}_{i,j}$ of the adaptive dictionary $\textbf{\emph{D}}_i$ for each group $\textbf{\emph{Y}}_i$, namely,
\begin{equation}
\textbf{\emph{d}}_{i,j}={\textbf{\emph{u}}}_{i,j}{\textbf{\emph{v}}}_{i,j}^T, \ \ \ j=1,2,...,{n_0}.
\label{eq:20}
\end{equation}

Till now, we have learned an adaptive dictionary, \ie,
\begin{equation}
\textbf{\emph{D}}_i=[\textbf{\emph{d}}_{i,1},\textbf{\emph{d}}_{i,2},...,\textbf{\emph{d}}_{i,{n_0}}]
\label{eq:21}
\end{equation}

It can be seen that the proposed dictionary learning method is efficient since it only requires one SVD operation per group.

\subsection{WNNM is more feasible than NNM}
\label{sec:3.2}
In this subsection, we provide a mathematical explanation why WNNM is more feasible than NNM. To this end, based on above designed dictionary, we prove that NNM and WNNM is equivalent to the $\ell_1$-norm minimization based on GSR and the weighted $\ell_1$-norm minimization based on GSR, respectively. Following this, based on this conclusion and Proposition~\ref{proposition:1}, we prove that WNNM is more feasible than NNM. Specifically, for the NNM model, Eq.~\eqref{eq:9} can be rewritten as
\begin{equation}
\begin{aligned}
& \mathcal{D}_{\lambda} (\textbf{\emph{Y}}) = \arg\min_{\textbf{\emph{X}}} \left(\frac{1}{2}||{\textbf{\emph{Y}}} -{\textbf{\emph{X}}}||_F^2+ \lambda||\textbf{\emph{X}}||_{*}\right),\\
&=\arg\min\limits_{\textbf{\emph{X}}_i}\sum\nolimits_{i=1}^n\left(\frac{1}{2}||{\textbf{\emph{Y}}}_i-{\textbf{\emph{X}}}_i||_F^2 +
 \lambda||\textbf{\emph{X}}_i||_{*}\right),
\label{eq:21.1}
\end{aligned}
\end{equation} 

For the WNNM model, the weighted norm $||\textbf{\emph{X}}||_{\emph{\textbf{{w}}},*}$  is used to regularize $\textbf{\emph{X}}$ and we have,
\begin{equation}
\begin{aligned}
& \mathcal{D}_{{\textbf{\emph{w}}}} (\textbf{\emph{Y}}) = \arg\min_{\textbf{\emph{X}}} \left(\frac{1}{2}||{\textbf{\emph{Y}}} -{\textbf{\emph{X}}}||_F^2+||\textbf{\emph{X}}||_{ \emph{\textbf{w}},*}\right),\\
&=\arg\min\limits_{\textbf{\emph{X}}_i}\sum\nolimits_{i=1}^n\left(\frac{1}{2}||{\textbf{\emph{Y}}}_i-{\textbf{\emph{X}}}_i||_F^2 +
 ||\textbf{\emph{X}}_i||_{ {\emph{\textbf{w}}_i},*}\right),
\label{eq:22}
\end{aligned}
\end{equation} 

In order to prove that NNM and  WNNM are equivalent to the $\ell_1$-norm minimization based on GSR and the weighted $\ell_1$-norm minimization based on GSR, respectively, we firstly give the following lemmas.
\begin{lemma}
\label{lemma:1}
The minimization problem
\begin{equation}
\hat{\textbf{\emph{x}}}=\arg\min_{\textbf{\emph{x}}} \left(\frac{1}{2}||{\textbf{\emph{x}}}-{\textbf{\emph{a}}}||_2^2+\tau||{\textbf{\emph{x}}}||_1\right)
\label{eq:23}
\end{equation} 
has a closed-form solution, \ie,
\begin{equation}
\hat{\textbf{\emph{x}}}={\rm soft}({\textbf{\emph{a}}},\tau)= {\rm sgn}({\textbf{\emph{a}}})\odot{\rm max}({\rm abs}({\textbf{\emph{a}}})-\tau,0),
\label{eq:24}
\end{equation} 
where $\odot$ denotes the element-wise (Hadamard) product.
\end{lemma}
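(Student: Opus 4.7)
The plan is to exploit the fact that both terms in the objective are separable, reducing the vector problem to a collection of independent scalar problems. Concretely, since $\tfrac{1}{2}\|\textbf{\emph{x}}-\textbf{\emph{a}}\|_2^2 = \tfrac{1}{2}\sum_i (x_i - a_i)^2$ and $\|\textbf{\emph{x}}\|_1 = \sum_i |x_i|$, the objective decomposes as a sum over coordinates with no coupling between entries. Hence it suffices to solve, for each index $i$, the one-dimensional problem $\hat{x}_i = \arg\min_{x_i} \tfrac{1}{2}(x_i - a_i)^2 + \tau |x_i|$, and then stack the solutions into the vector $\hat{\textbf{\emph{x}}}$.

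For the scalar subproblem I would use a subdifferential argument, since $|x|$ is convex but nonsmooth at the origin. The optimality condition is $0 \in x_i - a_i + \tau \, \partial |x_i|$, where $\partial |x_i| = \{\mathrm{sgn}(x_i)\}$ when $x_i \neq 0$ and $\partial |x_i| = [-1,1]$ when $x_i = 0$. A three-way case split then determines $\hat{x}_i$ explicitly:
\begin{itemize}
\item if $a_i > \tau$, then $\hat{x}_i > 0$ forces $\hat{x}_i = a_i - \tau$;
\item if $a_i < -\tau$, then $\hat{x}_i < 0$ forces $\hat{x}_i = a_i + \tau$;
\item if $|a_i| \leq \tau$, then $\hat{x}_i = 0$ is consistent with $a_i/\tau \in [-1,1] \subseteq \partial|0|$.
\end{itemize}
These three cases merge into the compact form $\hat{x}_i = \mathrm{sgn}(a_i)\cdot\max(|a_i|-\tau,\,0)$, which is precisely the scalar soft-thresholding operator.

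Finally, I would assemble the coordinate-wise solutions to recover the vector identity $\hat{\textbf{\emph{x}}} = \mathrm{sgn}(\textbf{\emph{a}}) \odot \max(\mathrm{abs}(\textbf{\emph{a}}) - \tau, 0)$, noting that uniqueness follows from strict convexity of the objective (the quadratic term is strictly convex and the $\ell_1$ term is convex, so the sum is strictly convex). There is no real obstacle here — the argument is entirely standard — but the one place to be a little careful is the boundary case $|a_i| = \tau$, where both $\hat{x}_i = 0$ and the formula $\mathrm{sgn}(a_i)\max(|a_i|-\tau,0) = 0$ must agree with the subdifferential inclusion; verifying this boundary consistency is the only subtle check, and it follows immediately because $a_i/\tau \in \{-1,+1\}$ still lies in $[-1,1]$.
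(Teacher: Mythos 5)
Your proof is correct and complete. The paper itself does not supply an argument for this lemma at all --- its ``proof'' consists of a single citation to an external reference (the TVAL3 user's guide), so there is nothing in the paper to compare your derivation against step by step. What you have written is the standard, self-contained derivation of the soft-thresholding operator: separability of both the quadratic and the $\ell_1$ terms reduces the problem to independent scalar subproblems, the subdifferential optimality condition $0 \in x_i - a_i + \tau\,\partial|x_i|$ handles the nonsmoothness at the origin, and the three-way case split on the sign and magnitude of $a_i$ merges cleanly into $\hat{x}_i = \mathrm{sgn}(a_i)\max(|a_i|-\tau,0)$. Your uniqueness remark via strict convexity of the sum (strictly convex quadratic plus convex $\ell_1$) is also correct and justifies writing $\arg\min$ as a single point rather than a set. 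The only value the paper's approach ``buys'' is brevity by deferring to a reference; your version buys a reader a complete argument in a few lines, which is arguably preferable for a result this elementary. No gaps.
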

\begin{proof}
See \cite{42}.
\end{proof}

\begin{lemma}
\label{lemma:2}
For the following optimization problem
\begin{equation}
\min_{{{\emph{x}}}_i\geq0} \sum\nolimits_{i=1}^n \left(\frac{1}{2}({{\emph{x}}}_i- {{\emph{a}}}_i)^2 + \emph{w}_i \emph{x}_i\right)
\label{eq:25}
\end{equation} 
{If} $ {a}_1 \geq ... \geq {a}_n \geq 0$ {and the weights satisfy} $0\leq{{\emph{w}}}_1\leq...{{\emph{w}}}_{n}$, {then the global optimum of} Eq.~\eqref{eq:25} is
\begin{equation}
\hat{\emph{x}}_i= {\rm soft}(\emph{a}_i,\emph{w}_i)={\rm {max}} (\emph{a}_i-\emph{w}_i, 0)
\label{eq:25.1}
\end{equation} 
\end{lemma}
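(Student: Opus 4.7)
The plan is to exploit the separability of the objective. The function $\sum_{i=1}^n \bigl(\tfrac{1}{2}(x_i-a_i)^2 + w_i x_i\bigr)$ decomposes into a sum of terms each depending on a single variable $x_i$, and the feasible set $\{x_i \ge 0\}$ is a product set, so the minimization splits into $n$ independent one-dimensional problems of the form $\min_{x_i \ge 0}\phi_i(x_i)$ with $\phi_i(x_i) = \tfrac{1}{2}(x_i - a_i)^2 + w_i x_i$. This reduces the lemma to a scalar computation that can be handled with standard single-variable calculus.

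For each $i$, I would complete the square to rewrite
\begin{equation}
\phi_i(x_i) = \tfrac{1}{2}\bigl(x_i - (a_i - w_i)\bigr)^2 + c_i,
\end{equation}
where $c_i$ is a constant independent of $x_i$. The unconstrained minimizer is therefore $x_i^\star = a_i - w_i$. A short case split on the sign of $a_i - w_i$ then finishes the subproblem: if $a_i - w_i \ge 0$, the unconstrained minimizer lies in the feasible set and $\hat{x}_i = a_i - w_i$; if $a_i - w_i < 0$, then $\phi_i'(x_i) = x_i - a_i + w_i > 0$ on $[0,\infty)$, so $\phi_i$ is strictly increasing on the feasible region and the minimum is attained at the boundary $\hat{x}_i = 0$. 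Combining the two cases yields $\hat{x}_i = \max(a_i - w_i, 0) = \mathrm{soft}(a_i, w_i)$, which matches Lemma~\ref{lemma:1} applied coordinatewise with the one-sided constraint.

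Finally, I would observe that the ordering hypotheses $a_1 \ge \cdots \ge a_n \ge 0$ and $0 \le w_1 \le \cdots \le w_n$ are not actually used to establish the soft-thresholding formula itself; the pointwise minimization above is valid without them. Their role is rather to guarantee that the resulting optimizer inherits the monotone ordering required downstream: since $a_i \ge a_{i+1}$ and $w_i \le w_{i+1}$ imply $a_i - w_i \ge a_{i+1} - w_{i+1}$, and $\max(\cdot, 0)$ preserves order, we obtain $\hat{x}_1 \ge \hat{x}_2 \ge \cdots \ge \hat{x}_n \ge 0$, which is exactly the structure needed when the $\hat{x}_i$ are later identified with singular values in Theorem~\ref{theorem:2}. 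I do not anticipate any real obstacle: separability trivializes the problem and the per-index subproblem is the classical one-sided soft-thresholding calculation; the only subtlety worth highlighting in the write-up is the clean separation between the minimization step and the order-preservation observation.
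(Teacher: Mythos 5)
Your proof is correct. The paper itself gives no argument for this lemma --- its ``proof'' is just the citation ``See \cite{16}'' --- so your write-up is strictly more informative than what appears in the text. The separability reduction, the completion of the square giving the unconstrained minimizer $a_i - w_i$, and the case split on its sign are all valid, and together they yield $\hat{x}_i = \max(a_i - w_i, 0)$ exactly as claimed. Your closing observation is also on point and worth keeping: the ordering hypotheses $a_1 \ge \cdots \ge a_n \ge 0$ and $0 \le w_1 \le \cdots \le w_n$ play no role in the coordinatewise minimization itself; they matter only because they force $a_i - w_i \ge a_{i+1} - w_{i+1}$, hence $\hat{x}_1 \ge \cdots \ge \hat{x}_n \ge 0$, which is what lets the $\hat{x}_i$ be reinterpreted as an ordered set of singular values when this lemma is invoked in Theorem~\ref{theorem:2}, Corollary~\ref{corollary:1}, and Theorem~\ref{theorem:5}. (In the cited source the corresponding result is stated with that ordering constraint built into the feasible set, and the monotonicity of the weights is precisely what makes the unconstrained separable solution feasible for the constrained problem; your remark captures that mechanism.) The only minor caveat is that the lemma as stated here omits the explicit ordering constraint on the $x_i$, so your proof matches the statement as written; no gap.
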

\begin{proof}
See \cite{16}.
\end{proof}

Now, recalling the adaptive dictionary defined in Eq.~\eqref{eq:20}, and let us come back to the GSR model in Eq.~\eqref{eq:18}.  The $\ell_1$-norm minimization based on GSR can be represented as
\begin{equation}
\hat{\textbf{\emph{A}}}_i=\arg\min_{{\textbf{\emph{A}}}_i} \sum\nolimits_{i=1}^n\left(\frac{1}{2}||{\textbf{\emph{Y}}}_i-{\textbf{\emph{D}}}_i{\textbf{\emph{A}}}_i||_F^2+ \lambda||{\textbf{\emph{A}}}_i||_1\right),
\label{eq:26}
\end{equation} 
According to the above design of the adaptive dictionary ${\textbf{\emph{D}}}_i$ in Eq.~\eqref{eq:20}, we have the following Lemma.
\begin{lemma}
\label{lemma:3}
\begin{equation}
||{\textbf{\emph{Y}}}_i-{\textbf{\emph{X}}}_i||_F^2=||{\textbf{\emph{B}}}_i-{\textbf{\emph{A}}}_i||_F^2,
\label{eq:27}
\end{equation} 
where ${\textbf{\emph{Y}}}_i={\textbf{\emph{D}}}_i{\textbf{\emph{B}}}_i$ and ${\textbf{\emph{X}}}_i={\textbf{\emph{D}}}_i{\textbf{\emph{A}}}_i$.
\end{lemma}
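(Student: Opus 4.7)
The plan is to exploit the orthonormality of the atoms $\textbf{\emph{d}}_{i,j}=\textbf{\emph{u}}_{i,j}\textbf{\emph{v}}_{i,j}^T$ under the Frobenius inner product, which is inherited from the orthonormality of the singular vectors $\{\textbf{\emph{u}}_{i,j}\}$, $\{\textbf{\emph{v}}_{i,j}\}$ produced by the SVD in Eq.~\eqref{eq:19}. Once this orthonormality is in hand, the identity reduces to a Parseval-type statement for an orthonormal expansion of $\textbf{\emph{Y}}_i$ and $\textbf{\emph{X}}_i$ in the atom set $\{\textbf{\emph{d}}_{i,j}\}$.

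First, I would unpack the notation. Since each atom $\textbf{\emph{d}}_{i,j}$ is itself an $m\times k$ rank-one matrix, the products $\textbf{\emph{D}}_i\textbf{\emph{B}}_i$ and $\textbf{\emph{D}}_i\textbf{\emph{A}}_i$ must be read as linear combinations of matrix atoms rather than conventional matrix--vector products:
\begin{equation*}
\textbf{\emph{Y}}_i=\sum_{j=1}^{n_0} b_{i,j}\,\textbf{\emph{d}}_{i,j}, \qquad \textbf{\emph{X}}_i=\sum_{j=1}^{n_0} a_{i,j}\,\textbf{\emph{d}}_{i,j},
\end{equation*}
where $b_{i,j}$ and $a_{i,j}$ are the entries of $\textbf{\emph{B}}_i$ and $\textbf{\emph{A}}_i$ respectively, and comparison with Eq.~\eqref{eq:19} identifies $b_{i,j}=\delta_{i,j}$.

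Second, I would verify the orthonormality claim at the heart of the argument: for any $j,l\in\{1,\ldots,n_0\}$,
\begin{equation*}
\langle \textbf{\emph{d}}_{i,j},\,\textbf{\emph{d}}_{i,l}\rangle_F = {\rm tr}\!\left(\textbf{\emph{v}}_{i,j}\textbf{\emph{u}}_{i,j}^T\textbf{\emph{u}}_{i,l}\textbf{\emph{v}}_{i,l}^T\right) = (\textbf{\emph{u}}_{i,j}^T\textbf{\emph{u}}_{i,l})(\textbf{\emph{v}}_{i,l}^T\textbf{\emph{v}}_{i,j}),
\end{equation*}
which equals $1$ when $j=l$ and $0$ otherwise, using only the cyclic property of the trace together with the orthonormality of the columns of $\textbf{\emph{U}}_i$ and $\textbf{\emph{V}}_i$.

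Finally, I would expand
\begin{equation*}
\textbf{\emph{Y}}_i-\textbf{\emph{X}}_i=\sum_{j=1}^{n_0}(b_{i,j}-a_{i,j})\,\textbf{\emph{d}}_{i,j}
\end{equation*}
and square in the Frobenius norm. By the orthonormality just established, the cross terms vanish and each diagonal term contributes a squared coefficient with unit weight, yielding $\|\textbf{\emph{Y}}_i-\textbf{\emph{X}}_i\|_F^2=\sum_{j=1}^{n_0}(b_{i,j}-a_{i,j})^2=\|\textbf{\emph{B}}_i-\textbf{\emph{A}}_i\|_F^2$. There is no real obstacle here; the only subtleties are notational, namely interpreting $\textbf{\emph{D}}_i\textbf{\emph{A}}_i$ as a linear combination of matrix-valued atoms, and keeping the two uses of the symbol $\delta$ (the singular values $\delta_{i,j}$ and the Kronecker symbol appearing implicitly in the orthonormality computation) distinct.
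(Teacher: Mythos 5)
Your proof is correct and rests on exactly the same fact as the paper's: the orthonormality of the columns of $\textbf{\emph{U}}_i$ and $\textbf{\emph{V}}_i$ from the SVD in Eq.~\eqref{eq:19}. The paper carries out the computation at the matrix level, writing $\textbf{\emph{D}}_i(\textbf{\emph{B}}_i-\textbf{\emph{A}}_i)=\textbf{\emph{U}}_i\,{\rm diag}(\textbf{\emph{B}}_i-\textbf{\emph{A}}_i)\,\textbf{\emph{V}}_i^T$ and invoking trace cyclicity together with $\textbf{\emph{U}}_i^T\textbf{\emph{U}}_i=\textbf{\emph{V}}_i^T\textbf{\emph{V}}_i=\textbf{\emph{I}}$, whereas you establish the Frobenius-orthonormality of the rank-one atoms $\textbf{\emph{d}}_{i,j}$ and apply a Parseval identity; these are the same computation in different clothing, and your version has the small merit of making explicit both the key orthonormality relation and the non-standard reading of $\textbf{\emph{D}}_i\textbf{\emph{A}}_i$ as a linear combination of matrix-valued atoms, which the paper leaves implicit.
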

\begin{proof}

The adaptive dictionary ${\textbf{\emph{D}}}_i$ is constructed by Eq.~\eqref{eq:20}. From the unitary property of ${\textbf{\emph{U}}}_i$ and ${\textbf{\emph{V}}}_i$, we have
\begin{equation}
\begin{aligned}
&||{\textbf{\emph{Y}}}_i-{\textbf{\emph{X}}}_i||_F^2=||{\textbf{\emph{D}}}_i({\textbf{\emph{B}}}_i-{\textbf{\emph{A}}}_i)||_F^2
=||{\textbf{\emph{U}}}_i{\rm diag}({\textbf{\emph{B}}}_i-{\textbf{\emph{A}}}_i){\textbf{\emph{V}}}_i||_F^2\\
&= {\rm Tr}({\textbf{\emph{U}}}_i{\rm diag}({\textbf{\emph{B}}}_i-{\textbf{\emph{A}}}_i){\textbf{\emph{V}}}_i
{\textbf{\emph{V}}}_{i}^T{\rm diag}({\textbf{\emph{B}}}_i-{\textbf{\emph{A}}}_i){\textbf{\emph{U}}}_{i}^T)\\
&= {\rm Tr}({\textbf{\emph{U}}}_i{\rm diag}({\textbf{\emph{B}}}_i-{\textbf{\emph{A}}}_i)
{\rm diag}({\textbf{\emph{B}}}_i-{\textbf{\emph{A}}}_i){\textbf{\emph{U}}}_{i}^T)\\
&={\rm Tr}({\rm diag}({\textbf{\emph{B}}}_i-{\textbf{\emph{A}}}_i)
{\textbf{\emph{U}}}_i{\textbf{\emph{U}}}_{i}^T{\rm diag}({\textbf{\emph{B}}}_i-{\textbf{\emph{A}}}_i))\\
&={\rm Tr}({\rm diag}({\textbf{\emph{B}}}_i-{\textbf{\emph{A}}}_i)
{\rm diag}({\textbf{\emph{B}}}_i-{\textbf{\emph{A}}}_i))\\
&=||{\textbf{\emph{B}}}_i-{\textbf{\emph{A}}}_i||_F^2,
\end{aligned}
\label{eq:28}
\end{equation} 
\end{proof}

Based on Lemma~\ref{lemma:1},  Lemma~\ref{lemma:3} and Theorem~\ref{theorem:1}, we have the following conclusion.

\begin{theorem}
\label{theorem:4}
Solving the NNM in Eq.~\eqref{eq:21.1} is equivalent to the $\ell_1$-norm minimization in Eq.~\eqref{eq:26} under the proposed adaptive dictionary.
\end{theorem}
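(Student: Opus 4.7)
The plan is to reduce both sides to the same soft-thresholding update, exploiting the fact that the adaptive dictionary atoms $\textbf{\emph{d}}_{i,j}=\textbf{\emph{u}}_{i,j}\textbf{\emph{v}}_{i,j}^T$ are precisely the rank-one components in the SVD of $\textbf{\emph{Y}}_i$. First I would note that by Eq.~\eqref{eq:19} and the construction of $\textbf{\emph{D}}_i$ in Eq.~\eqref{eq:20}--\eqref{eq:21}, the observation group can be written as $\textbf{\emph{Y}}_i=\textbf{\emph{D}}_i\textbf{\emph{B}}_i$ with the coefficient vector $\textbf{\emph{B}}_i=(\delta_{i,1},\ldots,\delta_{i,n_0})^T$ collecting the singular values of $\textbf{\emph{Y}}_i$. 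Analogously, any candidate $\textbf{\emph{X}}_i$ expressed through the $\ell_1$-GSR model takes the form $\textbf{\emph{X}}_i=\textbf{\emph{D}}_i\textbf{\emph{A}}_i=\sum_j a_{i,j}\textbf{\emph{u}}_{i,j}\textbf{\emph{v}}_{i,j}^T$.

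Second, I would invoke Lemma~\ref{lemma:3} to rewrite the data-fidelity term of Eq.~\eqref{eq:26} as $\|\textbf{\emph{Y}}_i-\textbf{\emph{D}}_i\textbf{\emph{A}}_i\|_F^2=\|\textbf{\emph{B}}_i-\textbf{\emph{A}}_i\|_F^2$, turning the per-group $\ell_1$-GSR subproblem into the separable vector problem
\begin{equation*}
\hat{\textbf{\emph{A}}}_i=\arg\min_{\textbf{\emph{A}}_i}\frac{1}{2}\|\textbf{\emph{B}}_i-\textbf{\emph{A}}_i\|_F^2+\lambda\|\textbf{\emph{A}}_i\|_1.
\end{equation*}
Lemma~\ref{lemma:1} then yields the closed form $\hat{a}_{i,j}=\mathrm{soft}(\delta_{i,j},\lambda)=\max(\delta_{i,j}-\lambda,0)$, where non-negativity of the singular values removes the sign factor.

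Third, I would reassemble $\hat{\textbf{\emph{X}}}_i=\textbf{\emph{D}}_i\hat{\textbf{\emph{A}}}_i=\sum_j \max(\delta_{i,j}-\lambda,0)\,\textbf{\emph{u}}_{i,j}\textbf{\emph{v}}_{i,j}^T=\textbf{\emph{U}}_i\,\mathcal{D}_\lambda(\boldsymbol{\Delta}_i)\,\textbf{\emph{V}}_i^T$. By Theorem~\ref{theorem:1}, this is exactly the closed-form minimizer of the per-group NNM problem $\tfrac{1}{2}\|\textbf{\emph{Y}}_i-\textbf{\emph{X}}_i\|_F^2+\lambda\|\textbf{\emph{X}}_i\|_*$. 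Since both sides of Eq.~\eqref{eq:21.1} and Eq.~\eqref{eq:26} are separable across $i$, matching the per-group minimizers establishes the desired equivalence.

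The main obstacle is not in the soft-thresholding calculation but in justifying that the $\ell_1$-GSR problem with the adaptive dictionary really searches over the same family of matrices that the NNM problem does, and that optimal coefficients stay non-negative. This is resolved by observing that $\{\textbf{\emph{u}}_{i,j}\textbf{\emph{v}}_{i,j}^T\}$ already spans the SVD subspace of $\textbf{\emph{Y}}_i$, so by Theorem~\ref{theorem:1} the NNM optimum lies in $\mathrm{span}(\textbf{\emph{D}}_i)$, and the non-negativity of $\delta_{i,j}$ combined with soft-thresholding keeps all optimal coefficients non-negative; everything else is then a direct application of Lemmas~\ref{lemma:1} and~\ref{lemma:3} together with Theorem~\ref{theorem:1}.
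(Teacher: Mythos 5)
Your proof follows essentially the same route as the paper's: apply Lemma~\ref{lemma:3} to reduce the fidelity term to $\|{\textbf{\emph{B}}}_i-{\textbf{\emph{A}}}_i\|_F^2$, soft-threshold via Lemma~\ref{lemma:1}, and reassemble ${\textbf{\emph{D}}}_i\hat{\textbf{\emph{A}}}_i={\textbf{\emph{U}}}_i\mathcal{D}_{\lambda}(\boldsymbol\Delta_i){\textbf{\emph{V}}}_i^T$ so that Theorem~\ref{theorem:1} identifies it with the SVT solution of the per-group NNM problem. Your added remarks on the non-negativity of the singular values (which removes the sign factor in the soft-thresholding) and on the NNM optimum lying in the span of the dictionary atoms tighten points the paper leaves implicit, but the argument is the same.
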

\begin{proof}
Based on Lemma~\ref{lemma:3}, Eq.~\eqref{eq:26} can be rewritten as
 \begin{equation}
  \begin{aligned}
{\hat{\textbf{\emph{A}}}}_i&=\arg\min\limits_{{\textbf{\emph{A}}}_i}
\left(\frac{1}{2}||{\textbf{\emph{Y}}}_i-{\textbf{\emph{D}}}_i{\textbf{\emph{A}}}_i||_F^2+\lambda||{\textbf{\emph{A}}}_i||_1\right)\\
&=\arg\min\limits_{{\textbf{\emph{A}}}_i}\left(\frac{1}{2}||{\textbf{\emph{B}}}_i-{\textbf{\emph{A}}}_i||_F^2+\lambda||{\textbf{\emph{A}}}_i||_1\right)\\
&=\arg\min\limits_{\boldsymbol\alpha_i}\left(\frac{1}{2}||{\boldsymbol\beta}_i-{\boldsymbol\alpha}_i||_2^2+\lambda||{\boldsymbol\alpha}_i||_1\right),
\end{aligned}
\label{eq:29}
\end{equation} 
where ${\textbf{\emph{X}}}_i={{\textbf{\emph{D}}}_i{\textbf{\emph{A}}}_i}$ and ${\textbf{\emph{Y}}}_i={{\textbf{\emph{D}}}_i{\textbf{\emph{B}}}_i}$. ${{{{\boldsymbol\alpha}}}_i}$ and ${{{{\boldsymbol\beta}}}_i}$ denote the vectorization of the matrix ${{{\textbf{\emph{A}}}}_i}$ and ${{{\textbf{\emph{B}}}}_i}$, respectively.

Thus, based on Lemma~\ref{lemma:1}, we have
\begin{equation}
\begin{aligned}
{{\boldsymbol\alpha}}_i & ={\rm soft}({{\boldsymbol\beta}}_i,\lambda)= {\rm sgn}({{\boldsymbol\beta}}_i)\odot{\rm max}({\rm abs}({{\boldsymbol\beta}}_i)-\lambda,0).
\end{aligned}
\label{eq:30}
\end{equation}
Obviously, according to Eqs.~\eqref{eq:20} and ~\eqref{eq:21}, we have
\begin{equation}
\begin{aligned}
{\textbf{\emph{D}}}_i{\hat{\textbf{\emph{A}}}}_i&= \sum\nolimits_{j=1}^{n_1} {{\rm soft}({\boldsymbol\beta}}_{i,j},\lambda){\textbf{\emph{d}}}_{i,j}\\
&=\sum\nolimits_{j=1}^{n_1} {{\rm soft}({\boldsymbol\beta}}_{i,j},\lambda){\textbf{\emph{u}}}_{i,j}{\textbf{\emph{v}}}_{i,j}^T \\
&={\textbf{\emph{U}}}_i\mathcal{D}_{\lambda}(\boldsymbol\Sigma_i){\textbf{\emph{V}}}_i^T =  \mathcal{D}_{\lambda}({\textbf{\emph{Y}}}_i).
\end{aligned}
\label{eq:31}
\end{equation} 
where ${{\boldsymbol\beta}}_{i,j}$ represents the $j$-th element of the $i$-th group sparse coefficient ${{\boldsymbol\beta}}_{i}$, and $\boldsymbol\Sigma_i$ is the  singular value matrix of the $i$-th group ${\textbf{\emph{Y}}}_i$.

Obviously, based on Theorem~\ref{theorem:1}, we prove that NNM (Eq.~\eqref{eq:21.1}) is equivalent to the $\ell_1$-norm minimization based on GSR model (Eq.~\eqref{eq:26}).
\end{proof}

Similar to Theorem~\ref{theorem:4}, we have the following conclusion.
\begin{corollary}
\label{corollary:1}
Solving the WNNM in Eq.~\eqref{eq:22} is equivalent to the weighted $\ell_1$-norm minimization in Eq.~\eqref{eq:32} under the proposed adaptive dictionary.
\end{corollary}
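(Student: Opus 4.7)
The plan is to mirror the proof of Theorem~\ref{theorem:4} almost line for line, swapping the unweighted soft-thresholding result (Lemma~\ref{lemma:1}) with its weighted counterpart (Lemma~\ref{lemma:2}), and swapping the NNM singular-value thresholding identity (Theorem~\ref{theorem:1}) with the WNNM analogue (Theorem~\ref{theorem:2}). The backbone of the argument is Lemma~\ref{lemma:3}, whose proof depends only on the unitarity of $\textbf{\emph{U}}_i,\textbf{\emph{V}}_i$ and is therefore insensitive to whether the coefficient penalty is weighted or not.

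First I would start from the weighted $\ell_1$-norm GSR objective (Eq.~(32)) and apply Lemma~\ref{lemma:3} to the data-fidelity term, turning $\|\textbf{\emph{Y}}_i-\textbf{\emph{D}}_i\textbf{\emph{A}}_i\|_F^2$ into $\|\textbf{\emph{B}}_i-\textbf{\emph{A}}_i\|_F^2$ exactly as in Eq.~\eqref{eq:29}. After vectorizing $\textbf{\emph{A}}_i,\textbf{\emph{B}}_i$ into $\boldsymbol\alpha_i,\boldsymbol\beta_i$, the per-group problem reduces to a separable quadratic-plus-weighted-$\ell_1$ objective of exactly the form handled by Lemma~\ref{lemma:2}. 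By construction of the adaptive dictionary in Eq.~\eqref{eq:19}–\eqref{eq:21}, the entries $\beta_{i,j}=\delta_{i,j}$ inherit the nonincreasing, nonnegative ordering of the singular values, and the WNNM framework already assumes $0\leq w_{i,1}\leq\cdots\leq w_{i,n_0}$, so both hypotheses of Lemma~\ref{lemma:2} are met.

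Applying Lemma~\ref{lemma:2} coordinatewise then gives $\hat{\alpha}_{i,j}=\max(\beta_{i,j}-w_{i,j},0)$, and reassembling via the adaptive dictionary yields
\begin{equation*}
\textbf{\emph{D}}_i\hat{\textbf{\emph{A}}}_i=\sum\nolimits_{j=1}^{n_0}\max(\beta_{i,j}-w_{i,j},0)\,\textbf{\emph{u}}_{i,j}\textbf{\emph{v}}_{i,j}^T=\textbf{\emph{U}}_i\mathcal{D}_{\textbf{\emph{w}}}(\boldsymbol\Sigma_i)\textbf{\emph{V}}_i^T,
\end{equation*}
which by Theorem~\ref{theorem:2} is precisely $\mathcal{D}_{\textbf{\emph{w}}}(\textbf{\emph{Y}}_i)$, i.e.\ the minimizer of the WNNM problem in Eq.~\eqref{eq:22}. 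This gives the claimed equivalence, with the entire argument collapsing into the same three-line chain that produced Eq.~\eqref{eq:31}.

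The main obstacle I expect is bookkeeping around the ordering and alignment hypotheses rather than any new analytic difficulty: one has to check that the ordering of the $a_i$ in Lemma~\ref{lemma:2} transfers to the ordering on $\beta_{i,j}$ induced by the SVD in Eq.~\eqref{eq:19}, and that the weight vector $\textbf{\emph{w}}_i$ inside the weighted $\ell_1$ term is indexed slot-for-slot with the weight vector inside the weighted nuclear norm under this particular adaptive dictionary. Once that indexing compatibility is made explicit, the closed-form WNNM shrinkage and the coordinatewise weighted soft-thresholding produce the same expression, and the equivalence follows from Theorem~\ref{theorem:2} exactly as Theorem~\ref{theorem:4} followed from Theorem~\ref{theorem:1}.
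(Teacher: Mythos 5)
Your proposal matches the paper's proof of Corollary~\ref{corollary:1} essentially line for line: apply Lemma~\ref{lemma:3} to reduce the fidelity term, vectorize, invoke Lemma~\ref{lemma:2} for the weighted soft-thresholding, and reassemble through the adaptive dictionary to recover $\textbf{\emph{U}}_i\mathcal{D}_{\textbf{\emph{w}}_i}(\boldsymbol\Sigma_i)\textbf{\emph{V}}_i^T=\mathcal{D}_{\textbf{\emph{w}}_i}(\textbf{\emph{Y}}_i)$ (the paper closes by citing Theorem~\ref{theorem:3} rather than Theorem~\ref{theorem:2}, but these are stated as necessary-and-sufficient counterparts, so the difference is immaterial). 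Your explicit check that the SVD ordering of $\beta_{i,j}=\delta_{i,j}$ and the nondecreasing weights satisfy the hypotheses of Lemma~\ref{lemma:2} is a detail the paper leaves implicit, but the approach is the same.
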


\begin{proof}
The weighted $\ell_1$-norm minimization based on GSR can be represented as
\begin{equation}
\hat{\textbf{\emph{A}}}_i=\arg\min_{{\textbf{\emph{A}}}_i} \sum\nolimits_{i=1}^n\left(\frac{1}{2}||{\textbf{\emph{Y}}}_i-{\textbf{\emph{D}}}_i{\textbf{\emph{A}}}_i||_F^2+ ||{\emph{\textbf{{w}}}}_i{\textbf{\emph{A}}}_i||_1\right),
\label{eq:32}
\end{equation} 
where  ${\textbf{\emph{w}}}_i$ is a weight assigned to ${\textbf{\emph{A}}}_i$. The weight ${\textbf{\emph{w}}}_i$ will enhance the representation capability of group sparse coefficient ${\textbf{\emph{A}}}_i$.

For the weighted $\ell_1$-norm minimization in Eq.~\eqref{eq:32} and based on  Lemma~\ref{lemma:3}, we have,
\begin{equation}
\begin{aligned}
{\hat{\textbf{\emph{A}}}}_i&=\arg\min\limits_{{\textbf{\emph{A}}}_i}
\left(\frac{1}{2}||{\textbf{\emph{Y}}}_i-{\textbf{\emph{D}}}_i{\textbf{\emph{A}}}_i||_F^2 + ||{\emph{\textbf{{w}}}}_i{\textbf{\emph{A}}}_i||_1\right)\\
&=\arg\min\limits_{{\textbf{\emph{A}}}_i}\left(\frac{1}{2}||{\textbf{\emph{B}}}_i-{\textbf{\emph{A}}}_i||_F^2 + ||{\emph{\textbf{{w}}}}_i {\textbf{\emph{A}}}_i||_1\right)\\
&=\arg\min\limits_{\boldsymbol\alpha_i}\left(\frac{1}{2}||{\boldsymbol\beta}_i-{\boldsymbol\alpha}_i||_2^2+||w_i{\boldsymbol\alpha}_i||_1\right),
\end{aligned}
\label{eq:33}
\end{equation} 
where ${\textbf{\emph{X}}}_i={{\textbf{\emph{D}}}_i{\textbf{\emph{A}}}_i}$ and ${\textbf{\emph{Y}}}_i={{\textbf{\emph{D}}}_i{\textbf{\emph{B}}}_i}$. ${{{{\boldsymbol\alpha}}}_i}$, ${{{{\boldsymbol\beta}}}_i}$ and ${{w}}_i$ denote the vectorization of the matrix ${{{\textbf{\emph{A}}}}_i}$, ${{{\textbf{\emph{B}}}}_i}$ and $\emph{\textbf{{w}}}_i$, respectively.

Therefore, based on Lemma~\ref{lemma:2}, we have
\begin{equation}
{{\boldsymbol\alpha}}_i  ={\rm soft}({{\boldsymbol\beta}}_i,{{{w}}}_i)=  {\rm max}({{\boldsymbol\beta}}_i-{{{w}}}_i,0)
\label{eq:34}
\end{equation}

Based on  Eqs.~\eqref{eq:20} and ~\eqref{eq:21}, we have
\begin{equation}
\begin{aligned}
&\hat{\textbf{\emph{X}}}_i={{\textbf{\emph{D}}}_i{\textbf{\emph{A}}}_i}= \sum\nolimits_{j=1}^{n_0} {{\rm soft}({\boldsymbol\beta}}_{i,j},{{{w}}}_{i,j}){\textbf{\emph{d}}}_{i,j}\\
&=\sum\nolimits_{j=1}^{n_0} {{\rm soft}({\boldsymbol\beta}}_{i,j},{{{w}}}_{i,j}){\textbf{\emph{u}}}_{i,j}{\textbf{\emph{v}}}_{i,j}^T\\ &={\textbf{\emph{U}}}_i\mathcal{D}_{\emph{\textbf{{w}}}_i}(\boldsymbol\Sigma_i){\textbf{\emph{V}}}_i^T = \mathcal{D}_{{\textbf{\emph{w}}}_i} (\textbf{\emph{Y}}_i).
\end{aligned}
\label{eq:35}
\end{equation} 

Obviously, based on Theorem~\ref{theorem:3}, we prove that WNNM (Eq.~\eqref{eq:22}) is equivalent to the weighted $\ell_1$-norm minimization based on GSR model (Eq.~\eqref{eq:32}).
\end{proof}

Therefore, we prove that NNM and WNNM is equivalent to the $\ell_1$-norm minimization based on GSR and the weighted $\ell_1$-norm minimization based on GSR, respectively.

Note that, although the unit of the GSR model is each group, it conducts the sparsity of each column of each group. Therefore, Proposition~\ref{proposition:1} is also suit for the GSR model. Based on Theorem~\ref{theorem:4} and Corollary~\ref{corollary:1}, namely, NNM and WNNM are equivalent to the $\ell_1$-norm minimization based on GSR and the weighted $\ell_1$-norm minimization based on GSR, respectively, and in terms of Proposition~\ref{proposition:1}, we have the following conclusion.

\begin{proposition}
\label{proposition:2}
WNNM is more feasible than NNM, \ie,
\begin{equation}
\min_{\textbf{\emph{X}}} ||\textbf{\emph{X}}||_{\rm {\textbf{w}},*}  \gg  \min_{\textbf{\emph{X}}} ||\textbf{\emph{X}}||_*
\label{eq:36}
\end{equation} 
where ${ v_1}\gg { v_2}$ denotes that the entry ${ v_1}$ is more than the corresponding entry of ${ v_2}$.
\end{proposition}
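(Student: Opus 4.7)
The plan is to chain together the three ingredients already in hand: Proposition~\ref{proposition:1} (weighted $\ell_1$ is strictly more sparsity-promoting than plain $\ell_1$), Theorem~\ref{theorem:4} (NNM is equivalent to $\ell_1$-GSR under the adaptive dictionary $\textbf{\emph{D}}_i$), and Corollary~\ref{corollary:1} (WNNM is equivalent to weighted-$\ell_1$-GSR under the same dictionary). Since the two rank models are reformulated on the same dictionary acting on the same group coefficients, the comparison between WNNM and NNM at the matrix level pulls back exactly to a comparison between weighted-$\ell_1$-GSR and $\ell_1$-GSR at the coefficient level, i.e., between the thresholdings in Eq.~\eqref{eq:34} and Eq.~\eqref{eq:30} applied to $\boldsymbol\beta_i$.

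First, I would invoke Theorem~\ref{theorem:4} and Corollary~\ref{corollary:1} to rewrite Eq.~\eqref{eq:21.1} and Eq.~\eqref{eq:22} group-by-group as a plain and a weighted $\ell_1$ sparse coding on the vectorized coefficients $\boldsymbol\alpha_i$, $\boldsymbol\beta_i$. Second, although Proposition~\ref{proposition:1} is stated for a vector sparse model, the GSR model enforces sparsity column-wise in each group (as the authors remark immediately before Proposition~\ref{proposition:2}), so Proposition~\ref{proposition:1} applies verbatim to the $\boldsymbol\alpha_i$ produced by Eqs.~\eqref{eq:30} and \eqref{eq:34}. Third, I would use Lemma~\ref{lemma:3}, together with the unitary structure behind the dictionary construction Eq.~\eqref{eq:20}, to lift the coefficient-level inequality back to the matrix level: stronger sparsity of $\boldsymbol\alpha_i$ corresponds exactly to a sharper, less over-shrunk low-rank reconstruction $\textbf{\emph{X}}_i=\textbf{\emph{D}}_i\textbf{\emph{A}}_i$, because singular values and coefficient magnitudes are in one-to-one correspondence through Eq.~\eqref{eq:31} and Eq.~\eqref{eq:35}. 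Summing over $i$ then yields Eq.~\eqref{eq:36}.

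The main obstacle is not a calculation but a definitional one: the relation $\gg$ (and correspondingly $\succ$ in Proposition~\ref{proposition:1}) is never formally introduced as a strict inequality, only as the informal notion ``more sparsity/rank-encouraging.'' To make the argument airtight I would pin $\gg$ down operationally, saying that the WNNM solution has at least as many vanishing singular values as the NNM solution and, on the surviving components, shrinks the large singular values less aggressively than the uniform threshold $\lambda$. Under the standard WNNM weight choice ${w}_{i,j}\propto 1/(\boldsymbol\sigma_{i,j}+\epsilon)$, this is immediate from a term-by-term comparison of the soft-thresholds in Eq.~\eqref{eq:30} and Eq.~\eqref{eq:34}, and it is exactly the property that Proposition~\ref{proposition:1}, applied column-wise to each $\boldsymbol\alpha_i$, guarantees. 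With $\gg$ so interpreted, Proposition~\ref{proposition:2} becomes a direct corollary of Theorem~\ref{theorem:4}, Corollary~\ref{corollary:1}, and Proposition~\ref{proposition:1}, and no further work is needed beyond recording this chain of implications.
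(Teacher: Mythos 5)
Your proposal follows exactly the same chain as the paper's own proof: invoke Theorem~\ref{theorem:4} and Corollary~\ref{corollary:1} to identify NNM and WNNM with the plain and weighted $\ell_1$-GSR problems under the adaptive dictionary, and then apply Proposition~\ref{proposition:1} (noting, as the paper does, that it applies column-wise to the group coefficients). Your extra step of pinning down the informal relation $\gg$ operationally via a term-by-term comparison of the soft-thresholds in Eqs.~\eqref{eq:30} and~\eqref{eq:34} goes beyond what the paper records, but the core argument is identical.
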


\begin{proof}
Based on Theorem~\ref{theorem:4} and Corollary~\ref{corollary:1}, we prove that NNM and WNNM are equivalent to the $\ell_1$-norm minimization based on GSR and the weighted $\ell_1$-norm minimization based on GSR, respectively. Due to the fact that Proposition~\ref{proposition:1} is suit for the GSR model, and thus based on Proposition~\ref{proposition:1}, \ie, the weighted $\ell_1$-norm minimization can obtain more sparsity performance than traditional $\ell_1$-norm minimization, we prove that WNNM is more feasible than NNM.
\end{proof}

It is worth noting that the dictionary can be learned in various manners and the proposed adaptive dictionary learning approach is just one of them. Although the designed adaptive dictionary learning seems to translate the sparse representation into the rank minimization problem directly, the main difference between sparse representation and the rank minimization models is that sparse representation has a dictionary learning process while the rank minimization problem does not, to the best of our knowledge.

According to the above analysis, we present a scheme to analyze WNNM and NNM from the perspective of the GSR. We design an adaptive dictionary to bridge the gap between the GSR and the rank minimization models. Based on this scheme, we provide a mathematical derivation to explain why WNNM is more feasible than NNM.

\section {WNNM Model for Image Restoration}
\label{sec:4}
Since image restoration (IR) is an ideal test bed bench to measure different statistical image models, in this section, we employ the WNNM model on two image restoration tasks, \ie, image denoising and image inpainting. Specifically, image restoration aims to reconstruct a high quality image $\textbf{\emph{x}}$ from its degraded observation $\textbf{\emph{y}}$, which can be generally expressed as
\begin{equation}
\textbf{\emph{y}}= \textbf{\emph{H}}\textbf{\emph{x}} + \boldsymbol \eta,
\label{eq:37}
\end{equation} 
where $\textbf{\emph{H}}$ is a non-invertible linear degradation operator and $\boldsymbol\eta$ is the vector of some independent white Gaussian noise. With different settings of matrix $\textbf{\emph{H}}$, various IR problems can be derived from Eq.~\eqref{eq:37}, such as image denoising \cite{39} when $\textbf{\emph{H}}$ is an identity matrix; image deblurring \cite{43} when $\textbf{\emph{H}}$ is a blur operator; image inpainting \cite{44} when $\textbf{\emph{H}}$ is a mask and image compressive sensing (CS) recovery when $\textbf{\emph{H}}$ is a random projection matrix \cite{29}. In this work, we focus on the image denoising and image inpainting problems.

Due to the ill-posed nature of IR, it is critical to exploit the prior knowledge that characterizes the statistical features of the images. Motivated by the fact that image patches that have similar patterns can be spatially far from each other and thus can be collected in the whole image. The well-known nonlocal self-similarity (NSS) prior \cite{45}, which characterizes the repetitiveness of textures and structures by natural images within nonlocal regions, implies that many similar patches can be searched for any exemplar patch. To be concrete, in the stage of image restoration, each degraded image patch $\textbf{\emph{y}}_i$ is extracted from the degraded image $\textbf{\emph{y}}$. Like in the subsection~\ref{sec:2.4}, we search for its $k$ similar patches to generate a group $\textbf{\emph{Y}}_i$, \ie, ${\textbf{\emph{Y}}}_i=\{{\textbf{\emph{y}}}_{i,1}, {\textbf{\emph{y}}}_{i,2},...,{\textbf{\emph{y}}}_{i,k}\}$. Then we have ${\textbf{\emph{Y}}}_i = \textbf{\emph{X}}_i +{\textbf{\emph{N}}}_i$, where $\textbf{\emph{X}}_i$ and ${\textbf{\emph{N}}}_i$ are the group matrices of original image and noise, respectively. Since all the patches in each data matrix have similar structures, the constructed data matrix ${\textbf{\emph{Y}}}_i$ has a low rank property and we can use the low rank approximation method to estimate $\textbf{\emph{X}}_i$ from $\textbf{\emph{Y}}_i$. Following this, by aggregating all the recovered groups, the whole image can be obtained. In this section, we apply the WNNM model to $\textbf{\emph{Y}}_i$ to estimate $\textbf{\emph{X}}_i$ for IR, and $\textbf{\emph{X}}_i$ can be reconstructed by solving the following optimization problem,
\begin{equation}
\hat{\textbf{\emph{X}}}_i=\arg\min_{\textbf{\emph{X}}_i}\left(\frac{1}{2}||\textbf{\emph{Y}}_i-\textbf{\emph{X}}_i||_F^2 + ||\textbf{\emph{X}}_i||_{ {\emph{\textbf{w}}}_i,*}\right),
\label{eq:38}
\end{equation} 

To obtain an effective solution of Eq.~\eqref{eq:38}, we introduce the following theorem.

\begin{theorem}
\label{theorem:5}
Let $\textbf{\emph{Y}}_i= \textbf{\emph{U}}_i\boldsymbol\Delta_i\textbf{\emph{V}}_i^T$ be the SVD of $\textbf{\emph{Y}}_i\in\Re^{m\times k}$ and $\boldsymbol\Delta_i=diag(\delta_{i,1}, ..., \delta_{i,n_0})$, $n_0= min(m, k)$. The optimal solution $\textbf{\emph{X}}_i$ to the problem Eq.~\eqref{eq:38} is $\textbf{\emph{U}}_i\boldsymbol\Sigma_i\textbf{\emph{V}}_{i}^T$, where $\boldsymbol\Sigma_{i}=diag(\sigma_{i,1}, ..., \sigma_{i,n_0})$. Then the optimal solution of the $j$-th diagonal element $\sigma_{i,j}$ of the diagonal matrix $\boldsymbol\Sigma_{i}$ is solved by the following problem,
\begin{equation}
\begin{aligned}
&\min\limits_{\sigma_{i,j}\geq0}
\left(\frac{1}{2}(\delta_{i, j}-\sigma_{i, j})^2+ w_{i, j} \sigma_{i,j} \right),
\end{aligned}
\label{eq:39}
\end{equation}
where $\sigma_{i,j}$ represents the $j$-th singular value of each data matrix $\textbf{\emph{X}}_i$.
\end{theorem}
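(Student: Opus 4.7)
The plan is to derive Theorem~\ref{theorem:5} as a direct specialization of Theorem~\ref{theorem:2}, which already supplies the closed-form minimizer of the weighted nuclear norm proximal problem, and then to reduce the matrix problem to independent scalar subproblems by exploiting the unitary invariance of the Frobenius norm proved in Lemma~\ref{lemma:3}. The guiding observation is that the weighted nuclear norm depends on $\textbf{\emph{X}}_i$ only through its singular values, while the fidelity term $\|\textbf{\emph{Y}}_i-\textbf{\emph{X}}_i\|_F^2$ is rotationally invariant, so the optimum $\textbf{\emph{X}}_i$ must share singular vectors with $\textbf{\emph{Y}}_i$; once that alignment is in place, Eq.~\eqref{eq:38} separates coordinate by coordinate into univariate problems on the singular values.

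First I would apply Theorem~\ref{theorem:2} to the group $\textbf{\emph{Y}}_i$, so that under the standing ordering assumptions $\delta_{i,1}\geq\cdots\geq\delta_{i,n_0}\geq 0$ and $0\leq w_{i,1}\leq\cdots\leq w_{i,n_0}$ the globally optimal solution takes the form $\hat{\textbf{\emph{X}}}_i=\textbf{\emph{U}}_i\boldsymbol\Sigma_i\textbf{\emph{V}}_i^T$ with $\boldsymbol\Sigma_i=\mathrm{diag}(\sigma_{i,1},\ldots,\sigma_{i,n_0})$. Substituting this ansatz into Eq.~\eqref{eq:38} and invoking the unitary-invariance computation of Lemma~\ref{lemma:3}, the fidelity term collapses to $\sum_j(\delta_{i,j}-\sigma_{i,j})^2$ and the weighted nuclear norm collapses to $\sum_j w_{i,j}\sigma_{i,j}$. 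The overall objective therefore decomposes as a sum over $j$ of the scalar functions appearing in Eq.~\eqref{eq:39}, so each $\sigma_{i,j}$ can be optimized independently and admits the closed-form solution supplied by Lemma~\ref{lemma:2}, namely $\hat\sigma_{i,j}=\max(\delta_{i,j}-w_{i,j},0)$.

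An alternative, perhaps cleaner route would proceed through Corollary~\ref{corollary:1}: WNNM applied to $\textbf{\emph{Y}}_i$ is equivalent to a weighted $\ell_1$-norm problem in the adaptive dictionary coordinates defined in Eq.~\eqref{eq:20}, and that latter problem is manifestly separable across coefficients. Under the dictionary construction, the $j$-th coefficient $\alpha_{i,j}$ coincides with the $j$-th singular value $\sigma_{i,j}$, so the univariate subproblem produced by Corollary~\ref{corollary:1} is exactly Eq.~\eqref{eq:39}; this gives an independent check of the scalar reduction without reinvoking the unitary-invariance calculation.

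The main obstacle lies entirely in the first step, namely justifying that the optimum inherits the singular vectors $\textbf{\emph{U}}_i,\textbf{\emph{V}}_i$ of the data matrix rather than some other unitary pair. This is not automatic, since the weighted nuclear norm is non-convex in $\textbf{\emph{X}}_i$ whenever the weights are non-monotone, and in general the minimizer need not align with the data's SVD. The ordering hypothesis $w_{i,1}\leq\cdots\leq w_{i,n_0}$ paired with the non-increasing order of the $\delta_{i,j}$ is precisely what rescues the alignment; this is consumed inside Theorem~\ref{theorem:2} through von Neumann's trace inequality, which shows that any misalignment only increases the objective. Once the alignment is accepted, separability of the objective and the scalar reduction to Eq.~\eqref{eq:39} are routine.
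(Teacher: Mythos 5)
Your proposal is correct, but it cannot be ``the same approach as the paper'' because the paper offers no proof at all: its proof of Theorem~\ref{theorem:5} is the single line ``See \cite{16}.'' In \cite{16} the argument runs in the opposite logical direction from yours: one first uses von Neumann's trace inequality to show $\|\textbf{\emph{Y}}_i-\textbf{\emph{X}}_i\|_F^2\geq\sum_j(\delta_{i,j}-\sigma_{i,j})^2$ with equality exactly when $\textbf{\emph{X}}_i$ and $\textbf{\emph{Y}}_i$ admit a simultaneous SVD, which (since $\|\cdot\|_{\emph{\textbf{w}},*}$ depends only on singular values) forces the alignment and reduces Eq.~\eqref{eq:38} to a minimization over ordered singular values; the closed form of Theorem~\ref{theorem:2} is then a corollary. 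You instead take Theorem~\ref{theorem:2} as given and verify that its output $\max(\delta_{i,j}-w_{i,j},0)$ coincides with the minimizers of the scalar problems in Eq.~\eqref{eq:39}; that is logically valid (Theorem~\ref{theorem:2} is an external citation here, so no circularity arises within this paper), and your use of the Lemma~\ref{lemma:3} computation and of Corollary~\ref{corollary:1} as a cross-check is consistent with the paper's framework. Two caveats worth recording: first, Theorem~\ref{theorem:5} is stated without any ordering hypothesis on the weights, and the fully separable, unconstrained form of Eq.~\eqref{eq:39} is only correct once one assumes $\delta_{i,1}\geq\cdots\geq\delta_{i,n_0}$ and $w_{i,1}\leq\cdots\leq w_{i,n_0}$ (otherwise the ordering constraint $\sigma_{i,1}\geq\cdots\geq\sigma_{i,n_0}$ must be retained, as in Theorem~1 of \cite{16}); you correctly import this hypothesis, which the paper leaves implicit. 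Second, your aside that the weighted nuclear norm is ``non-convex whenever the weights are non-monotone'' is imprecise --- it is convex precisely when the weights are non-ascending (a non-negative combination of Ky Fan norms) and non-convex in the ascending-weight regime actually used here --- but this remark is not load-bearing.
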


\begin{proof}
See \cite{16}.
\end{proof}

Therefore, the minimization problem of Eq.~\eqref{eq:38} can be simplified by minimizing the problem of Eq.~\eqref{eq:39}. For fixed $\delta_{i, j}$ and $w_{i, j}$ and based on Lemma~\ref{lemma:2}, the closed-form solution of Eq.~\eqref{eq:39} can be expressed as
\begin{equation}
\sigma_{i, j}= {\rm soft} (\delta_{i, j}, w_{i, j}) = {\rm max} (\delta_{i, j} - w_{i, j}, 0).
\label{eq:40}
\end{equation}

With the solution of $\boldsymbol\Sigma_i$ in Eq.~\eqref{eq:40}, the clean group matrix ${{\textbf{\emph{X}}}_i}$ can be recovered as ${{\hat{\textbf{\emph{X}}}}_i}={{\textbf{\emph{U}}}_i}\boldsymbol\Sigma_i{{\textbf{\emph{V}}}_i}$. Then the latent image ${{\hat{\textbf{\emph{x}}}}}$ can be reconstructed by aggregating all the group matrices $\{{{\textbf{\emph{X}}}_i}\}$.

Inspired by the singular values have clear physical meanings, for the weight $\textbf{\emph{w}}_i$ of each group $\textbf{\emph{X}}_i$, large singular values of each group $\textbf{\emph{X}}_i$ usually present major edge and texture information, and vice versa. Therefore, we usually shrink large singular values less, while shrinking smaller ones more \cite{15,32}.  In other words, the weight $\textbf{\emph{w}}_i$ of each group $\textbf{\emph{X}}_i$ is set to be inverse to the singular values, and thus, in \cite{15}, the weight is heuristically set as $w_{i, j} = c/(\sigma_{i, j}+\varepsilon)$, where $c$ and $\varepsilon$ are the small constant. However, WNNM  sometimes pops out error in the operation of SVD, owe to this weight setting. In this subsection, we present an adaptive weight setting scheme to avoid this error. Specifically, inspired by \cite{46,47}, the weight $\textbf{\emph{w}}_i$ of each group $\hat{\textbf{\emph{X}}}_i$ is set as $\textbf{\emph{w}}_i = [w_{i,1}, w_{i,2}, ..., w_{i, j}]$, and we have
\begin{equation}
\textbf{\emph{w}}_i=\frac{c* 2\sqrt{2}{\sigma_n}^2}{(\boldsymbol\gamma_i+\varepsilon)}
\label{eq:41}
\end{equation}
where $\sigma_n$ represents the additive white Gaussian noise and $\boldsymbol\gamma_i$ denotes the estimated standard variance of the singular values of each group $\hat{\textbf{\emph{X}}}_i$. About the robustness analysis of Eq.~\eqref{eq:41}, more details please see \cite{46}. Also, we did massive  experiments and found that the proposed scheme cannot lead to the error in the operation of SVD.

Throughout the numerical experiments, we choose the following stoping iteration for the proposed image restoration algorithm, i.e,

\begin{equation}
\frac{||\hat{\textbf{\emph{x}}}^t-\hat{\textbf{\emph{x}}}^{t-1}||_2^2}{||\hat{\textbf{\emph{x}}}^{t-1}||_2^2} <\tau
\label{eq:42}
\end{equation}
where $\tau$ is a small constant. The complete description of the WNNM model for image restoration is shown in Algorithm~\ref{algo:1}.

\begin{center}
	\begin{algorithm}[htbp]
		\caption{WNNM model for image restoration.}
		\begin{algorithmic}[1]
			\REQUIRE The observed image $\textbf{\emph{y}}$ and the measurement matrix $\textbf{\emph{H}}$.
			\STATE  Initialize $\hat{\textbf{\emph{x}}} = \textbf{\emph{y}}$.
			\FOR{$k=0$ \TO Max-Iter }
			\IF {$\textbf{\emph{H}}$ is identity matrix }
			\STATE  Iterative regularization $\textbf{\emph{y}}^{t}=\hat{\textbf{\emph{x}}}^{t-1}+ \mu (\textbf{\emph{y}}-\hat{\textbf{\emph{x}}}^{t-1})$;
			\ELSIF{$\textbf{\emph{H}}$ is mask operator}
			\STATE  Iterative regularization $\textbf{\emph{y}}^{t}=\hat{\textbf{\emph{x}}}^{t-1}+ \mu \textbf{\emph{H}}^T (\textbf{\emph{y}}-\textbf{\emph{H}}\textbf{\emph{x}}^{t-1})$;

			\ENDIF
			\FOR{Each patch ${\textbf{\emph{y}}}_{i}$ in ${\textbf{\emph{y}}}^t$ }	
			\STATE Find nonlocal similar patches to form a group ${\textbf{\emph{Y}}}_{i}$;
			\STATE Singular value decomposition $[\textbf{\emph{U}}_i, \boldsymbol\Delta_i, \textbf{\emph{V}}_i]= SVD ({\textbf{\emph{Y}}}_i)$;
			\STATE Estimate the weight $\textbf{\emph{w}}_i$ of each group by computing  Eq.~\eqref{eq:41};
			\STATE Calculate $\boldsymbol\Sigma_i$ by  Eq.~\eqref{eq:40};
			\STATE Get the estimation:   $\hat{\textbf{\emph{X}}}_i=\textbf{\emph{U}}_i\boldsymbol\Sigma_i\textbf{\emph{V}}_i^T$;
			\ENDFOR
			\STATE Aggregate ${\textbf{\emph{X}}}_i$  to form the restored image $\hat{\textbf{\emph{x}}}^{t}$.
			\ENDFOR
			\STATE $\textbf{Output:}$ The final restored image $\hat{\textbf{\emph{x}}}$.
		\end{algorithmic}
		\label{algo:1}
	\end{algorithm}
\end{center}

\begin{figure}[!htbp]
	\begin{minipage}[b]{1\linewidth}
		\centering
		\centerline{\includegraphics[width=9cm]{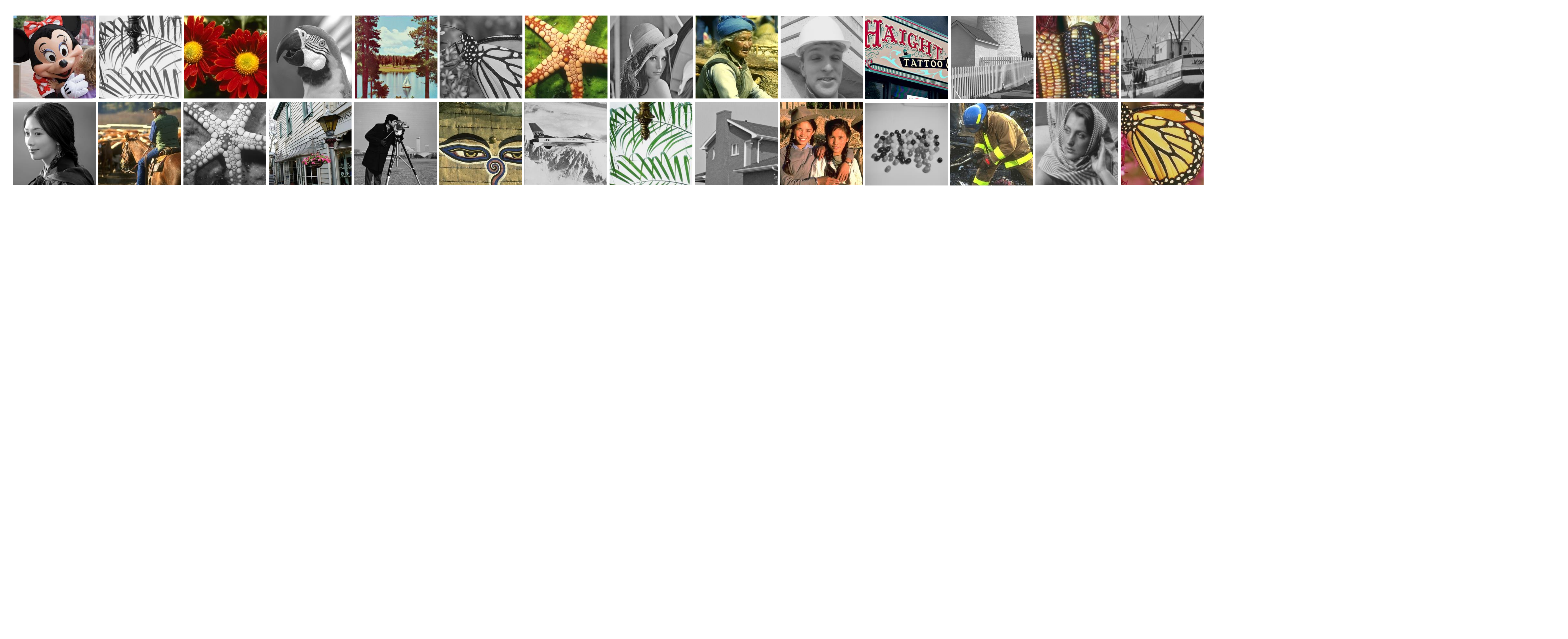}}
	\end{minipage}
	\vspace{-3mm}
	\caption{The test images for the experiments.  Top row, from left to right: Mickey, Leaves, Flower, Parrot, Lake, Monarch, Starfish, Lena, Nanna, Foreman, Haight, Fence, Corn, boats. Bottom row, from left to right: Lin, Cowboy, Starfish, Light, C.man, Mural, Airplane, Leaves, House, Girl, J. Bean, Fireman, Barbara, Butterfly.}
	\label{fig:2}
\vspace{-3mm}
\end{figure}

\section {Experimental Results}
\label{sec:5}
In this section, we conduct a variety of experiments in the applications of image denoising and image inpainting. Since the GSR is exploited to analyze that WNNM is more feasible than NNM, we called the proposed scheme as GSR-WNNM. To verify the effectiveness of the GSR-WNNM, we have implemented a variant of the GSR that use NNM, denoted as GSR-NNM. All the experimental images are shown in  Fig.~\ref{fig:2}. To evaluated the quality of the recovered images, both PSNR and SSIM \cite{48} metrics are used.  The Matlab code can be downloaded at: \url{https://drive.google.com/open?id=1BdRxJAf798KzURnErYqeDZ9qSVPoUNYC}.

\subsection {Image Denoising}
In image denoising, to validate the denoising performance of the proposed GSR-WNNM, we compare it with leading denoising methods, including BM3D \cite{39}, EPLL \cite{49}, NCSR \cite{50}, Plow \cite{51}, PGPD \cite{52}, OGLR \cite{53} and GSR-NNM methods. The parameter setting of the proposed GSR-WNNM is as follows. The size of each patch $\sqrt{m} \times \sqrt{m}$ is set be 6$\times$6, 7$\times$7, 8$\times$8 and 9$\times$9 for $\sigma_n\leq20$, $20<\sigma_n\leq40$, $40<\sigma_n\leq75$ and $75<\sigma_n\leq100$, respectively. The number of the similar patches $k$ is set to be 60, 70, 80 and 100 for  $\sigma_n\leq40$, $40<\sigma_n\leq50$, $50<\sigma_n\leq 75$ and $75<\sigma_n\leq100$, respectively.  $\tau$ and $c$ are set to be (0.0013, 0.65), (0.001, 0.75), (0.0012, 0.65), (0.0013, 0.65), (0.0017, 0.55) and (0.0019, 0.60) for $\sigma_n\leq 20$, $20<\sigma_n\leq30$, $30<\sigma_n\leq40$, $40<\sigma_n\leq50$, $50<\sigma_n\leq75$ and $75<\sigma_n\leq100$, respectively. The search window for similar patches is set $L = 30$ and $\varepsilon = 10^{-16}$.

We present the denoising results on six noise levels, \ie, $\sigma_n$ = 20, 30, 40, 50, 75 and 100. The PSNR and SSIM results under these noise levels for all competing denoising methods for the 14 widely used test images are shown in Table~\ref{Tab:1} and Table~\ref{Tab:2}, respectively (the highest PSNR and SSIM values are marked in bold). It is clearly seen that the proposed GSR-WNNM significantly outperforms the GSR-NNM method, that is, WNNM model is more feasible than NNM model. In a majority of cases, on can observe that the proposed GSR-WNNM can obtain better PSNR and SSIM results than other competing methods. The average PSNR gains of the proposed GSR-WNNM over BM3D, EPLL, NCSR, Plow, PGPD, OGLR and GSR-NNM methods are as much as 0.32dB, 1.05dB, 0.44dB, 0.85dB, 0.21dB, 0.55dB and 1.50dB, respectively. In terms of SSIM, it can be seen that the proposed GSR-WNNM can also achieve higher results than other competing methods. The only exception is when $\sigma_n$ = 20 for which BM3D is slightly higher than the proposed GSR-WNNM. Nonetheless, under high noise level $\sigma_n$ =100, the proposed GSR-WNNM consistently outperforms other competing methods for all cases. The visual quality comparisons in the case of $\sigma_n$ =100 for test images $\emph{House}$, $\emph{Leaves}$ and $\emph{Monarch}$ are shown in Fig.~\ref{fig:3}, Fig.~\ref{fig:4} and Fig.~\ref{fig:5}, respectively. It can be seen that the over-smooth phenomena or undesirable artifacts are generated by BM3D, EPLL, NCSR, Plow, PGPD, OGLR and GSR-NNM methods. In contrast, the proposed GSR-WNNM not only removes most of the artifacts, but also provides better denoising performance than BM3D, EPLL, NCSR, Plow, PGPD, OGLR and GSR-NNM methods on both edges and textures. Therefore, these results validate the efficient of the proposed GSR-WNNM and also demonstrate that WNNM model is more feasible than NNM model.

\begin{figure}[!htbp]
\vspace{-3mm}
	\centerline{\includegraphics[width=9cm]{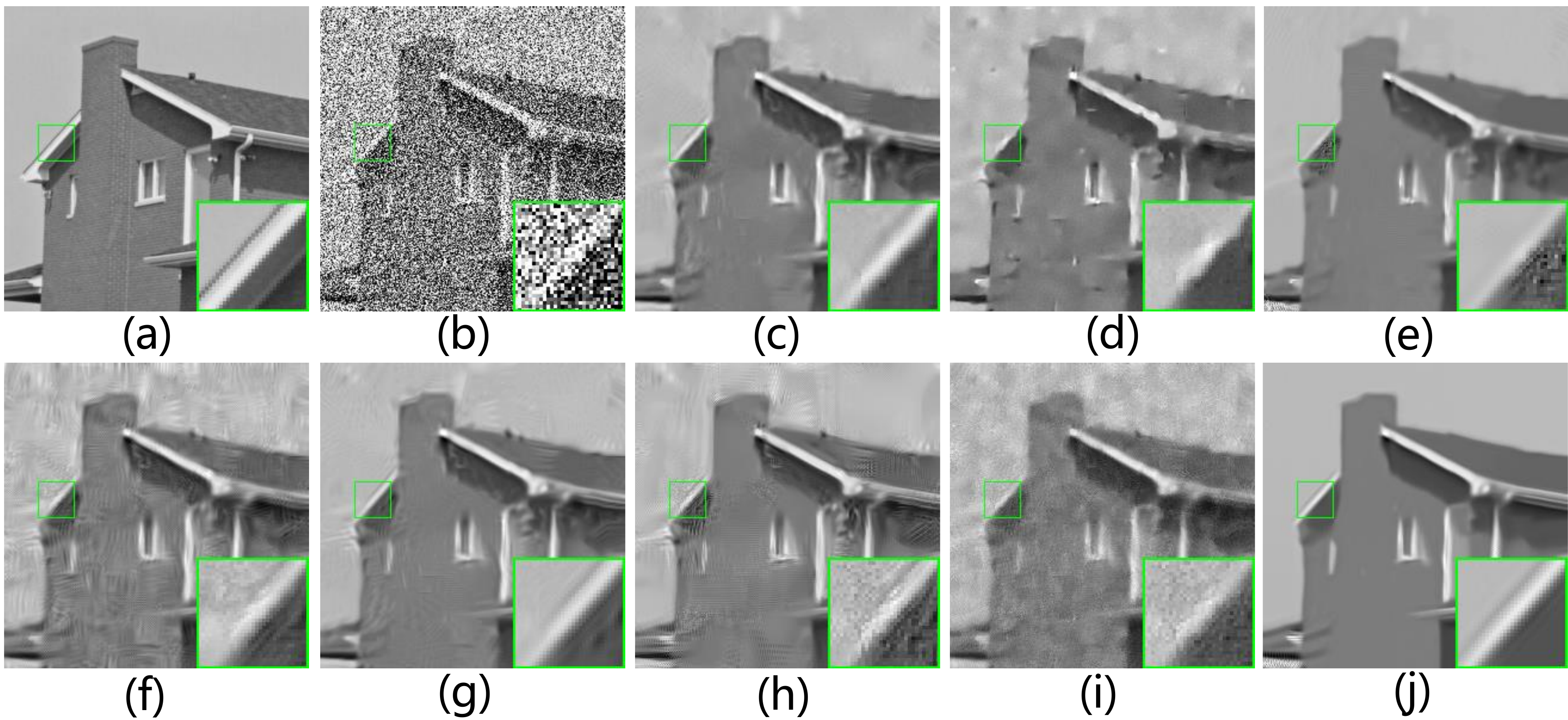}}
	\vspace{-3mm}
	\caption{Denoising performance comparison of image $\emph{House}$ with $\sigma_n$ = 100. (a) Original image; (b) Noisy image; (c) BM3D \cite{39} (PSNR = 25.87dB, SSIM = 0.7203); (d) EPLL \cite{49} (PSNR = 25.21dB, SSIM = 0.6695); (e) NCSR \cite{50} (PSNR = 25.49dB, SSIM = 0.7397); (f) Plow \cite{51} (PSNR = 24.72dB, SSIM = 0.5874); (g) PGPD \cite{52} (PSNR = 26.17dB, SSIM = 0.7195);  (h) OGLR \cite{53} (PSNR = 25.07dB, SSIM = 0.6373);   (i) GSR-NNM (PSNR = 23.66dB, SSIM = 0.4918); (j) GSR-WNNM (PSNR = \textbf{26.71dB}, SSIM = \textbf{0.7756}).}
	\label{fig:3}
	\vspace{-3mm}
\end{figure}

\begin{figure}[!htbp]
\vspace{-3mm}
	\centerline{\includegraphics[width=9cm]{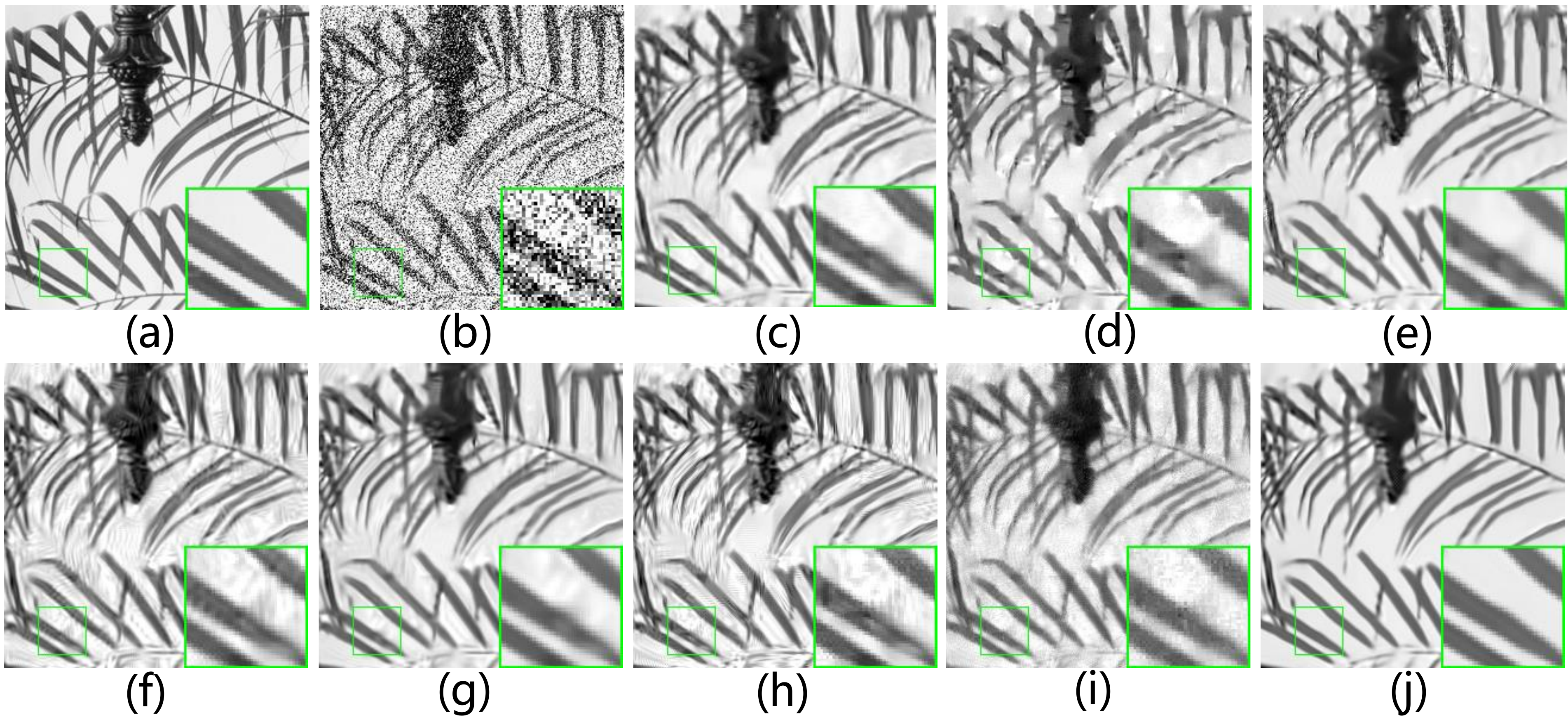}}
	\vspace{-3mm}
	\caption{Denoising performance comparison of image $\emph{Leaves}$ with $\sigma_n$ = 100. (a) Original image; (b) Noisy image; (c) BM3D \cite{39} (PSNR = 20.90dB, SSIM = 0.7482); (d) EPLL \cite{49} (PSNR = 20.26dB, SSIM = 0.7163); (e) NCSR \cite{50} (PSNR = 20.84dB, SSIM = 0.7622); (f) Plow \cite{51} (PSNR = 20.43dB, SSIM = 0.6814); (g) PGPD \cite{52} (PSNR = 20.95dB, SSIM = 0.7469);  (h) OGLR \cite{53} (PSNR = 20.28dB, SSIM = 0.6827);   (i) GSR-NNM (PSNR = 19.57dB, SSIM = 0.6345); (j) GSR-WNNM (PSNR = \textbf{21.56dB}, SSIM = \textbf{0.7964}).}
	\label{fig:4}
	\vspace{-3mm}
\end{figure}

\begin{figure}[!htbp]
\vspace{-3mm}
	\centerline{\includegraphics[width=9cm]{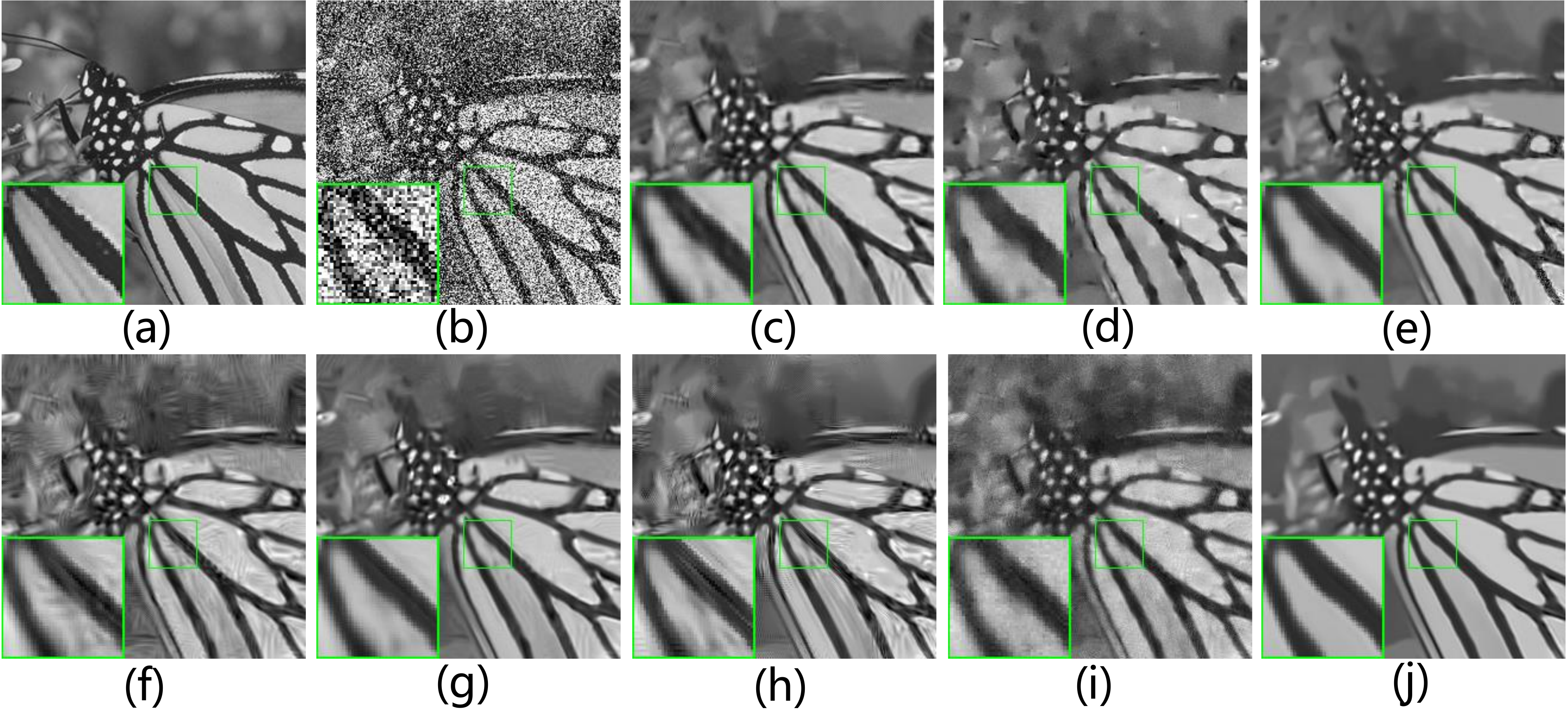}}
	\vspace{-3mm}
	\caption{Denoising performance comparison of image $\emph{Monarch}$ with $\sigma_n$ = 100. (a) Original image; (b) Noisy image; (c) BM3D \cite{39} (PSNR = 22.52dB, SSIM = 0.7021); (d) EPLL \cite{49} (PSNR = 22.24dB, SSIM = 0.6771); (e) NCSR \cite{50} (PSNR = 22.10dB, SSIM = 0.7109); (f) Plow \cite{51} (PSNR = 21.83dB, SSIM = 0.6102); (g) PGPD \cite{52} (PSNR = 22.56dB, SSIM = 0.7029);  (h) OGLR \cite{53} (PSNR = 21.87dB, SSIM = 0.6419);   (i) GSR-NNM (PSNR = 21.03dB, SSIM = 0.5596); (j) GSR-WNNM (PSNR = \textbf{22.86dB}, SSIM = \textbf{0.7307}).}
	\label{fig:5}
	\vspace{-3mm}
\end{figure}

	\begin{table*}[!htbp]
			\caption{PSNR ($\textnormal{d}$B) comparison of BM3D \cite{39}, EPLL \cite{49}, NCSR \cite{50}, Plow \cite{51}, PGPD \cite{52}, OGLR \cite{53}, GSR-NNM and GSR-WNNM for image denoising.}
		\centering
		\resizebox{1.00\textwidth}{!}
		{
			\begin{tabular}{|c|c|c|c|c|c|c|c|c||c|c|c|c|c|c|c|c|c|c|c|c|c|c|c|c|}
				\hline
				\multicolumn{1}{|c|}{}&\multicolumn{8}{|c||}{$\sigma_n=20$}&\multicolumn{8}{|c|}{$\sigma_n=30$}\\
				\hline
				\multirow{2}{*}{\textbf{{Images}}}&\multirow{2}{*}{\textbf{{BM3D}}}
				&\multirow{2}{*}{\textbf{{EPLL}}}&\multirow{2}{*}{\textbf{{NCSR}}}&\multirow{2}{*}{\textbf{{Plow}}}&\multirow{2}{*}{\textbf{{PGPD}}}
				&\multirow{2}{*}{\textbf{{OGLR}}}&{\textbf{{GSR-}}}&{\textbf{{GSR-}}}&\multirow{2}{*}{\textbf{{BM3D}}}&\multirow{2}{*}{\textbf{{EPLL}}}
                &\multirow{2}{*}{\textbf{{NCSR}}}&\multirow{2}{*}{\textbf{{Plow}}}&\multirow{2}{*}{\textbf{{PGPD}}}&\multirow{2}{*}{\textbf{{OGLR}}}
                &{\textbf{{GSR-}}}&{\textbf{{GSR-}}}\\
				& &  & & & & &{\textbf{NNM}} &{\textbf{WNNM}} & & & & & & & {\textbf{NNM}}& {\textbf{WNNM}}  \\
				\hline
				\multirow{1}{*}{Airplane}
&	30.59 	&	30.60 	&	30.50 	&	29.98 	&	30.80 	&	30.17 	&	29.01 	&	\textbf{30.87} 	&	28.49 	&	28.54 	&	28.34 	&	28.03 	&	28.63 	&	28.21 	&	27.62 	&	\textbf{28.67}
\\
\hline
				\multirow{1}{*}{Barbara}
&	31.24 	&	29.85 	&	31.10 	&	30.75 	&	31.12 	&	30.89 	&	30.07 	&	\textbf{31.53} 	&	29.08 	&	27.58 	&	28.68 	&	28.99 	&	28.93 	&	28.84 	&	28.08 	&	\textbf{29.36}
\\
\hline
				\multirow{1}{*}{boats}
&	31.42 	&	30.87 	&	31.26 	&	30.90 	&	31.38 	&	31.20 	&	29.96 	&	\textbf{31.51} 	&	29.33 	&	28.85 	&	29.04 	&	29.01 	&	29.32 	&	29.11 	&	28.08 	&	\textbf{29.34}
\\
\hline
				\multirow{1}{*}{C. man}
&	\textbf{30.49} 	&	30.34 	&	30.41 	&	29.55 	&	30.36 	&	30.14 	&	28.79 	&	30.27 	&	\textbf{28.64} 	&	28.35 	&	28.52 	&	27.80 	&	28.54 	&	28.26 	&	27.67 	&	28.51
\\
\hline
				\multirow{1}{*}{Fence}
&	29.93 	&	29.24 	&	30.05 	&	29.13 	&	29.99 	&	29.82 	&	28.86 	&	\textbf{30.14} 	&	28.19 	&	27.22 	&	28.13 	&	27.59 	&	28.13 	&	28.12 	&	27.43 	&	\textbf{28.34}
\\
\hline
				\multirow{1}{*}{Foreman}
&	34.54 	&	33.67 	&	34.42 	&	34.21 	&	34.44 	&	34.50 	&	33.34 	&	\textbf{34.67} 	&	32.75 	&	31.70 	&	32.61 	&	32.45 	&	32.83 	&	32.84 	&	30.24 	&	\textbf{33.27}
\\
\hline
				\multirow{1}{*}{House}
&	33.77 	&	32.99 	&	33.81 	&	33.40 	&	33.85 	&	33.77 	&	32.30 	&	\textbf{33.89} 	&	32.09 	&	31.24 	&	32.01 	&	31.67 	&	32.24 	&	32.02 	&	29.85 	&	\textbf{32.52}\\
\hline
				\multirow{1}{*}{J. Bean}
&	34.18 	&	33.79 	&	34.37 	&	33.80 	&	34.28 	&	34.44 	&	32.61 	&	\textbf{34.67} 	&	31.97 	&	31.55 	&	31.99 	&	31.61 	&	31.99 	&	32.15 	&	29.77 	&	\textbf{32.51}
\\
\hline
				\multirow{1}{*}{Leaves}
&	30.09 	&	29.40 	&	30.34 	&	29.08 	&	30.46 	&	29.87 	&	28.93 	&	\textbf{30.99} 	&	27.81 	&	27.19 	&	28.04 	&	27.00 	&	27.99 	&	27.77 	&	27.17 	&	\textbf{28.53}
\\
\hline
				\multirow{1}{*}{Lena}
&	31.52 	&	31.25 	&	31.48 	&	30.98 	&	31.64 	&	31.27 	&	30.03 	&	\textbf{31.68} 	&	29.46 	&	29.18 	&	29.32 	&	29.16 	&	29.60 	&	29.36 	&	28.29 	&	\textbf{29.60}\\
\hline
				\multirow{1}{*}{Lin}
&	\textbf{32.83} 	&	32.62 	&	32.66 	&	32.45 	&	32.79 	&	32.77 	&	31.17 	&	32.78 	&	\textbf{30.95} 	&	30.67 	&	30.65 	&	30.76 	&	30.96 	&	30.85 	&	29.22 	&	30.90
\\
\hline
				\multirow{1}{*}{Monarch}
&	30.35 	&	30.49 	&	30.52 	&	29.50 	&	30.68 	&	30.13 	&	29.47 	&	\textbf{30.97} 	&	28.36 	&	28.36 	&	28.38 	&	27.77 	&	28.49 	&	28.33 	&	27.63 	&	\textbf{28.77}\\
\hline
				\multirow{1}{*}{Parrot}
&	32.32 	&	32.00 	&	32.25 	&	31.85 	&	32.31 	&	32.12 	&	30.95 	&	\textbf{32.38} 	&	30.33 	&	30.00 	&	30.20 	&	29.88 	&	30.30 	&	30.24 	&	28.97 	&	\textbf{30.48}\\
\hline
				\multirow{1}{*}{Starfish}
&	29.67 	&	29.58 	&	29.85 	&	28.83 	&	29.84 	&	29.46 	&	28.63 	&	\textbf{30.11} 	&	27.65 	&	27.52 	&	27.69 	&	27.02 	&	27.67 	&	27.47 	&	27.10 	&	\textbf{27.94}
\\
\hline
				\multirow{1}{*}{\textbf{Average}}
&	31.64 	&	31.19 	&	31.65 	&	31.03 	&	31.71 	&	31.47 	&	30.29 	&	\textbf{31.89} 	&	29.65 	&	29.14 	&	29.54 	&	29.19 	&	29.69 	&	29.54 	&	28.37 	&	\textbf{29.91}
\\
\hline
				\multicolumn{1}{|c|}{}&\multicolumn{8}{|c||}{$\sigma_n=40$}&\multicolumn{8}{|c|}{$\sigma_n=50$}\\
				\hline
				\multirow{2}{*}{\textbf{{Images}}}&\multirow{2}{*}{\textbf{{BM3D}}}
				&\multirow{2}{*}{\textbf{{EPLL}}}&\multirow{2}{*}{\textbf{{NCSR}}}&\multirow{2}{*}{\textbf{{Plow}}}&\multirow{2}{*}{\textbf{{PGPD}}}
				&\multirow{2}{*}{\textbf{{OGLR}}}&{\textbf{{GSR-}}}&{\textbf{{GSR-}}}&\multirow{2}{*}{\textbf{{BM3D}}}&\multirow{2}{*}{\textbf{{EPLL}}}
                &\multirow{2}{*}{\textbf{{NCSR}}}&\multirow{2}{*}{\textbf{{Plow}}}&\multirow{2}{*}{\textbf{{PGPD}}}&\multirow{2}{*}{\textbf{{OGLR}}}
                &{\textbf{{GSR-}}}&{\textbf{{GSR-}}}\\
				& &  & & & & &{\textbf{NNM}} &{\textbf{WNNM}} & & & & & & & {\textbf{NNM}}& {\textbf{WNNM}}  \\
				\hline
				\multirow{1}{*}{Airplane}
&	26.88 	&	27.08 	&	26.78 	&	26.70 	&	27.12 	&	26.82 	&	26.49 	&	\textbf{27.23} 	&	25.76 	&	25.96 	&	25.63 	&	25.64 	&	25.98 	&	25.67 	&	25.16 	&	\textbf{26.08}

\\
\hline
				\multirow{1}{*}{Barbara}
&	27.26 	&	25.99 	&	27.25 	&	27.59 	&	27.43 	&	27.42 	&	26.97 	&	\textbf{27.79} 	&	26.42 	&	24.86 	&	26.13 	&	26.42 	&	26.27 	&	26.17 	&	25.66 	&	\textbf{26.66}
\\
\hline
				\multirow{1}{*}{boats}
&	27.76 	&	27.42 	&	27.52 	&	27.55 	&	27.90 	&	27.69 	&	27.09 	&	\textbf{27.91} 	&	26.74 	&	26.31 	&	26.37 	&	26.38 	&	26.82 	&	26.41 	&	25.81 	&	\textbf{26.93}

\\
\hline
				\multirow{1}{*}{C. man}
&	27.18 	&	27.03 	&	27.10 	&	26.56 	&	\textbf{27.34} 	&	26.93 	&	26.75 	&	27.33 	&	26.13 	&	26.01 	&	26.13 	&	25.62 	&	\textbf{26.46} 	&	25.93 	&	25.35 	&	26.36
\\
\hline
				\multirow{1}{*}{Fence}
&	26.84 	&	25.74 	&	26.76 	&	26.42 	&	26.91 	&	26.72 	&	26.47 	&	\textbf{27.16} 	&	25.92 	&	24.57 	&	25.77 	&	25.49 	&	25.94 	&	25.52 	&	25.22 	&	\textbf{26.24}
\\
\hline
				\multirow{1}{*}{Foreman}
&	31.29 	&	30.28 	&	31.52 	&	30.90 	&	31.55 	&	31.64 	&	30.04 	&	\textbf{31.99} 	&	30.36 	&	29.20 	&	30.41 	&	29.60 	&	30.45 	&	30.00 	&	28.69 	&	\textbf{30.75}
\\
\hline
				\multirow{1}{*}{House}
&	30.65 	&	29.89 	&	30.79 	&	30.25 	&	31.02 	&	30.68 	&	29.49 	&	\textbf{31.38} 	&	29.69 	&	28.79 	&	29.61 	&	28.99 	&	29.93 	&	29.17 	&	28.00 	&	\textbf{30.39}
\\
\hline
				\multirow{1}{*}{J. Bean}
&	30.21 	&	29.96 	&	30.49 	&	29.97 	&	30.39 	&	30.45 	&	29.23 	&	\textbf{30.91} 	&	29.26 	&	28.73 	&	29.24 	&	28.66 	&	29.20 	&	28.94 	&	27.77 	&	\textbf{29.55}
\\
\hline
				\multirow{1}{*}{Leaves}
&	25.69 	&	25.62 	&	26.20 	&	25.45 	&	26.29 	&	26.06 	&	25.91 	&	\textbf{26.90} 	&	24.68 	&	24.39 	&	24.94 	&	24.28 	&	25.03 	&	24.63 	&	24.22 	&	\textbf{25.56}
\\
\hline
				\multirow{1}{*}{Lena}
&	27.82 	&	27.78 	&	28.00 	&	27.78 	&	28.22 	&	28.04 	&	27.36 	&	\textbf{28.11} 	&	26.90 	&	26.68 	&	26.94 	&	26.70 	&	27.15 	&	26.78 	&	26.15 	&	\textbf{27.10}
\\
\hline
				\multirow{1}{*}{Lin}
&	29.52 	&	29.32 	&	29.27 	&	29.40 	&	\textbf{29.73} 	&	29.55 	&	28.63 	&	29.53 	&	28.71 	&	28.26 	&	28.23 	&	28.31 	&	\textbf{28.79} 	&	28.35 	&	27.33 	&	28.51
\\
\hline
				\multirow{1}{*}{Monarch}
&	26.72 	&	26.89 	&	26.81 	&	26.43 	&	27.02 	&	27.00 	&	26.68 	&	\textbf{27.34} 	&	25.82 	&	25.78 	&	25.73 	&	25.41 	&	26.00 	&	25.78 	&	25.30 	&	\textbf{26.22}
\\
\hline
				\multirow{1}{*}{Parrot}
&	28.64 	&	28.60 	&	28.77 	&	28.38 	&	28.95 	&	28.93 	&	28.30 	&	\textbf{29.09} 	&	27.88 	&	27.53 	&	27.67 	&	27.26 	&	27.91 	&	27.67 	&	26.77 	&	\textbf{28.06}

\\
\hline
				\multirow{1}{*}{Starfish}
&	26.06 	&	26.12 	&	26.17 	&	25.70 	&	26.21 	&	26.00 	&	25.87 	&	\textbf{26.49} 	&	25.04 	&	25.05 	&	25.06 	&	24.71 	&	25.11 	&	24.84 	&	24.58 	&	\textbf{25.39}

\\
\hline
				\multirow{1}{*}{\textbf{Average}}
&	28.04 	&	27.69 	&	28.10 	&	27.79 	&	28.29 	&	28.14 	&	27.52 	&	\textbf{28.51} 	&	27.09 	&	26.58 	&	26.99 	&	26.67 	&	27.22 	&	26.85 	&	26.14 	&	\textbf{27.41}
\\
\hline

				\multicolumn{1}{|c|}{}&\multicolumn{8}{|c||}{$\sigma_n=75$}&\multicolumn{8}{|c|}{$\sigma_n=100$}\\
				\hline
				\multirow{2}{*}{\textbf{{Images}}}&\multirow{2}{*}{\textbf{{BM3D}}}
				&\multirow{2}{*}{\textbf{{EPLL}}}&\multirow{2}{*}{\textbf{{NCSR}}}&\multirow{2}{*}{\textbf{{Plow}}}&\multirow{2}{*}{\textbf{{PGPD}}}
				&\multirow{2}{*}{\textbf{{OGLR}}}&{\textbf{{GSR-}}}&{\textbf{{GSR-}}}&\multirow{2}{*}{\textbf{{BM3D}}}&\multirow{2}{*}{\textbf{{EPLL}}}
                &\multirow{2}{*}{\textbf{{NCSR}}}&\multirow{2}{*}{\textbf{{Plow}}}&\multirow{2}{*}{\textbf{{PGPD}}}&\multirow{2}{*}{\textbf{{OGLR}}}
                &{\textbf{{GSR-}}}&{\textbf{{GSR-}}}\\
				& &  & & & & &{\textbf{NNM}} &{\textbf{WNNM}} & & & & & & & {\textbf{NNM}}& {\textbf{WNNM}}  \\
				\hline
				\multirow{1}{*}{Airplane}
&	23.99 	&	22.78 	&	23.76 	&	23.67 	&	24.15 	&	23.79 	&	23.15 	&	\textbf{24.16} 	&	22.89 	&	22.78 	&	22.60 	&	22.30 	&	23.02 	&	22.31 	&	19.09 	&	\textbf{23.06}
\\
\hline
				\multirow{1}{*}{Barbara}
&	24.53 	&	21.89 	&	24.06 	&	24.30 	&	24.39 	&	\textbf{24.52} 	&	23.58 	&	24.44 	&	23.20 	&	21.89 	&	22.70 	&	22.86 	&	23.11 	&	22.73 	&	22.01 	&	\textbf{23.25}
\\
\hline
				\multirow{1}{*}{boats}
&	24.82 	&	23.01 	&	24.44 	&	24.23 	&	24.83 	&	24.40 	&	23.64 	&	\textbf{24.92} 	&	23.47 	&	23.01 	&	22.98 	&	22.69 	&	23.47 	&	22.74 	&	22.07 	&	\textbf{23.50}
\\
\hline
				\multirow{1}{*}{C. man}
&	24.33 	&	22.84 	&	24.20 	&	23.64 	&	\textbf{24.64} 	&	24.00 	&	23.30 	&	24.57 	&	23.08 	&	22.84 	&	22.91 	&	22.22 	&	23.23 	&	22.50 	&	21.56 	&	\textbf{23.31}
\\
\hline
				\multirow{1}{*}{Fence}
&	24.22 	&	21.10 	&	23.75 	&	23.57 	&	24.18 	&	23.94 	&	23.22 	&	\textbf{24.46} 	&	22.92 	&	21.10 	&	22.23 	&	22.17 	&	22.87 	&	22.36 	&	21.62 	&	\textbf{23.29}
\\
\hline
				\multirow{1}{*}{Foreman}
&	28.07 	&	25.91 	&	28.18 	&	27.15 	&	28.39 	&	27.96 	&	26.18 	&	\textbf{28.77} 	&	26.51 	&	25.91 	&	26.55 	&	25.55 	&	26.81 	&	26.11 	&	24.79 	&	\textbf{27.44}
\\
\hline
				\multirow{1}{*}{House}
&	27.51 	&	25.21 	&	27.16 	&	26.52 	&	27.81 	&	27.10 	&	25.56 	&	\textbf{28.41} 	&	25.87 	&	25.21 	&	25.49 	&	24.72 	&	26.17 	&	25.07 	&	23.66 	&	\textbf{26.71}
\\
\hline
				\multirow{1}{*}{J. Bean}
&	27.22 	&	25.16 	&	27.15 	&	26.23 	&	27.07 	&	26.48 	&	25.23 	&	\textbf{27.47} 	&	25.80 	&	25.16 	&	25.61 	&	24.55 	&	25.66 	&	24.57 	&	23.73 	&	\textbf{26.20}
\\
\hline
				\multirow{1}{*}{Leaves}
&	22.49 	&	20.26 	&	22.60 	&	22.02 	&	22.61 	&	22.20 	&	21.79 	&	\textbf{23.24} 	&	20.90 	&	20.26 	&	20.84 	&	20.43 	&	20.95 	&	20.28 	&	19.57 	&	\textbf{21.56}
\\
\hline
				\multirow{1}{*}{Lena}
&	25.17 	&	23.46 	&	25.02 	&	24.64 	&	25.30 	&	24.90 	&	24.08 	&	\textbf{25.35} 	&	23.87 	&	23.46 	&	23.63 	&	23.19 	&	24.02 	&	23.18 	&	22.30 	&	\textbf{24.29}
\\
\hline
				\multirow{1}{*}{Lin}
&	26.96 	&	25.05 	&	26.22 	&	26.08 	&	\textbf{27.05} 	&	26.36 	&	25.07 	&	26.88 	&	25.60 	&	25.05 	&	24.85 	&	24.47 	&	\textbf{25.66} 	&	24.63 	&	23.36 	&	25.60
\\
\hline
				\multirow{1}{*}{Monarch}
&	23.91 	&	22.24 	&	23.67 	&	23.34 	&	24.00 	&	23.73 	&	23.06 	&	\textbf{24.28} 	&	22.52 	&	22.24 	&	22.10 	&	21.83 	&	22.56 	&	21.87 	&	21.03 	&	\textbf{22.86}
\\
\hline
				\multirow{1}{*}{Parrot}
&	25.94 	&	24.08 	&	25.45 	&	25.15 	&	25.98 	&	25.74 	&	24.54 	&	\textbf{26.08} 	&	24.60 	&	24.08 	&	23.94 	&	23.65 	&	24.52 	&	24.03 	&	22.84 	&	\textbf{24.67}
\\
\hline
				\multirow{1}{*}{Starfish}
&	23.27 	&	21.92 	&	23.18 	&	22.82 	&	23.23 	&	23.00 	&	22.52 	&	\textbf{23.32} 	&	\textbf{22.10} 	&	21.92 	&	21.91 	&	21.48 	&	22.08 	&	21.52 	&	20.97 	&	22.02
\\
\hline
				\multirow{1}{*}{\textbf{Average}}
&	25.17 	&	23.21 	&	24.92 	&	24.53 	&	25.26 	&	24.87 	&	23.92 	&	\textbf{25.45} 	&	23.81 	&	23.21 	&	23.45 	&	23.01 	&	23.87 	&	23.14 	&	22.04 	&	\textbf{24.13}
\\
\hline
			\end{tabular}}
			\label{Tab:1}
\vspace{-3mm}
		\end{table*}

	\begin{table*}[!htbp]
\vspace{-3mm}
			\caption{SSIM comparison of BM3D \cite{39}, EPLL \cite{49}, NCSR \cite{50}, Plow \cite{51}, PGPD \cite{52}, OGLR \cite{53}, GSR-NNM and GSR-WNNM for image denoising.}
		\centering
		\resizebox{1.00\textwidth}{!}
		{
			\begin{tabular}{|c|c|c|c|c|c|c|c|c||c|c|c|c|c|c|c|c|c|c|c|c|c|c|c|c|}
				\hline
				\multicolumn{1}{|c|}{}&\multicolumn{8}{|c||}{$\sigma_n=20$}&\multicolumn{8}{|c|}{$\sigma_n=30$}\\
				\hline
				\multirow{2}{*}{\textbf{{Images}}}&\multirow{2}{*}{\textbf{{BM3D}}}
				&\multirow{2}{*}{\textbf{{EPLL}}}&\multirow{2}{*}{\textbf{{NCSR}}}&\multirow{2}{*}{\textbf{{Plow}}}&\multirow{2}{*}{\textbf{{PGPD}}}
				&\multirow{2}{*}{\textbf{{OGLR}}}&{\textbf{{GSR-}}}&{\textbf{{GSR-}}}&\multirow{2}{*}{\textbf{{BM3D}}}&\multirow{2}{*}{\textbf{{EPLL}}}
                &\multirow{2}{*}{\textbf{{NCSR}}}&\multirow{2}{*}{\textbf{{Plow}}}&\multirow{2}{*}{\textbf{{PGPD}}}&\multirow{2}{*}{\textbf{{OGLR}}}
                &{\textbf{{GSR-}}}&{\textbf{{GSR-}}}\\
				& &  & & & & &{\textbf{NNM}} &{\textbf{WNNM}} & & & & & & & {\textbf{NNM}}& {\textbf{WNNM}}  \\
				\hline
				\multirow{1}{*}{Airplane}
&	0.9006 	&	0.9017 	&	0.9016 	&	0.8928 	&	0.8992 	&	0.8964 	&	0.8486 	&	\textbf{0.9042} 	&	0.8631 	&	0.8628 	&	0.8660 	&	0.8532 	&	0.8646 	&	0.8588 	&	0.7441 	&	\textbf{0.8713}
\\
\hline
				\multirow{1}{*}{Barbara}
&	0.9099 	&	0.8864 	&	0.9073 	&	0.9002 	&	0.9051 	&	0.9036 	&	0.8770 	&	\textbf{0.9122} 	&	0.8618 	&	0.8209 	&	0.8524 	&	0.8597 	&	0.8565 	&	0.8573 	&	0.7924 	&	\textbf{0.8674}
\\
\hline
				\multirow{1}{*}{boats}
&	\textbf{0.8890} 	&	0.8805 	&	0.8831 	&	0.8766 	&	0.8852 	&	0.8857 	&	0.8502 	&	0.8889 	&	\textbf{0.8424} 	&	0.8317 	&	0.8346 	&	0.8289 	&	0.8404 	&	0.8357 	&	0.7571 	&	0.8402
\\
\hline
				\multirow{1}{*}{C. man}
&	0.8755 	&	\textbf{0.8817} 	&	0.8765 	&	0.8614 	&	0.8624 	&	0.8777 	&	0.8118 	&	0.8612 	&	0.8373 	&	\textbf{0.8316} 	&	0.8382 	&	0.8216 	&	0.8259 	&	0.8313 	&	0.7114 	&	0.8276
\\
\hline
				\multirow{1}{*}{Fence}
&	0.8762 	&	0.8698 	&	0.8767 	&	0.8561 	&	0.8714 	&	\textbf{0.8807} 	&	0.8378 	&	0.8701 	&	\textbf{0.8326} 	&	0.8150 	&	0.8298 	&	0.8182 	&	0.8255 	&	0.8344 	&	0.7785 	&	0.8241

\\
\hline
				\multirow{1}{*}{Foreman}
&	0.9076 	&	0.8955 	&	0.9065 	&	0.9023 	&	0.9023 	&	0.9048 	&	0.8664 	&	\textbf{0.9098} 	&	0.8823 	&	0.8617 	&	0.8846 	&	0.8698 	&	0.8818 	&	0.8789 	&	0.7216 	&	\textbf{0.8950}
\\
\hline
				\multirow{1}{*}{House}
&	0.8726 	&	0.8609 	&	0.8735 	&	0.8710 	&	0.8693 	&	\textbf{0.8775} 	&	0.8325 	&	0.8688 	&	0.8480 	&	0.8338 	&	0.8479 	&	0.8383 	&	0.8471 	&	0.8448 	&	0.7118 	&	\textbf{0.8534}

\\
\hline
				\multirow{1}{*}{J. Bean}
&	0.9582 	&	0.9523 	&	\textbf{0.9632} 	&	0.9554 	&	0.9508 	&	0.9592 	&	0.8904 	&	0.9623 	&	0.9357 	&	0.9240 	&	0.9435 	&	0.9204 	&	0.9317 	&	0.9361 	&	0.7572 	&	\textbf{0.9509}

\\
\hline
				\multirow{1}{*}{Leaves}
&	0.9534 	&	0.9480 	&	0.9555 	&	0.9376 	&	0.9562 	&	0.9521 	&	0.9360 	&	\textbf{0.9618} 	&	0.9278 	&	0.9197 	&	0.9311 	&	0.9057 	&	0.9300 	&	0.9266 	&	0.8780 	&	\textbf{0.9383}

\\
\hline
				\multirow{1}{*}{Lena}
&	0.8985 	&	0.8913 	&	0.8979 	&	0.8891 	&	0.8981 	&	0.8944 	&	0.8597 	&	\textbf{0.9006} 	&	0.8584 	&	0.8477 	&	0.8580 	&	0.8493 	&	0.8622 	&	0.8560 	&	0.7543 	&	\textbf{0.8647}

\\
\hline
				\multirow{1}{*}{Lin}
&	\textbf{0.9017} 	&	0.8942 	&	0.8983 	&	0.8982 	&	0.8910 	&	0.8990 	&	0.8404 	&	0.8927 	&	\textbf{0.8672} 	&	0.8546 	&	0.8632 	&	0.8588 	&	0.8606 	&	0.8592 	&	0.7055 	&	0.8627

\\
\hline
				\multirow{1}{*}{Monarch}
&	0.9179 	&	0.9166 	&	0.9192 	&	0.9097 	&	0.9187 	&	0.9171 	&	0.8921 	&	\textbf{0.9232} 	&	0.8822 	&	0.8789 	&	0.8829 	&	0.8714 	&	0.8853 	&	0.8831 	&	0.7980 	&	\textbf{0.8918}

\\
\hline
				\multirow{1}{*}{Parrot}
&	\textbf{0.9002} 	&	0.8924 	&	0.8995 	&	0.8952 	&	0.8945 	&	0.8941 	&	0.8568 	&	0.8981 	&	0.8705 	&	0.8569 	&	0.8705 	&	0.8617 	&	0.8681 	&	0.8609 	&	0.7337 	&	\textbf{0.8733}

\\
\hline
				\multirow{1}{*}{Starfish}
&	0.8748 	&	\textbf{0.8756} 	&	0.8748 	&	0.8561 	&	0.8756 	&	0.8676 	&	0.8509 	&	0.8752 	&	\textbf{0.8289} 	&	0.8248 	&	0.8283 	&	0.8075 	&	0.8277 	&	0.8195 	&	0.7725 	&	0.8289

\\
\hline
				\multirow{1}{*}{\textbf{Average}}
&	\textbf{0.9026} 	&	0.8962 	&	0.9024 	&	0.8930 	&	0.8985 	&	0.9007 	&	0.8608 	&	0.9021 	&	0.8670 	&	0.8546 	&	0.8665 	&	0.8546 	&	0.8648 	&	0.8631 	&	0.7583 	&	\textbf{0.8707}

\\
\hline
				\multicolumn{1}{|c|}{}&\multicolumn{8}{|c||}{$\sigma_n=40$}&\multicolumn{8}{|c|}{$\sigma_n=50$}\\
				\hline
				\multirow{2}{*}{\textbf{{Images}}}&\multirow{2}{*}{\textbf{{BM3D}}}
				&\multirow{2}{*}{\textbf{{EPLL}}}&\multirow{2}{*}{\textbf{{NCSR}}}&\multirow{2}{*}{\textbf{{Plow}}}&\multirow{2}{*}{\textbf{{PGPD}}}
				&\multirow{2}{*}{\textbf{{OGLR}}}&{\textbf{{GSR-}}}&{\textbf{{GSR-}}}&\multirow{2}{*}{\textbf{{BM3D}}}&\multirow{2}{*}{\textbf{{EPLL}}}
                &\multirow{2}{*}{\textbf{{NCSR}}}&\multirow{2}{*}{\textbf{{Plow}}}&\multirow{2}{*}{\textbf{{PGPD}}}&\multirow{2}{*}{\textbf{{OGLR}}}
                &{\textbf{{GSR-}}}&{\textbf{{GSR-}}}\\
				& &  & & & & &{\textbf{NNM}} &{\textbf{WNNM}} & & & & & & & {\textbf{NNM}}& {\textbf{WNNM}}  \\
				\hline
				\multirow{1}{*}{Airplane}
&	0.8277 	&	0.8264 	&	0.8330 	&	0.8122 	&	0.8345 	&	0.8289 	&	0.7439 	&	\textbf{0.8434} 	&	0.8044 	&	0.7922 	&	0.8066 	&	0.7698 	&	0.8059 	&	0.7848 	&	0.6839 	&	\textbf{0.8157}

\\
\hline
				\multirow{1}{*}{Barbara}
&	0.8070 	&	0.7533 	&	0.8006 	&	0.8141 	&	0.8077 	&	0.8172 	&	0.7639 	&	\textbf{0.8210} 	&	0.7698 	&	0.6943 	&	0.7572 	&	0.7663 	&	0.7613 	&	0.7630 	&	0.7004 	&	\textbf{0.7814}

\\
\hline
				\multirow{1}{*}{boats}
&	0.7997 	&	0.7888 	&	0.7906 	&	0.7832 	&	\textbf{0.8021} 	&	0.7971 	&	0.7412 	&	0.8014 	&	0.7667 	&	0.7504 	&	0.7541 	&	0.7396 	&	0.7683 	&	0.7477 	&	0.6830 	&	\textbf{0.7747}

\\
\hline
				\multirow{1}{*}{C. man}
&	\textbf{0.8057} 	&	0.7932 	&	0.8019 	&	0.7832 	&	0.7997 	&	0.7953 	&	0.7171 	&	0.8040 	&	0.7828 	&	0.7617 	&	\textbf{0.7832} 	&	0.7459 	&	0.7774 	&	0.7561 	&	0.6519 	&	0.7791

\\
\hline
				\multirow{1}{*}{Fence}
&	0.7961 	&	0.7640 	&	0.7805 	&	0.7828 	&	0.7908 	&	\textbf{0.7975} 	&	0.7536 	&	0.7918 	&	0.7621 	&	0.7162 	&	0.7476 	&	0.7496 	&	0.7573 	&	0.7565 	&	0.6988 	&	\textbf{0.7633}

\\
\hline
				\multirow{1}{*}{Foreman}
&	0.8565 	&	0.8315 	&	0.8723 	&	0.8354 	&	0.8621 	&	0.8610 	&	0.7532 	&	\textbf{0.8775} 	&	0.8445 	&	0.8051 	&	0.8559 	&	0.7976 	&	0.8410 	&	0.8198 	&	0.6983 	&	\textbf{0.8578}

\\
\hline
				\multirow{1}{*}{House}
&	0.8256 	&	0.8089 	&	0.8323 	&	0.8058 	&	0.8302 	&	0.8218 	&	0.7342 	&	\textbf{0.8417} 	&	0.8122 	&	0.7845 	&	0.8160 	&	0.7699 	&	0.8125 	&	0.7824 	&	0.6780 	&	\textbf{0.8283}

\\
\hline
				\multirow{1}{*}{J. Bean}
&	0.9122 	&	0.8956 	&	0.9296 	&	0.8847 	&	0.9133 	&	0.9137 	&	0.7916 	&	\textbf{0.9341} 	&	0.9006 	&	0.8677 	&	\textbf{0.9134} 	&	0.8430 	&	0.8934 	&	0.8737 	&	0.7293 	&	0.9119

\\
\hline
				\multirow{1}{*}{Leaves}
&	0.8961 	&	0.8916 	&	0.9028 	&	0.8701 	&	0.9039 	&	0.8902 	&	0.8694 	&	\textbf{0.9177} 	&	0.8680 	&	0.8638 	&	0.8787 	&	0.8354 	&	0.8794 	&	0.8484 	&	0.8250 	&	\textbf{0.8952}

\\
\hline
				\multirow{1}{*}{Lena}
&	0.8178 	&	0.8092 	&	0.8280 	&	0.8081 	&	0.8297 	&	0.8250 	&	0.7505 	&	\textbf{0.8303} 	&	0.7920 	&	0.7732 	&	0.8009 	&	0.7691 	&	0.7990 	&	0.7764 	&	0.6966 	&	\textbf{0.8028}

\\
\hline
				\multirow{1}{*}{Lin}
&	\textbf{0.8369} 	&	0.8210 	&	0.8385 	&	0.8197 	&	0.8351 	&	0.8301 	&	0.7263 	&	0.8348 	&	0.8170 	&	0.7908 	&	0.8171 	&	0.7806 	&	\textbf{0.8118} 	&	0.7871 	&	0.6649 	&	0.8110

\\
\hline
				\multirow{1}{*}{Monarch}
&	0.8446 	&	0.8441 	&	0.8522 	&	0.8316 	&	0.8549 	&	0.8512 	&	0.7944 	&	\textbf{0.8590} 	&	0.8200 	&	0.8124 	&	0.8252 	&	0.7910 	&	0.8269 	&	0.8038 	&	0.7428 	&	\textbf{0.8286}

\\
\hline
				\multirow{1}{*}{Parrot}
&	0.8428 	&	0.8265 	&	0.8491 	&	0.8251 	&	0.8464 	&	0.8369 	&	0.7532 	&	\textbf{0.8511} 	&	0.8273 	&	0.7998 	&	0.8310 	&	0.7872 	&	0.8246 	&	0.7949 	&	0.6952 	&	\textbf{0.8328}

\\
\hline
				\multirow{1}{*}{Starfish}
&	0.7828 	&	0.7802 	&	0.7812 	&	0.7608 	&	0.7855 	&	0.7773 	&	0.7448 	&	\textbf{0.7914} 	&	0.7433 	&	0.7392 	&	0.7440 	&	0.7175 	&	0.7457 	&	0.7258 	&	0.6887 	&	\textbf{0.7567}

\\
\hline
				\multirow{1}{*}{\textbf{Average}}
&	0.8323 	&	0.8167 	&	0.8352 	&	0.8155 	&	0.8354 	&	0.8317 	&	0.7598 	&	\textbf{0.8428} 	&	0.8079 	&	0.7822 	&	0.8094 	&	0.7759 	&	0.8075 	&	0.7872 	&	0.7026 	&	\textbf{0.8171}

\\
\hline

				\multicolumn{1}{|c|}{}&\multicolumn{8}{|c||}{$\sigma_n=75$}&\multicolumn{8}{|c|}{$\sigma_n=100$}\\
				\hline
				\multirow{2}{*}{\textbf{{Images}}}&\multirow{2}{*}{\textbf{{BM3D}}}
				&\multirow{2}{*}{\textbf{{EPLL}}}&\multirow{2}{*}{\textbf{{NCSR}}}&\multirow{2}{*}{\textbf{{Plow}}}&\multirow{2}{*}{\textbf{{PGPD}}}
				&\multirow{2}{*}{\textbf{{OGLR}}}&{\textbf{{GSR-}}}&{\textbf{{GSR-}}}&\multirow{2}{*}{\textbf{{BM3D}}}&\multirow{2}{*}{\textbf{{EPLL}}}
                &\multirow{2}{*}{\textbf{{NCSR}}}&\multirow{2}{*}{\textbf{{Plow}}}&\multirow{2}{*}{\textbf{{PGPD}}}&\multirow{2}{*}{\textbf{{OGLR}}}
                &{\textbf{{GSR-}}}&{\textbf{{GSR-}}}\\
				& &  & & & & &{\textbf{NNM}} &{\textbf{WNNM}} & & & & & & & {\textbf{NNM}}& {\textbf{WNNM}}  \\
				\hline
				\multirow{1}{*}{Airplane}
&	0.7488 	&	0.6523 	&	0.7547 	&	0.6589 	&	0.7492 	&	0.7174 	&	0.5493 	&	\textbf{0.7698} 	&	0.7036 	&	0.6523 	&	0.7107 	&	0.5698 	&	0.6947 	&	0.6400 	&	0.5005 	&	\textbf{0.7302}

\\
\hline
				\multirow{1}{*}{Barbara}
&	0.6798 	&	0.5135 	&	0.6616 	&	0.6548 	&	0.6729 	&	\textbf{0.6791} 	&	0.5691 	&	0.6739 	&	0.6092 	&	0.5135 	&	0.5960 	&	0.5647 	&	0.6039 	&	0.5755 	&	0.5026 	&	\textbf{0.6171}

\\
\hline
				\multirow{1}{*}{boats}
&	0.6939 	&	0.5988 	&	0.6876 	&	0.6386 	&	0.6963 	&	0.6637 	&	0.5524 	&	\textbf{0.7058} 	&	0.6375 	&	0.5988 	&	0.6294 	&	0.5548 	&	0.6355 	&	0.5764 	&	0.4880 	&	\textbf{0.6510}

\\
\hline
				\multirow{1}{*}{C. man}
&	0.7340 	&	0.6351 	&	0.7412 	&	0.6311 	&	0.7301 	&	0.6792 	&	0.5164 	&	\textbf{0.7467} 	&	0.6928 	&	0.6351 	&	0.7067 	&	0.5304 	&	0.6776 	&	0.6088 	&	0.4671 	&	\textbf{0.7158}

\\
\hline
				\multirow{1}{*}{Fence}
&	0.6962 	&	0.5252 	&	0.6742 	&	0.6586 	&	0.6872 	&	0.6848 	&	0.5890 	&	\textbf{0.6984} 	&	0.6362 	&	0.5252 	&	0.6009 	&	0.5727 	&	0.6226 	&	0.6119 	&	0.5044 	&	\textbf{0.6501}

\\
\hline
				\multirow{1}{*}{Foreman}
&	0.7933 	&	0.6949 	&	0.8171 	&	0.7067 	&	0.7965 	&	0.7673 	&	0.5524 	&	\textbf{0.8247} 	&	0.7489 	&	0.6949 	&	0.7833 	&	0.6329 	&	0.7452 	&	0.6983 	&	0.5160 	&	\textbf{0.8007}

\\
\hline
				\multirow{1}{*}{House}
&	0.7645 	&	0.6695 	&	0.7749 	&	0.6733 	&	0.7709 	&	0.7230 	&	0.5439 	&	\textbf{0.8038} 	&	0.7203 	&	0.6695 	&	0.7397 	&	0.5874 	&	0.7195 	&	0.6373 	&	0.4918 	&	\textbf{0.7756}

\\
\hline
				\multirow{1}{*}{J. Bean}
&	0.8573 	&	0.7429 	&	0.8792 	&	0.7422 	&	0.8503 	&	0.8088 	&	0.5796 	&	\textbf{0.8857} 	&	0.8181 	&	0.7429 	&	0.8472 	&	0.6574 	&	0.7999 	&	0.7331 	&	0.5341 	&	\textbf{0.8640}

\\
\hline
				\multirow{1}{*}{Leaves}
&	0.8072 	&	0.7163 	&	0.8234 	&	0.7512 	&	0.8121 	&	0.7763 	&	0.7265 	&	\textbf{0.8473} 	&	0.7482 	&	0.7163 	&	0.7622 	&	0.6814 	&	0.7469 	&	0.6827 	&	0.6345 	&	\textbf{0.7964}

\\
\hline
				\multirow{1}{*}{Lena}
&	0.7288 	&	0.6345 	&	0.7415 	&	0.6723 	&	0.7356 	&	0.7061 	&	0.5647 	&	\textbf{0.7484} 	&	0.6739 	&	0.6345 	&	0.6906 	&	0.5895 	&	0.6780 	&	0.6215 	&	0.5093 	&	\textbf{0.7124}

\\
\hline
				\multirow{1}{*}{Lin}
&	0.7673 	&	0.6669 	&	0.7730 	&	0.6722 	&	0.7669 	&	0.7189 	&	0.5175 	&	\textbf{0.7821} 	&	0.7262 	&	0.6669 	&	0.7393 	&	0.5907 	&	0.7151 	&	0.6406 	&	0.4858 	&	\textbf{0.7547}

\\
\hline
				\multirow{1}{*}{Monarch}
&	0.7557 	&	0.6771 	&	0.7648 	&	0.6917 	&	0.7642 	&	0.7378 	&	0.6206 	&	\textbf{0.7754} 	&	0.7021 	&	0.6771 	&	0.7109 	&	0.6102 	&	0.7029 	&	0.6419 	&	0.5596 	&	\textbf{0.7307}

\\
\hline
				\multirow{1}{*}{Parrot}
&	0.7771 	&	0.6844 	&	0.7892 	&	0.6859 	&	0.7775 	&	0.7333 	&	0.5567 	&	\textbf{0.8005} 	&	0.7345 	&	0.6844 	&	0.7518 	&	0.6096 	&	0.7251 	&	0.6531 	&	0.5197 	&	\textbf{0.7713}

\\
\hline
				\multirow{1}{*}{Starfish}
&	0.6670 	&	0.5799 	&	0.6685 	&	0.6192 	&	0.6638 	&	0.6446 	&	0.5617 	&	\textbf{0.6717} 	&	0.6053 	&	0.5799 	&	0.6062 	&	0.5403 	&	0.6018 	&	0.5528 	&	0.4979 	&	\textbf{0.6090}

\\
\hline
				\multirow{1}{*}{\textbf{Average}}
&	0.7479 	&	0.6422 	&	0.7536 	&	0.6755 	&	0.7481 	&	0.7172 	&	0.5714 	&	\textbf{0.7667} 	&	0.6969 	&	0.6422 	&	0.7054 	&	0.5923 	&	0.6906 	&	0.6339 	&	0.5151 	&	\textbf{0.7271}

\\
\hline
         \end{tabular}}
         \vspace{-3mm}
			\label{Tab:2}
		\end{table*}

	\begin{table*}[!htbp]
\vspace{-3mm}
			\caption{PSNR ($\textnormal{d}$B) comparison of SALSA \cite{54}, BPFA \cite{44}, IPPO \cite{55}, JSM \cite{56}, Aloha \cite{57}, NGS \cite{58}, GSR-NNM and GSR-WNNM for image inpainting.}
		\centering
		\resizebox{1.00\textwidth}{!}
		{
			\begin{tabular}{|c|c|c|c|c|c|c|c|c||c|c|c|c|c|c|c|c|c|c|c|c|c|c|c|c|}
				\hline
				\multicolumn{1}{|c|}{}&\multicolumn{8}{|c||}{80\% pixels missing}&\multicolumn{8}{|c|}{70\% pixels missing}\\
				\hline
				\multirow{2}{*}{\textbf{{Images}}}&\multirow{2}{*}{\textbf{{SALSA}}}
				&\multirow{2}{*}{\textbf{{BPFA}}}&\multirow{2}{*}{\textbf{{IPPO}}}&\multirow{2}{*}{\textbf{{JSM}}}&\multirow{2}{*}{\textbf{{Aloha}}}
				&\multirow{2}{*}{\textbf{{NGS}}}&{\textbf{{GSR-}}}&{\textbf{{GSR-}}}&\multirow{2}{*}{\textbf{{SALSA}}}&\multirow{2}{*}{\textbf{{BPFA}}}
                &\multirow{2}{*}{\textbf{{IPPO}}}&\multirow{2}{*}{\textbf{{JSM}}}&\multirow{2}{*}{\textbf{{Aloha}}}&\multirow{2}{*}{\textbf{{NGS}}}
                &{\textbf{{GSR-}}}&{\textbf{{GSR-}}}\\
				& &  & & & & &{\textbf{NNM}} &{\textbf{WNNM}} & & & & & & & {\textbf{NNM}}& {\textbf{WNNM}}  \\
				\hline
				\multirow{1}{*}{Cowboy}
&	23.72 	&	24.93 	&	25.38 	&	25.40 	&	25.06 	&	24.21 	&	25.27 	&	\textbf{25.71} 	&	25.70 	&	26.76 	&	27.40 	&	27.11 	&	27.24 	&	26.19 	&	25.27 	&	\textbf{27.90}
\\
\hline
				\multirow{1}{*}{Light}
&	18.27 	&	19.23 	&	21.49 	&	20.23 	&	21.50 	&	18.52 	&	20.39 	&	\textbf{22.09} 	&	19.32 	&	21.58 	&	23.47 	&	23.12 	&	23.17 	&	20.78 	&	20.39 	&	\textbf{24.00}
\\
\hline
				\multirow{1}{*}{Mickey}
&	24.46 	&	24.53 	&	26.33 	&	26.09 	&	25.33 	&	24.50 	&	25.77 	&	\textbf{26.66} 	&	25.98 	&	26.16 	&	28.59 	&	28.25 	&	27.11 	&	26.68 	&	25.77 	&	\textbf{29.16}
\\
\hline
				\multirow{1}{*}{Butterfly}
&	22.85 	&	24.04 	&	25.13 	&	25.57 	&	24.88 	&	23.85 	&	25.53 	&	\textbf{26.46} 	&	25.06 	&	26.68 	&	27.68 	&	27.97 	&	27.29 	&	26.36 	&	25.53 	&	\textbf{29.19}
\\
\hline
				\multirow{1}{*}{Haight}
&	18.57 	&	19.42 	&	20.90 	&	21.37 	&	20.62 	&	18.76 	&	20.73 	&	\textbf{21.43} 	&	19.95 	&	21.46 	&	23.02 	&	23.01 	&	22.12 	&	21.03 	&	20.73 	&	\textbf{23.56}
\\
\hline
				\multirow{1}{*}{Lake}
&	24.94 	&	25.82 	&	25.48 	&	25.82 	&	25.32 	&	25.10 	&	25.73 	&	\textbf{26.15} 	&	26.76 	&	27.93 	&	27.56 	&	27.88 	&	27.58 	&	27.01 	&	25.73 	&	\textbf{28.55}
\\
\hline
				\multirow{1}{*}{Leaves}
&	22.03 	&	23.78 	&	25.56 	&	26.18 	&	25.90 	&	23.87 	&	25.57 	&	\textbf{27.10} 	&	24.36 	&	26.98 	&	28.58 	&	29.28 	&	29.04 	&	26.44 	&	25.57 	&	\textbf{30.55}
\\
\hline
				\multirow{1}{*}{Starfish}
&	25.70 	&	26.79 	&	26.30 	&	27.07 	&	26.33 	&	26.17 	&	26.98 	&	\textbf{27.66} 	&	27.55 	&	28.93 	&	28.91 	&	29.36 	&	28.22 	&	28.35 	&	26.98 	&	\textbf{30.07}
\\
\hline
				\multirow{1}{*}{Flower}
&	26.81 	&	27.30 	&	27.70 	&	27.41 	&	27.49 	&	27.03 	&	27.74 	&	\textbf{27.75} 	&	28.24 	&	28.92 	&	29.20 	&	29.06 	&	29.02 	&	28.48 	&	27.74 	&	\textbf{29.96}
\\
\hline
				\multirow{1}{*}{Nanna}
&	24.12 	&	24.71 	&	25.60 	&	25.33 	&	25.54 	&	24.58 	&	25.45 	&	\textbf{25.89} 	&	25.44 	&	26.62 	&	27.44 	&	27.34 	&	27.43 	&	26.35 	&	25.45 	&	\textbf{28.11}
\\
\hline
				\multirow{1}{*}{Corn}
&	24.28 	&	25.54 	&	25.14 	&	25.58 	&	25.60 	&	24.74 	&	25.67 	&	\textbf{26.94} 	&	26.11 	&	27.82 	&	27.77 	&	27.66 	&	27.95 	&	26.77 	&	25.67 	&	\textbf{28.84}
\\
\hline
				\multirow{1}{*}{Girl}
&	23.79 	&	24.80 	&	25.31 	&	25.18 	&	25.16 	&	24.27 	&	25.18 	&	\textbf{25.71} 	&	25.47 	&	26.86 	&	27.43 	&	27.20 	&	27.08 	&	26.18 	&	25.18 	&	\textbf{28.06}
\\
\hline
				\multirow{1}{*}{Fireman}
&	24.38 	&	24.88 	&	25.56 	&	25.31 	&	25.03 	&	24.54 	&	25.39 	&	\textbf{25.69} 	&	25.82 	&	26.55 	&	27.44 	&	27.16 	&	26.52 	&	26.29 	&	25.39 	&	\textbf{27.60}
\\
\hline
				\multirow{1}{*}{Mural}
&	23.15 	&	24.13 	&	25.66 	&	25.40 	&	25.23 	&	23.78 	&	25.03 	&	\textbf{26.05} 	&	25.00 	&	26.46 	&	27.92 	&	27.59 	&	27.33 	&	26.06 	&	25.03 	&	\textbf{28.39}
\\
\hline
				\multirow{1}{*}{\textbf{Average}}
&	23.36 	&	24.28 	&	25.11 	&	25.14 	&	24.93 	&	23.85 	&	25.03 	&	\textbf{25.81} 	&	25.05 	&	26.41 	&	27.31 	&	27.29 	&	26.93 	&	25.93 	&	25.03 	&	\textbf{28.14}
\\
\hline

				\multicolumn{1}{|c|}{}&\multicolumn{8}{|c||}{60\% pixels missing}&\multicolumn{8}{|c|}{50\% pixels missing}\\
				\hline
				\multirow{2}{*}{\textbf{{Images}}}&\multirow{2}{*}{\textbf{{SALSA}}}
				&\multirow{2}{*}{\textbf{{BPFA}}}&\multirow{2}{*}{\textbf{{IPPO}}}&\multirow{2}{*}{\textbf{{JSM}}}&\multirow{2}{*}{\textbf{{Aloha}}}
				&\multirow{2}{*}{\textbf{{NGS}}}&{\textbf{{GSR-}}}&{\textbf{{GSR-}}}&\multirow{2}{*}{\textbf{{SALSA}}}&\multirow{2}{*}{\textbf{{BPFA}}}
                &\multirow{2}{*}{\textbf{{IPPO}}}&\multirow{2}{*}{\textbf{{JSM}}}&\multirow{2}{*}{\textbf{{Aloha}}}&\multirow{2}{*}{\textbf{{NGS}}}
                &{\textbf{{GSR-}}}&{\textbf{{GSR-}}}\\
				& &  & & & & &{\textbf{NNM}} &{\textbf{WNNM}} & & & & & & & {\textbf{NNM}}& {\textbf{WNNM}}  \\
				\hline
				\multirow{1}{*}{Cowboy}
&	26.99 	&	28.42 	&	29.58 	&	28.89 	&	28.92 	&	27.78 	&	28.73 	&	\textbf{30.03} 	&	28.59 	&	30.21 	&	31.30 	&	30.75 	&	30.46 	&	29.32 	&	28.73 	&	\textbf{31.96}
\\
\hline
				\multirow{1}{*}{Light}
&	20.49 	&	23.65 	&	25.13 	&	24.83 	&	24.47 	&	22.78 	&	24.62 	&	\textbf{25.43} 	&	21.47 	&	25.71 	&	26.70 	&	26.48 	&	25.84 	&	24.62 	&	24.62 	&	\textbf{27.28}
\\
\hline
				\multirow{1}{*}{Mickey}
&	27.41 	&	27.83 	&	30.76 	&	29.85 	&	28.59 	&	28.09 	&	29.52 	&	\textbf{31.23} 	&	28.98 	&	29.43 	&	32.74 	&	31.96 	&	30.33 	&	29.75 	&	29.52 	&	\textbf{33.67}
\\
\hline
				\multirow{1}{*}{Butterfly}
&	26.79 	&	28.88 	&	29.85 	&	29.83 	&	29.16 	&	28.37 	&	29.80 	&	\textbf{31.27} 	&	28.52 	&	30.98 	&	31.69 	&	31.47 	&	30.78 	&	30.28 	&	29.80 	&	\textbf{33.00}
\\
\hline
				\multirow{1}{*}{Haight}
&	21.52 	&	23.33 	&	25.34 	&	24.70 	&	23.58 	&	22.81 	&	24.35 	&	\textbf{25.87} 	&	23.06 	&	25.40 	&	27.53 	&	26.67 	&	25.16 	&	24.50 	&	24.35 	&	\textbf{28.36}
\\
\hline
				\multirow{1}{*}{Lake}
&	28.14 	&	29.75 	&	29.30 	&	29.49 	&	29.24 	&	28.68 	&	29.25 	&	\textbf{30.33} 	&	29.69 	&	31.78 	&	30.98 	&	31.18 	&	31.17 	&	30.22 	&	29.25 	&	\textbf{32.26}
\\
\hline
				\multirow{1}{*}{Leaves}
&	26.29 	&	29.83 	&	30.88 	&	31.47 	&	31.41 	&	28.87 	&	30.55 	&	\textbf{32.89} 	&	28.11 	&	32.79 	&	33.32 	&	33.78 	&	34.01 	&	31.23 	&	30.55 	&	\textbf{35.41}
\\
\hline
				\multirow{1}{*}{Starfish}
&	29.09 	&	30.98 	&	31.09 	&	31.40 	&	30.19 	&	30.26 	&	30.78 	&	\textbf{32.28} 	&	30.90 	&	33.13 	&	33.10 	&	33.24 	&	31.85 	&	32.10 	&	30.78 	&	\textbf{34.27}
\\
\hline
				\multirow{1}{*}{Flower}
&	29.36 	&	30.61 	&	30.81 	&	30.52 	&	30.72 	&	29.84 	&	30.60 	&	\textbf{31.96} 	&	30.92 	&	32.55 	&	32.49 	&	32.04 	&	32.40 	&	31.40 	&	30.60 	&	\textbf{33.77}
\\
\hline
				\multirow{1}{*}{Nanna}
&	26.94 	&	28.63 	&	29.41 	&	29.09 	&	29.51 	&	28.06 	&	29.02 	&	\textbf{30.27} 	&	28.53 	&	30.68 	&	31.17 	&	30.75 	&	31.24 	&	29.71 	&	29.02 	&	\textbf{32.30}
\\
\hline
				\multirow{1}{*}{Corn}
&	27.75 	&	30.07 	&	29.75 	&	29.45 	&	29.83 	&	28.55 	&	29.33 	&	\textbf{31.52} 	&	29.39 	&	32.10 	&	31.76 	&	31.33 	&	31.89 	&	30.31 	&	29.33 	&	\textbf{33.69}
\\
\hline
				\multirow{1}{*}{Girl}
&	27.02 	&	28.75 	&	29.32 	&	29.01 	&	28.91 	&	27.83 	&	28.71 	&	\textbf{30.01} 	&	28.60 	&	30.58 	&	31.05 	&	30.68 	&	30.59 	&	29.60 	&	28.71 	&	\textbf{32.11}
\\
\hline
				\multirow{1}{*}{Fireman}
&	27.15 	&	28.23 	&	29.13 	&	28.79 	&	28.24 	&	27.67 	&	28.53 	&	\textbf{29.69} 	&	28.54 	&	30.12 	&	30.82 	&	30.37 	&	29.88 	&	29.22 	&	28.53 	&	\textbf{31.15}
\\
\hline
				\multirow{1}{*}{Mural}
&	26.66 	&	28.30 	&	29.57 	&	29.24 	&	28.92 	&	27.99 	&	28.85 	&	\textbf{30.09} 	&	28.20 	&	30.46 	&	31.11 	&	30.89 	&	30.28 	&	29.88 	&	28.85 	&	\textbf{31.79}
\\
\hline
				\multirow{1}{*}{\textbf{Average}}
&	26.54 	&	28.38 	&	29.28 	&	29.04 	&	28.69 	&	27.68 	&	28.76 	&	\textbf{30.20} 	&	28.11 	&	30.42 	&	31.13 	&	30.83 	&	30.42 	&	29.44 	&	28.76 	&	\textbf{32.22}
\\
\hline
			\end{tabular}}
			\label{Tab:3}
\vspace{-3mm}
		\end{table*}

	\begin{table*}[!htbp]
\vspace{-5mm}
			\caption{SSIM comparison of SALSA \cite{54}, BPFA \cite{44}, IPPO \cite{55}, JSM \cite{56}, Aloha \cite{57}, NGS \cite{58}, GSR-NNM and GSR-WNNM for image inpainting.}
		\centering
		\resizebox{1.00\textwidth}{!}
		{
			\begin{tabular}{|c|c|c|c|c|c|c|c|c||c|c|c|c|c|c|c|c|c|c|c|c|c|c|c|c|}
				\hline
				\multicolumn{1}{|c|}{}&\multicolumn{8}{|c||}{80\% pixels missing}&\multicolumn{8}{|c|}{70\% pixels missing}\\
				\hline
				\multirow{2}{*}{\textbf{{Images}}}&\multirow{2}{*}{\textbf{{SALSA}}}
				&\multirow{2}{*}{\textbf{{BPFA}}}&\multirow{2}{*}{\textbf{{IPPO}}}&\multirow{2}{*}{\textbf{{JSM}}}&\multirow{2}{*}{\textbf{{Aloha}}}
				&\multirow{2}{*}{\textbf{{NGS}}}&{\textbf{{GSR-}}}&{\textbf{{GSR-}}}&\multirow{2}{*}{\textbf{{SALSA}}}&\multirow{2}{*}{\textbf{{BPFA}}}
                &\multirow{2}{*}{\textbf{{IPPO}}}&\multirow{2}{*}{\textbf{{JSM}}}&\multirow{2}{*}{\textbf{{Aloha}}}&\multirow{2}{*}{\textbf{{NGS}}}
                &{\textbf{{GSR-}}}&{\textbf{{GSR-}}}\\
				& &  & & & & &{\textbf{NNM}} &{\textbf{WNNM}} & & & & & & & {\textbf{NNM}}& {\textbf{WNNM}}  \\
				\hline
				\multirow{1}{*}{Cowboy}
&	0.7875 	&	0.8459 	&	0.8642 	&	0.8615 	&	0.8580 	&	0.8392 	&	0.8589 	&	\textbf{0.8790} 	&	0.8742 	&	0.8950 	&	0.9118 	&	0.9075 	&	0.9056 	&	0.8903 	&	0.8589 	&	\textbf{0.9228}
\\
\hline
				\multirow{1}{*}{Light}
&	0.5617 	&	0.6281 	&	0.7827 	&	0.7254 	&	0.7734 	&	0.6041 	&	0.7174 	&	\textbf{0.8236} 	&	0.6665 	&	0.7858 	&	0.8612 	&	0.8528 	&	0.8496 	&	0.7538 	&	0.7174 	&	\textbf{0.8860}
\\
\hline
				\multirow{1}{*}{Mickey}
&	0.8145 	&	0.8117 	&	0.8678 	&	0.8598 	&	0.8300 	&	0.8230 	&	0.8541 	&	\textbf{0.8738} 	&	0.8621 	&	0.8661 	&	0.9151 	&	0.9064 	&	0.8797 	&	0.8791 	&	0.8541 	&	\textbf{0.9195}
\\
\hline
				\multirow{1}{*}{Butterfly}
&	0.8252 	&	0.8517 	&	0.8995 	&	0.9026 	&	0.8805 	&	0.8635 	&	0.9050 	&	\textbf{0.9184} 	&	0.8838 	&	0.9124 	&	0.9356 	&	0.9377 	&	0.9205 	&	0.9145 	&	0.9050 	&	\textbf{0.9473}
\\
\hline
				\multirow{1}{*}{Haight}
&	0.6961 	&	0.7307 	&	0.8251 	&	0.8320 	&	0.7955 	&	0.7351 	&	0.8077 	&	\textbf{0.8526} 	&	0.7773 	&	0.8269 	&	0.8878 	&	0.8831 	&	0.8557 	&	0.8303 	&	0.8077 	&	\textbf{0.9037}
\\
\hline
				\multirow{1}{*}{Lake}
&	0.8036 	&	0.8360 	&	0.8297 	&	0.8357 	&	0.8270 	&	0.8180 	&	0.8335 	&	\textbf{0.8538} 	&	0.8649 	&	0.8929 	&	0.8857 	&	0.8883 	&	0.8872 	&	0.8756 	&	0.8335 	&	\textbf{0.9062}
\\
\hline
				\multirow{1}{*}{Leaves}
&	0.7948 	&	0.8557 	&	0.9119 	&	0.9213 	&	0.9085 	&	0.8687 	&	0.9097 	&	\textbf{0.9319} 	&	0.8746 	&	0.9276 	&	0.9538 	&	0.9581 	&	0.9549 	&	0.9233 	&	0.9097 	&	\textbf{0.9641}
\\
\hline
				\multirow{1}{*}{Starfish}
&	0.8086 	&	0.8379 	&	0.8243 	&	0.8383 	&	0.8217 	&	0.8272 	&	0.8362 	&	\textbf{0.8435} 	&	0.8675 	&	0.8942 	&	0.8923 	&	0.8954 	&	0.8793 	&	0.8856 	&	0.8362 	&	\textbf{0.8997}
\\
\hline
				\multirow{1}{*}{Flower}
&	0.8024 	&	0.8193 	&	0.8477 	&	0.8204 	&	0.8322 	&	0.8140 	&	0.8390 	&	\textbf{0.8657} 	&	0.8526 	&	0.8746 	&	0.8974 	&	0.8801 	&	0.8871 	&	0.8647 	&	0.8390 	&	\textbf{0.9176}
\\
\hline
				\multirow{1}{*}{Nanna}
&	0.7819 	&	0.8011 	&	0.8311 	&	0.8214 	&	0.8314 	&	0.8015 	&	0.8212 	&	\textbf{0.8551} 	&	0.8369 	&	0.8681 	&	0.8876 	&	0.8827 	&	0.8907 	&	0.8624 	&	0.8212 	&	\textbf{0.9065}
\\
\hline
				\multirow{1}{*}{Corn}
&	0.7953 	&	0.8334 	&	0.8281 	&	0.8380 	&	0.8380 	&	0.8121 	&	0.8399 	&	\textbf{0.8790} 	&	0.8624 	&	0.8979 	&	0.9008 	&	0.8962 	&	0.9009 	&	0.8794 	&	0.8399 	&	\textbf{0.9141}
\\
\hline
				\multirow{1}{*}{Girl}
&	0.7535 	&	0.7846 	&	0.8140 	&	0.8031 	&	0.8055 	&	0.7757 	&	0.8016 	&	\textbf{0.8435} 	&	0.8250 	&	0.8588 	&	0.8823 	&	0.8739 	&	0.8728 	&	0.8483 	&	0.8016 	&	\textbf{0.9035}
\\
\hline
				\multirow{1}{*}{Fireman}
&	0.7254 	&	0.7430 	&	0.7875 	&	0.7667 	&	0.7663 	&	0.7483 	&	0.7711 	&	\textbf{0.8095} 	&	0.7965 	&	0.8200 	&	0.8554 	&	0.8393 	&	0.8355 	&	0.8196 	&	0.7711 	&	\textbf{0.8699}
\\
\hline
				\multirow{1}{*}{Mural}
&	0.7155 	&	0.7402 	&	0.7885 	&	0.7833 	&	0.7842 	&	0.7535 	&	0.7706 	&	\textbf{0.8132} 	&	0.7917 	&	0.8211 	&	0.8581 	&	0.8519 	&	0.8512 	&	0.8291 	&	0.7706 	&	\textbf{0.8731}
\\
\hline
				\multirow{1}{*}{\textbf{Average}}
&	0.7619 	&	0.7942 	&	0.8359 	&	0.8293 	&	0.8252 	&	0.7917 	&	0.8261 	&	\textbf{0.8602} 	&	0.8311 	&	0.8673 	&	0.8946 	&	0.8895 	&	0.8836 	&	0.8612 	&	0.8261 	&	\textbf{0.9096}
\\
\hline

				\multicolumn{1}{|c|}{}&\multicolumn{8}{|c||}{60\% pixels missing}&\multicolumn{8}{|c|}{50\% pixels missing}\\
				\hline
				\multirow{2}{*}{\textbf{{Images}}}&\multirow{2}{*}{\textbf{{SALSA}}}
				&\multirow{2}{*}{\textbf{{BPFA}}}&\multirow{2}{*}{\textbf{{IPPO}}}&\multirow{2}{*}{\textbf{{JSM}}}&\multirow{2}{*}{\textbf{{Aloha}}}
				&\multirow{2}{*}{\textbf{{NGS}}}&{\textbf{{GSR-}}}&{\textbf{{GSR-}}}&\multirow{2}{*}{\textbf{{SALSA}}}&\multirow{2}{*}{\textbf{{BPFA}}}
                &\multirow{2}{*}{\textbf{{IPPO}}}&\multirow{2}{*}{\textbf{{JSM}}}&\multirow{2}{*}{\textbf{{Aloha}}}&\multirow{2}{*}{\textbf{{NGS}}}
                &{\textbf{{GSR-}}}&{\textbf{{GSR-}}}\\
				& &  & & & & &{\textbf{NNM}} &{\textbf{WNNM}} & & & & & & & {\textbf{NNM}}& {\textbf{WNNM}}  \\
				\hline
				\multirow{1}{*}{Cowboy}
&	0.9064 	&	0.9281 	&	0.9438 	&	0.9368 	&	0.9362 	&	0.9246 	&	0.9345 	&	\textbf{0.9509} 	&	0.9344 	&	0.9505 	&	0.9611 	&	0.9577 	&	0.9547 	&	0.9464 	&	0.9345 	&	\textbf{0.9669}

\\
\hline
				\multirow{1}{*}{Light}
&	0.7529 	&	0.8664 	&	0.9057 	&	0.9010 	&	0.8910 	&	0.8452 	&	0.8926 	&	\textbf{0.9175} 	&	0.8069 	&	0.9164 	&	0.9350 	&	0.9322 	&	0.9212 	&	0.8983 	&	0.8926 	&	\textbf{0.9447}

\\
\hline
				\multirow{1}{*}{Mickey}
&	0.8977 	&	0.9033 	&	0.9425 	&	0.9327 	&	0.9127 	&	0.9119 	&	0.9303 	&	\textbf{0.9450} 	&	0.9243 	&	0.9312 	&	0.9606 	&	0.9537 	&	0.9371 	&	0.9386 	&	0.9303 	&	\textbf{0.9651}

\\
\hline
				\multirow{1}{*}{Butterfly}
&	0.9191 	&	0.9436 	&	0.9566 	&	0.9570 	&	0.9428 	&	0.9451 	&	0.9575 	&	\textbf{0.9630} 	&	0.9432 	&	0.9617 	&	0.9697 	&	0.9695 	&	0.9580 	&	0.9630 	&	0.9575 	&	\textbf{0.9765}

\\
\hline
				\multirow{1}{*}{Haight}
&	0.8392 	&	0.8844 	&	0.9287 	&	0.9195 	&	0.8968 	&	0.8842 	&	0.9141 	&	\textbf{0.9381} 	&	0.8880 	&	0.9226 	&	0.9540 	&	0.9459 	&	0.9244 	&	0.9190 	&	0.9141 	&	\textbf{0.9610}

\\
\hline
				\multirow{1}{*}{Lake}
&	0.9030 	&	0.9263 	&	0.9219 	&	0.9229 	&	0.9210 	&	0.9130 	&	0.9195 	&	\textbf{0.9394} 	&	0.9304 	&	0.9483 	&	0.9463 	&	0.9460 	&	0.9465 	&	0.9393 	&	0.9195 	&	\textbf{0.9589}

\\
\hline
				\multirow{1}{*}{Leaves}
&	0.9173 	&	0.9615 	&	0.9726 	&	0.9751 	&	0.9736 	&	0.9556 	&	0.9696 	&	\textbf{0.9780} 	&	0.9444 	&	0.9795 	&	0.9832 	&	0.9846 	&	0.9850 	&	0.9734 	&	0.9696 	&	\textbf{0.9868}

\\
\hline
				\multirow{1}{*}{Starfish}
&	0.9036 	&	0.9280 	&	0.9290 	&	0.9293 	&	0.9171 	&	0.9222 	&	0.9232 	&	\textbf{0.9315} 	&	0.9335 	&	0.9510 	&	0.9531 	&	0.9518 	&	0.9418 	&	0.9464 	&	0.9232 	&	\textbf{0.9535}

\\
\hline
				\multirow{1}{*}{Flower}
&	0.8898 	&	0.9144 	&	0.9295 	&	0.9177 	&	0.9230 	&	0.9053 	&	0.9211 	&	\textbf{0.9456} 	&	0.9187 	&	0.9439 	&	0.9521 	&	0.9441 	&	0.9461 	&	0.9349 	&	0.9211 	&	\textbf{0.9635}

\\
\hline
				\multirow{1}{*}{Nanna}
&	0.8823 	&	0.9132 	&	0.9245 	&	0.9184 	&	0.9273 	&	0.9046 	&	0.9163 	&	\textbf{0.9378} 	&	0.9173 	&	0.9432 	&	0.9485 	&	0.9440 	&	0.9497 	&	0.9338 	&	0.9163 	&	\textbf{0.9588}

\\
\hline
				\multirow{1}{*}{Corn}
&	0.9022 	&	0.9358 	&	0.9366 	&	0.9304 	&	0.9334 	&	0.9187 	&	0.9283 	&	\textbf{0.9556} 	&	0.9310 	&	0.9572 	&	0.9586 	&	0.9542 	&	0.9554 	&	0.9450 	&	0.9283 	&	\textbf{0.9715}

\\
\hline
				\multirow{1}{*}{Girl}
&	0.8754 	&	0.9053 	&	0.9226 	&	0.9156 	&	0.9150 	&	0.8943 	&	0.9099 	&	\textbf{0.9382} 	&	0.9108 	&	0.9346 	&	0.9477 	&	0.9433 	&	0.9420 	&	0.9281 	&	0.9099 	&	\textbf{0.9598}

\\
\hline
				\multirow{1}{*}{Fireman}
&	0.8507 	&	0.8717 	&	0.8976 	&	0.8876 	&	0.8834 	&	0.8690 	&	0.8840 	&	\textbf{0.9120} 	&	0.8891 	&	0.9109 	&	0.9287 	&	0.9206 	&	0.9184 	&	0.9082 	&	0.8840 	&	\textbf{0.9378}

\\
\hline
				\multirow{1}{*}{Mural}
&	0.8483 	&	0.8703 	&	0.8960 	&	0.8915 	&	0.8910 	&	0.8821 	&	0.8842 	&	\textbf{0.9075} 	&	0.8876 	&	0.9041 	&	0.9262 	&	0.9229 	&	0.9200 	&	0.9166 	&	0.8842 	&	\textbf{0.9353}

\\
\hline
				\multirow{1}{*}{\textbf{Average}}
&0.8777 	&	0.9109 	&	0.9291 	&	0.9240 	&	0.9189 	&	0.9054 	&	0.9204 	&	\textbf{0.9400} 	&	0.9114 	&	0.9396 	&	0.9518 	&	0.9479 	&	0.9429 	&	0.9351 	&	0.9204 	&	\textbf{0.9600}

\\
\hline
			\end{tabular}}
			\label{Tab:4}
\vspace{-3mm}
		\end{table*}

\subsection {Image Inpainting}

In this subsection, we show the experimental results of the proposed GSR-WNNM based image inpainting. We generate the mask with different partial random samples. We present the image inpainting results on four partial random samples, \ie, 80\%, 70\%, 60\% and 50\% pixels missing. The parameters are set as follows. The size of each patch $\sqrt{m} \times \sqrt{m}$ is set to be $7\times7$. The number of similar patches $k$ is set to 60. The search window for similar patches is set to $L$ = 20 and $\varepsilon = 10^{-16}$.  $\tau$ and $c$ are set to be (7.0e-5, 1.41), (5.8e-5, 1.10), (3.8e-5, 1.06) and(2.6e-5, 0.99) for 80\%, 70\%, 60\% and 50\% pixels missing, respectively.

\begin{figure}[!htbp]
\vspace{-3mm}
	\centerline{\includegraphics[width=9cm]{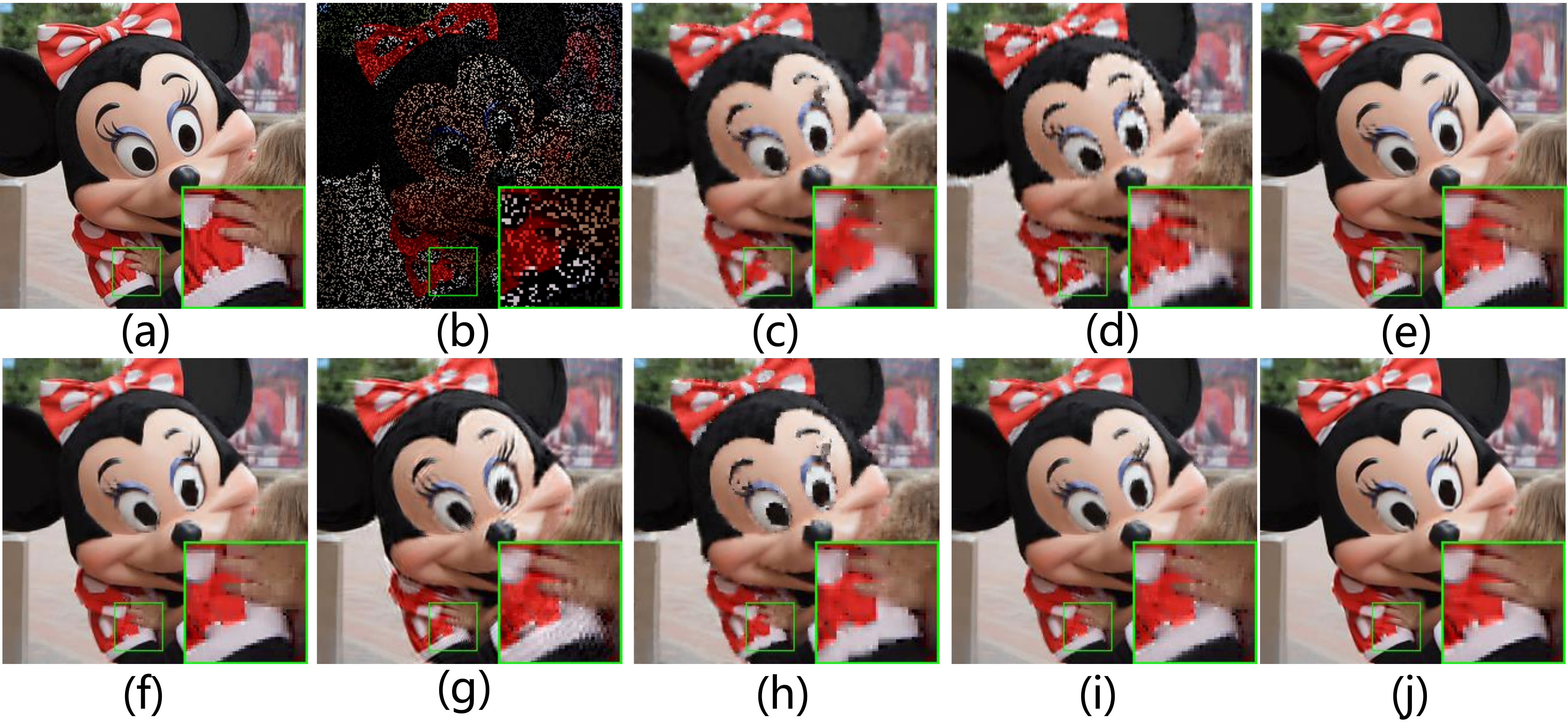}}
	\vspace{-3mm}
	\caption{Inpainting performance comparison on the image $\emph{Mickey}$. (a) Original image; (b) Degraded image with 80\% pixels missing sample; (c) SALSA \cite{54} (PSNR = 28.98dB, SSIM = 0.9243); (d) BPFA \cite{44} (PSNR = 29.43dB, SSIM = 0.9312); (e) IPPO \cite{55} (PSNR = 32.74dB, SSIM = 0.9606); (f) JSM \cite{56} (PSNR = 31.96dB, SSIM = 0.9537); (g) Aloha \cite{57} (PSNR = 30.33dB, SSIM = 0.9371);  (h) NGS \cite{58} (PSNR = 29.75dB, SSIM = 0.9386);   (i) GSR-NNM (PSNR = 29.52dB, SSIM = 0.9303); (j) GSR-WNNM (PSNR = \textbf{33.67dB}, SSIM = \textbf{0.9651}).}
	\label{fig:6}
	\vspace{-3mm}
\end{figure}

\begin{figure}[!htbp]
\vspace{-3mm}
	\centerline{\includegraphics[width=9cm]{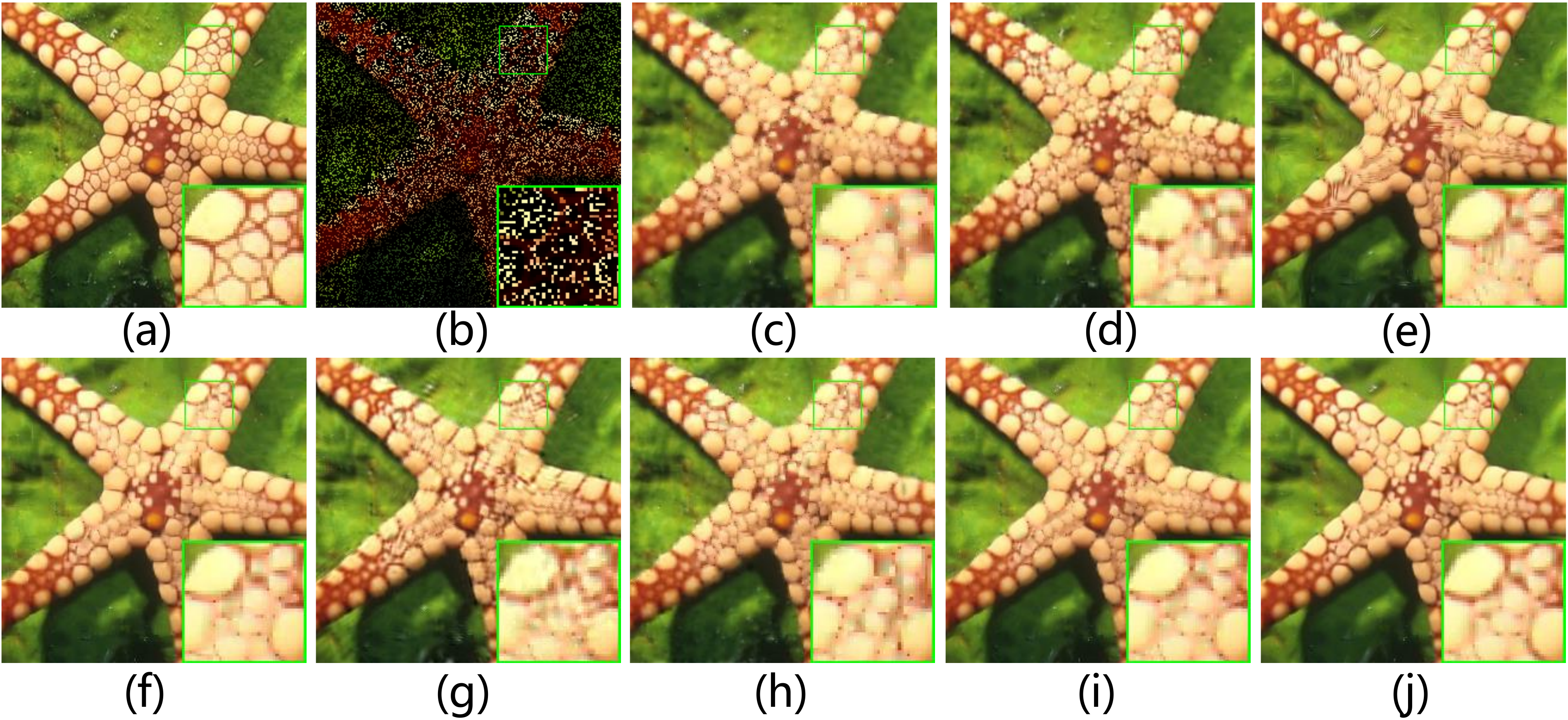}}
	\vspace{-3mm}
	\caption{Inpainting performance comparison on the image $\emph{Starfish}$. (a) Original image; (b) Degraded image with 80\% pixels missing sample; (c) SALSA \cite{54} (PSNR = 30.90dB, SSIM = 0.9335); (d) BPFA \cite{44} (PSNR = 33.13dB, SSIM = 0.9510); (e) IPPO \cite{55} (PSNR = 33.10dB, SSIM = 0.9531); (f) JSM \cite{56} (PSNR = 33.24dB, SSIM = 0.9518); (g) Aloha \cite{57} (PSNR = 31.85dB, SSIM = 0.9418);  (h) NGS \cite{58} (PSNR = 32.10dB, SSIM = 0.9464);   (i) GSR-NNM (PSNR = 30.78dB, SSIM = 0.9232); (j) GSR-WNNM (PSNR = \textbf{34.27dB}, SSIM = \textbf{0.9535}).}
	\label{fig:7}
	\vspace{-3mm}
\end{figure}

\begin{figure}[!htbp]
\vspace{-3mm}
	\centerline{\includegraphics[width=9cm]{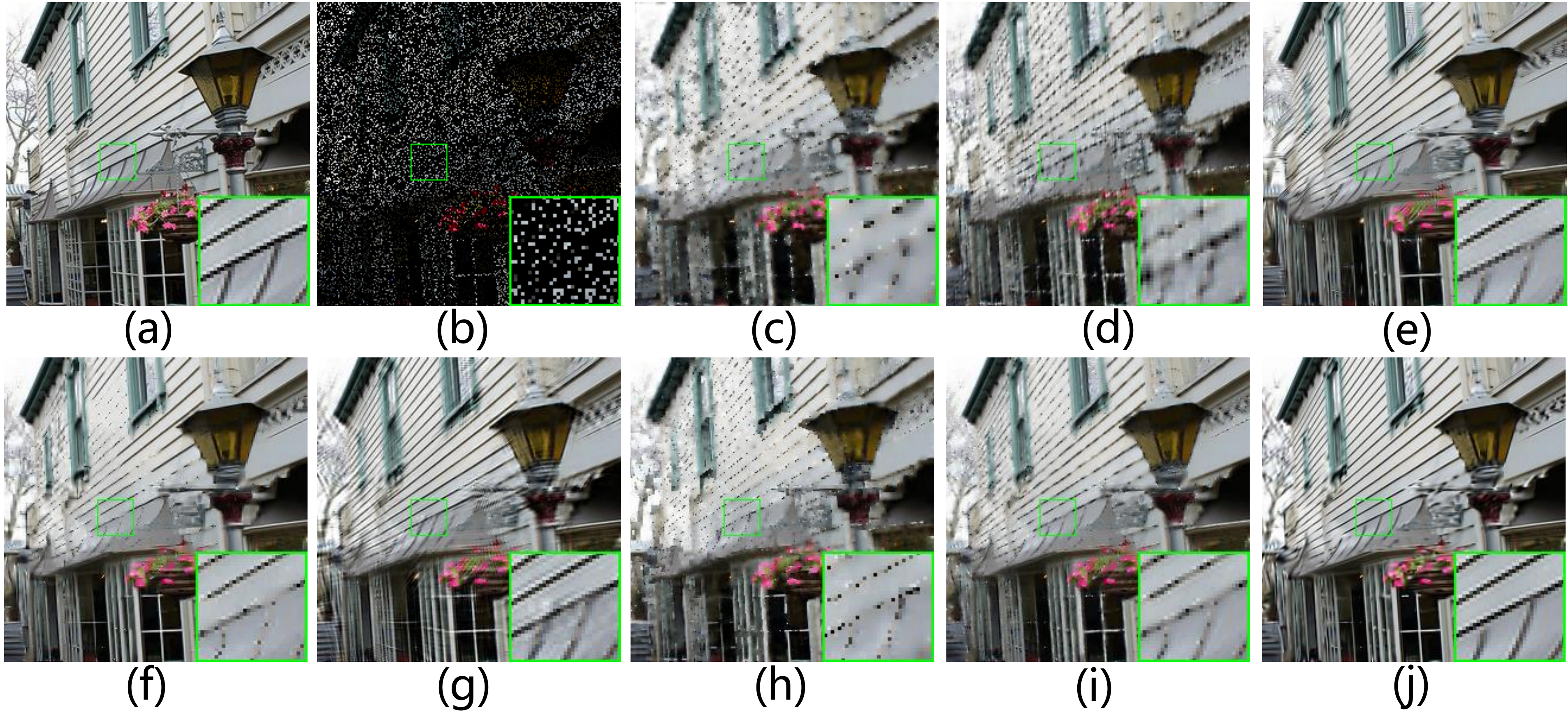}}
	\vspace{-3mm}
	\caption{Inpainting performance comparison on the image $\emph{Light}$. (a) Original image; (b) Degraded image with 80\% pixels missing sample; (c) SALSA \cite{54} (PSNR = 21.47dB, SSIM = 0.8069); (d) BPFA \cite{44} (PSNR = 25.71dB, SSIM = 0.9164); (e) IPPO \cite{55} (PSNR = 26.70dB, SSIM = 0.9350); (f) JSM \cite{56} (PSNR = 26.48dB, SSIM = 0.9322); (g) Aloha \cite{57} (PSNR = 25.84dB, SSIM = 0.9212);  (h) NGS \cite{58} (PSNR = 24.62dB, SSIM = 0.8983);   (i) GSR-NNM (PSNR = 24.62dB, SSIM = 0.8926); (j) GSR-WNNM (PSNR = \textbf{27.28dB}, SSIM = \textbf{0.9447}).}
	\label{fig:8}
	\vspace{-3mm}
\end{figure}

We compare the proposed GSR-WNNM with seven other competing methods, including SALSA \cite{54}, BPFA \cite{44}, IPPO \cite{55}, JSM \cite{56}, Aloha \cite{57}, NGS \cite{58} and GSR-NNM methods. We evaluate these competing methods on a collection of 14 color test images, whose scenes are illustrated in  Fig.~\ref{fig:2}. The PSNR and SSIM results of these competing methods are shown in Table~\ref{Tab:3} and Table~\ref{Tab:4}, respectively. It can be seen that the proposed GSR-WNNM can consistently outperforms other competing methods. In terms of PSNR, the proposed GSR-WNNM achieves 3.32dB, 1.72dB, 0.88dB, 1.02dB, 1.35dB, 2.37dB and 2.20dB improvements on average over SALSA, BPFA, IPPO, JSM, Aloha, NGS and GSR-NNM, respectively. The visual quality comparisons of image $\emph{Mickey}$, $\emph{Starfish}$ and $\emph{Light}$ with 80\% pixels missing are shown in in Fig.~\ref{fig:6}, Fig.~\ref{fig:7} and Fig.~\ref{fig:8}, respectively.  It can be seen that SALSA, NGS and GSR-NNM cannot reconstruct sharp edges and fine details. The BPFA, IPPO, JSM and Aloha methods produce a much better visual quality than SALSA, NGS and GSR-NNM, but they still suffer from some undesirable artifacts, such as the ringing effects. The proposed GSR-WNNM not only preserves sharper edges and finer details, but also eliminates the ringing effects. The better performance of GSR-WNNM is attributed to the singular values have clear physical meanings, for the weight of each group, large singular values of each group usually present major edge and texture information, and vice versa. Therefore, we usually shrink large singular values less, while shrinking smaller ones more, which offers a powerful prior to characterize the sparsity property of natural image signals.

\section {Conclusion}
\label{sec:6}
This paper proposed a scheme to analyze WNNM and NNM from the perspective of the group spare representation (GSR). We designed an adaptive dictionary learning method to bridge the gap between the GSR and the rank minimization models. Based on this adaptive dictionary, we proved that NNM and WNNM are equivalent to the  $\ell_1$-norm minimization based on GSR and the weighted $\ell_1$-norm minimization based on GSR, respectively. Following this, we introduced a mathematical derivation to explain why WNNM is more feasible than NNM. Moreover, due to the heuristical set of the weight in WNNM model,  it sometimes popped out error in the operation of SVD, and thus we presented an adaptive weight setting scheme to avoid this error. We employed the proposed scheme on two low-level vision tasks, \ie, image denoising and image inpainting. Experimental results have demonstrated that WNNM is more feasible than NNM and the proposed scheme can outperform many current state-of-the-art methods both quantitatively and qualitatively.

{\footnotesize

}
\end{document}